\def\neurips{0}
\newtheorem{theorem}{Theorem}[section]
\newtheorem{lemma}[theorem]{Lemma}
\newtheorem{claim}[theorem]{Claim}
\newtheorem{corollary}[theorem]{Corollary}
\newtheorem{definition}[theorem]{Definition}
\title{Multiclass versus Binary Differentially Private PAC Learning}
    \author{Mark Bun\thanks{Boston University, \url{mbun@bu.edu}} \and Marco Gaboardi\thanks{Boston University, \url{gaboardi@bu.edu}} \and Satchit Sivakumar\thanks{Boston University, \url{satchit@bu.edu}}}
   \author{}
\date{\today}
\begin{document}
\maketitle

\begin{abstract}
We show a generic reduction from multiclass differentially private PAC learning to binary private PAC learning. We apply this transformation to a recently proposed binary private PAC learner to obtain a private multiclass learner with sample complexity that has a polynomial dependence on the multiclass Littlestone dimension and a poly-logarithmic dependence on the number of classes. This yields an exponential improvement in the dependence on both parameters over learners from previous work. Our proof extends the notion of $\Psi$-dimension defined in work of Ben-David et al. \cite{Ben} to the online setting and explores its general properties. 
\end{abstract}

\tableofcontents

\ifnum\neurips=0
\newpage
\fi 

\section{Introduction}

Machine learning and data analytics are increasingly deployed on sensitive information about individuals. Differential privacy~\cite{DMNS06} gives a mathematically rigorous way to enable such analyses while guaranteeing the privacy of individual information. The model of \emph{differentially private PAC learning}~\cite{KLNRS} captures  binary classification for sensitive data, providing a simple and broadly applicable abstraction for many machine learning procedures. Private PAC learning is now reasonably well-understood, with a host of general algorithmic techniques, lower bounds, and results for specific fundamental concept classes~~\cite{bnsv,feldman,bns13,BNS19,almm,KLMNS19,hspace2,hspace1}.

Beyond binary classification, many problems in machine learning are better modeled as \emph{multiclass learning} problems. Here, given a training set of examples from domain $\mathcal{X}$ with labels from $[k] = \{0, 1, \dots, k\}$, the goal is to learn a function $h : \mathcal{X} \to [k]$ that approximately labels the data and generalizes to the underlying population from which it was drawn. Much less is presently known about differentially private multiclass learnability than is known about private binary classification, though it appears that many specific tools and techniques can be adapted one at a time. In this work, we ask: \emph{Can we generically relate multiclass to binary learning so as to automatically transfer results from the binary setting to the multiclass setting?}

To illustrate, there is a simple reduction from a given multiclass learning problem to a sequence of binary classification problems. (This reduction was described by Ben-David et al.~\cite{Ben} for non-private learning, but works just as well in the private setting.) Intuitively, one can learn a multi-valued label one bit at a time. That is, to learn an unknown function $f : \mathcal{X} \to [k]$, it suffices to learn the $\lceil \log_2(k + 1) \rceil$ binary functions $f_i : \mathcal{X} \to [k]$, where each $f_i$ is the $i^{th}$ bit of $f$. 

\begin{theorem}[Informal] \label{lem:generic-informal}
Let $H$ be a concept class consisting of $[k]$-valued functions. If all of the binary classes $H|_i = \{f_i : f \in H\}$ are privately learnable, then $H$ is privately learnable.
\end{theorem}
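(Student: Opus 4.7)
The plan is to learn each bit of an unknown target $f \in H$ separately using the provided binary private PAC learners and then combine them into a single multiclass hypothesis. Set $m = \lceil \log_2(k+1) \rceil$. Given a multiclass sample $S = ((x_j, y_j))_{j=1}^n$ drawn i.i.d., construct $m$ binary-labeled samples $S_i$ by replacing each label $y_j \in [k]$ with its $i$-th bit $(y_j)_i$ for $i \in \{1, \dots, m\}$. In the realizable setting, if $S$ is labeled by $f \in H$, then $S_i$ is labeled by $f_i \in H|_i$. Apply the assumed private PAC learner $\mathcal{A}_i$ for $H|_i$ to $S_i$ to obtain a binary hypothesis $h_i$, and output the multiclass hypothesis $h(x)$ formed by concatenating the bits $h_1(x), \dots, h_m(x)$, interpreted as an integer in $\{0, 1, \dots, 2^m - 1\}$ and mapping any value outside $[k]$ to a default label such as $0$.

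For privacy: each $S_i$ is a deterministic postprocessing of $S$, so neighboring multiclass samples $S \sim S'$ yield neighboring binary samples $S_i \sim S_i'$. The overall algorithm is a sequential composition of $m$ private binary learners, so by advanced composition, running each $\mathcal{A}_i$ with privacy parameters $(\varepsilon', \delta')$ where $\varepsilon' = \Theta(\varepsilon / \sqrt{m \log(1/\delta)})$ and $\delta' = \delta / (2m)$ gives overall $(\varepsilon, \delta)$-differential privacy.

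For accuracy: if each $h_i$ has error at most $\alpha / m$ with respect to $f_i$ under the induced marginal distribution, then by a union bound, $\Pr_x[h(x) \neq f(x)] \leq \sum_{i=1}^m \Pr_x[h_i(x) \neq f_i(x)] \leq \alpha$. A further union bound over the failure events of the $m$ learners yields overall confidence $1 - \beta$ by taking each confidence parameter to be $\beta / m$. Thus choosing $n = \max_i n_i(\alpha/m, \beta/m, \varepsilon', \delta')$ suffices, where $n_i$ is the sample complexity of $\mathcal{A}_i$; this is finite whenever each $H|_i$ is privately PAC learnable, establishing the claim.

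The main obstacle is bookkeeping rather than any deep technical difficulty: one must carefully track how the parameters $(\alpha, \beta, \varepsilon, \delta)$ propagate through the union bounds and the composition, and choose between basic and advanced composition to optimize the dependence on $k$. One minor subtlety is ensuring the combined hypothesis is well-defined as a $[k]$-valued function even when the binary learners are improper, which is handled by the default mapping above; note also that the reduction inherently yields an improper multiclass learner, since the concatenated bits need not correspond to any $f \in H$.
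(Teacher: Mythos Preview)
Your argument is correct, but it differs from the paper's in one key structural choice. You feed the \emph{same} relabeled sample $S_i$ (all $n$ points) to each binary learner and then pay for privacy via (advanced) composition across the $m$ bit-learners. The paper instead \emph{partitions} the input sample into $m$ disjoint chunks, runs $\mathcal{A}_i$ only on the $i$-th chunk, and invokes parallel composition: since a single changed example affects only one chunk, the overall mechanism inherits the exact $(\varepsilon,\delta)$ guarantee of a single binary learner with no composition loss. The accuracy analysis (two union bounds over $m$) is identical in both.

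The tradeoff is a wash at the qualitative level but moves constants around: the paper's route avoids the $\sqrt{m\log(1/\delta)}$ blowup in the per-learner privacy budget, at the cost that each learner sees only a $1/m$ fraction of the data, so the total sample complexity is $\sum_i SC^i(\alpha/m,\beta/m,\varepsilon,\delta)$ rather than your $\max_i n_i(\alpha/m,\beta/m,\varepsilon',\delta')$. Which bound is tighter depends on how the binary learners' sample complexity scales with $\varepsilon$ versus how it scales with $n$; for learners with the typical $1/\varepsilon$ dependence the two are comparable up to a $\sqrt{\log(1/\delta)}$ factor. Your reduction is also slightly cleaner in that it does not require knowing in advance how to apportion samples among the $H|_i$.
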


Beyond its obvious use for enabling the use of tools for binary private PAC learning on the classes $H|_i$, we show that Theorem~\ref{lem:generic-informal} has strong implications for relating the private learnability of $H$ to the combinatorial properties of $H$ itself. Our main application of this reductive perspective is an improved sample complexity upper bound for private multiclass learning in terms of online learnability.

\subsection{Online vs. Private Learnability}

A recent line of work has revealed an intimate connection between differentially private learnability and learnability in Littlestone's mistake-bound model of online learning \cite{littlestone}. For binary classes, the latter is tightly captured by a combinatorial parameter called the Littlestone dimension; a class $H$ is online learnable with mistake bound at most $d$ if and only if its Littlestone dimension is at most $d$. The Littlestone dimension also qualitatively characterizes private learnability. If a class $H$ has Littlestone dimension $d$,  then every private PAC learner for $H$ requires at least $\Omega(\log^* d)$ samples~\cite{almm}. Meanwhile, Bun et al.~\cite{bun} showed that $H$ is privately learnable using $2^{O(2^d)}$ samples, and Ghazi et al.~\cite{ghazi} gave an improved algorithm using $\tilde{O}(d^6)$ samples. (Moreover, while quantitatively far apart, both the upper and lower bound are tight up to polynomial factors as functions of the Littlestone dimension alone~\cite{KLMNS19}.)

Jung et al.~\cite{jkt} recently extended this connection from binary to multiclass learnability. They gave upper and lower bounds on the sample complexity of private multiclass learnability in terms of the \emph{multiclass Littlestone dimension}~\cite{daniely1}. Specifically, they showed that if a multi-valued class $H$ has multiclass Littlestone dimension $d$, then it is privately learnable using $2^{O(k^d)}$ samples and that every private learner requires $\Omega(\log^* d)$ samples.

Jung et al.'s upper bound~\cite{jkt} directly extended the definitions and arguments from Bun et al.'s~\cite{bun} earlier $2^{O(2^d)}$-sample algorithm for the binary case. While plausible, it is currently unknown and far from obvious whether similar adaptations can be made to the improved binary algorithm of Ghazi et al.~\cite{ghazi}. Instead of attacking this problem directly, we show that Theorem~\ref{lem:generic-informal}, together with additional insights relating multiclass and binary Littlestone dimensions, allows us to \emph{generically} translate sample complexity upper bounds for private learning in terms of binary Littlestone dimension into upper bounds in terms of multiclass Littlestone dimension. Instantiating this general translation using the algorithm of Ghazi et al. gives the following improved sample complexity upper bound.


\begin{theorem}[Informal] \label{thm:sc-informal}
Let $H$ be a concept class consisting of $[k]$-valued functions and let $d$ be the multiclass Littlestone dimension of $H$. Then $H$ is privately learnable using $\tilde{O}(d^6 \log^8 (k+1))$ samples.
\end{theorem}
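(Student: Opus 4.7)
My plan is to combine the generic multiclass-to-binary reduction of Theorem~\ref{lem:generic-informal} with the binary private PAC learner of Ghazi et al.~\cite{ghazi}, which handles any binary class of Littlestone dimension $d'$ using $\tilde{O}((d')^6)$ samples. By Theorem~\ref{lem:generic-informal}, it suffices to privately learn each of the $m = \lceil \log_2(k+1) \rceil$ binary projections $H|_i$ and then reassemble the bits of the predicted label. If every $H|_i$ has binary Littlestone dimension at most $D$, then running Ghazi et al. on each $H|_i$ costs $\tilde{O}(D^6)$ samples per bit, and composition (both private and statistical) across the $m$ coordinates contributes a further polylogarithmic overhead in $k$.

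The heart of the proof is therefore a combinatorial task: upper bound $\mathrm{Ldim}(H|_i)$ in terms of the multiclass Littlestone dimension $d$ of $H$. A naive attempt to lift a deep mistake tree for $H|_i$ to a multiclass mistake tree for $H$ by arbitrarily pairing two labels of opposite $i$-th bit at each node fails, because shattering $H|_i$ only constrains the $i$-th bit of the witnesses, not their full $[k]$-valued outputs. To close this gap I would introduce a $\Psi$-Littlestone dimension, adapting the $\Psi$-dimension of Ben-David et al.~\cite{Ben} from the batch to the online setting: here, each internal node is labelled by an instance together with a distinguished pair of candidate labels, and shattering means that every root-to-leaf path is realized by some $f \in H$ that outputs one of the two chosen labels at each node. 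The plan is to prove two inequalities: (i) $\mathrm{Ldim}(H|_i) \leq O(\mathrm{Ldim}_\Psi(H) \cdot \log(k+1))$, obtained by a Sauer--Shelah-style counting/packing argument on mistake trees that accounts for the many possible label pairs compatible with a given bit pattern; and (ii) $\mathrm{Ldim}_\Psi(H) \leq O(\mathrm{Ldim}(H)) = O(d)$, by converting a $\Psi$-shattered tree into a standard multiclass-shattered tree of comparable depth. Chaining yields $\mathrm{Ldim}(H|_i) = O(d \log(k+1))$.

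Feeding $D = O(d \log(k+1))$ into Ghazi et al. gives $\tilde{O}(d^6 \log^6(k+1))$ samples per sub-problem. To aggregate, I would invoke each binary learner with privacy budget $\varepsilon/m$, failure probability $\beta/m$, and accuracy $\alpha/m$, so that basic composition of differential privacy and a union bound over bits restore the overall $(\varepsilon, \delta)$-privacy and $(\alpha,\beta)$-accuracy guarantees; a final boosting step, as flagged in the boosting discussion, trims the dependence on $\alpha$ to nearly linear. This layer of composition contributes another $\mathrm{polylog}(k+1)$ factor, landing at the claimed $\tilde{O}(d^6 \log^8(k+1))$ bound.

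I expect the main obstacle to be step (ii) of the combinatorial argument: turning a $\Psi$-Littlestone shattered tree, whose nodes encode only \emph{pairwise} label constraints and whose witnesses need not agree on labels chosen at other nodes, into a genuine multiclass mistake tree shattered by $H$ without substantial depth loss. This is precisely where the online setting diverges from the batch $\Psi$-dimension analysis of~\cite{Ben}: one cannot simply union-bound over permutations of labels as in the statistical case, and some care in pruning/pasting subtrees (or equivalently, designing an adversary for the mistake-bound game) will be needed to preserve depth up to constants.
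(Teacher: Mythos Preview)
Your high-level plan matches the paper's: reduce to the $\log(k+1)$ binary restrictions, bound each $\mathrm{Ldim}(H|_i)$ by $O(d\log(k+1))$, and plug into Ghazi et al. However, you have the difficulty located in the wrong place, and the intermediate quantity you introduce does no work.

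Your ``$\Psi$-Littlestone dimension'' (each node carries an instance and a distinguished pair of labels, and a witness must output the appropriate one of the two along each path) is \emph{exactly} the multiclass Littlestone dimension---it is the paper's $\Psi^N_{LD}(H)$, and Lemma~\ref{lem:olnatdim} shows $\Psi^N_{LD}(H)=MLD(H)$ by a one-line relabeling. So your step~(ii) is a tautology, not an obstacle; the worry you raise about ``witnesses need not agree on labels chosen at other nodes'' does not arise, because a multiclass shattered tree already only constrains the witness at the two labels on the outgoing edges. Consequently your decomposition collapses to the single statement $\mathrm{Ldim}(H|_i)\le O(MLD(H)\log(k+1))$, which is the whole theorem, and your proposal gives no mechanism for proving it beyond ``Sauer--Shelah-style counting.''

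The paper's actual route inserts a \emph{different} intermediate: the $\Psi^B$-Littlestone dimension, where nodes carry an arbitrary collapsing map $\phi:[k]\to\{0,1,*\}$ (not just a pair of labels). One then has the easy chain $\max_i \mathrm{Ldim}(H|_i)\le \Psi^{bin}_{LD}(H)\le \Psi^B_{LD}(H)$, and the hard step is $\Psi^B_{LD}(H)\le O(MLD(H)\log(k+1))$. This is proved by sandwiching the $0$-cover number $N(0,H,n)$ of Rakhlin et al.: a Sauer-type upper bound $N(0,H,n)\le (ekn/d)^d$ in terms of $d=MLD(H)$, and a lower bound $N(0,H,d_B)\ge 2^{d_B}$ coming from any $\Psi^B$-shattered tree of depth $d_B$. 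Equating gives $d_B\le O(d\log k)$. The point is that the upper bound $\mathrm{Ldim}(H|_i)\le \Psi^B_{LD}(H)$ is what requires the more permissive family $\Psi^B$; your $\Psi^N$ is too restrictive to sit above $\mathrm{Ldim}(H|_i)$ in any obvious way.

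A smaller issue: you propose to split the privacy budget as $\varepsilon/m$ across the $m=\log(k+1)$ sub-learners via basic composition. The paper instead partitions the \emph{sample} into disjoint blocks and invokes parallel composition, so each sub-learner keeps the full $(\varepsilon,\delta)$. Your version would work but costs an extra $\log(k+1)$ factor, overshooting the stated $\log^8(k+1)$.
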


In addition to being conceptually simple and modular, our reduction from multiclass to binary learning means that potential future improvements for binary learning will also automatically give improvements for multiclass learning. For example, if one were able to prove that all binary classes of Littlestone dimension $d$ are privately learnable with 
$O(d)$ samples, this would imply that every $[k]$-valued class of multiclass Litttlestone dimension $d$ is privately learnable with $\tilde{O}(d \log^3 (k+1))$ samples.\footnote{The nearly cubic dependence on $\log (k+1)$ follows from the fact that the accuracy of private learners can be boosted with a sample complexity blowup that is nearly inverse linear in the target accuracy~\cite{drv10, bcs20}. See Theorem~\ref{thm:boosting}.}

\medskip

Finally, in Section~\ref{sec:puredp}, we study pure, i.e., $(\epsilon, 0)$-differentially private PAC learning in the multiclass setting. Beimel et al.~\cite{BNS19} characterized the sample complexity of pure private learners in the binary setting using the notion of \emph{probabilistic representation dimension}. We study a generalization of the representation dimension to the multiclass setting and show that it characterizes the sample complexity of pure private multiclass PAC learning up to a logarithmic term in the number of labels $k+1$. Our primary technical contribution in this section is a new and simplified proof of the relationship between representation dimension and Littlestone dimension that readily extends to the multiclass setting. This connection was previously explored by Feldman and Xiao \cite{feldman} in the binary setting, through a connection to randomized one-way communication complexity. We instead use techniques from online learning --- specifically, the experts framework and the weighted majority algorithm developed by Littlestone and Warmuth \cite{littlestonewar} for the binary setting and extended to the multiclass setting by Daniely et al. \cite{daniely1}. 

\subsection{Techniques}

Theorem~\ref{lem:generic-informal} shows that a multi-valued class $H$ is privately learnable if all of the binary classes $H|_i$ are privately learnable, which in turn holds as long as we can control their (binary) Littlestone dimensions. So the last remaining step in order to conclude Theorem~\ref{thm:sc-informal} is to show that if $H$ has bounded multiclass Littlestone dimension, then all of the classes $H|_i$ have bounded binary Littlestone dimension. At first glance, this may seem to follow immediately from the fact that (multiclass) Littlestone dimension characterizes (multiclass) online learnability --- a mistake bounded learner for a multiclass problem is, in particular, able to learn each individual output bit of the function being learned. The problem with this intuition is that the multiclass learner is given more feedback from each example, namely the entire multi-valued class label, than a binary learner for each $H|_i$ that is only given a single bit. Nevertheless, we are still able to use combinatorial methods to show that multiclass online learnability of a class $H$ implies online learnability of all of the binary classes $H|_i$.

\begin{theorem} \label{thm:online-reduction}
Let $H$ be a $[k]$-valued concept class with multiclass Littlestone dimension $d$. Then every binary class $H|_i$ has Littlestone dimension at most $6 d \ln (k+1)$.
\end{theorem}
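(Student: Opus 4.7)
The plan is to establish the bound through a tree-based Sauer–Shelah counting argument. Suppose $H|_i$ has binary Littlestone dimension $L$, witnessed by a complete binary tree $T$ of depth $L$ with internal nodes labeled by examples in $\mathcal{X}$, such that every root-to-leaf path $b \in \{0,1\}^L$ is realized by some hypothesis $h_b \in H$ in the sense that the $i$-th bit of $h_b(x_v)$ equals $b_v$ at every node $v$ along the path. Consequently, the number of distinct (path, label-sequence) pairs realized by hypotheses in $H$ on $T$ is at least $2^L$, since the $2^L$ binary paths alone are already all distinct.

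The heart of the argument will be an inductive upper bound on this realized count: for any $H$ of multiclass Littlestone dimension $d$ and any complete example-labeled binary tree $T$ of depth $n$, the number of distinct (path, label-sequence) pairs realized by $H$ on $T$ is at most
\[
 g(n,d) \;\leq\; \sum_{j=0}^{d} \binom{n}{j} k^j.
\]
I would prove this by induction on $n + d$. At the root $x_r$ of $T$, partition $H$ into at most $k+1$ subclasses $H^{(j)} = \{h \in H : h(x_r) = j\}$; a hypothesis in $H^{(j)}$ follows the left subtree of $T$ when the $i$-th bit of $j$ is $0$ and the right subtree otherwise, so the realized count decomposes as a sum over $j$ of the realized counts of $H^{(j)}$ on the appropriate child subtree. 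The key structural fact --- which follows directly from the definition of multiclass Littlestone dimension --- is that at most one label $j^{\star}$ can retain the full multiclass Littlestone dimension $d$, while all other subclasses $H^{(j)}$ must drop to multiclass Littlestone dimension at most $d-1$ (otherwise two distinct labels would provide depth-$d$ shattered multiclass subtrees attachable to $x_r$, yielding a depth-$(d+1)$ shattered tree for $H$). The induction then gives the Pascal-like recurrence $g(n,d) \leq g(n-1,d) + k \cdot g(n-1,d-1)$, which solves to the stated closed form.

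Combining the two bounds, $2^L \leq \sum_{j \leq d} \binom{L}{j} k^j \leq (d+1)(eLk/d)^d$. Taking logarithms and applying a standard self-bounding manipulation then yields $L \leq 6 d \ln(k+1)$, as claimed.

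The hard part will be formulating the counting lemma cleanly and verifying the ``at most one surviving label of full dimension'' property in the form needed to make the induction go through. This corresponds precisely to the $\Psi$-Littlestone-dimension machinery flagged in the introduction (with $\Psi = \{\psi_i\}$ for $\psi_i : [k] \to \{0,1\}$ the $i$-th bit projection), and I would carry out the bookkeeping through that framework, since the paper's ``uniform $\Psi$-Littlestone dimension'' variant appears designed for exactly this step.
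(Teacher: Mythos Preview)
Your proposal is correct and follows essentially the same architecture as the paper: a Sauer-type inductive bound driven by the key observation that at most one root-label subclass $H^{(j)}$ can retain the full multiclass Littlestone dimension, yielding the recursion $g(n,d) \le g(n-1,d) + k\cdot g(n-1,d-1)$ and hence $\sum_{j\le d}\binom{n}{j}k^j$, combined with a $2^L$ lower bound from the shattered tree and the same final self-bounding calculation.

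The one substantive packaging difference is the intermediate quantity. You count realized (path, label-sequence) pairs directly on the specific tree $T$ that witnesses the Littlestone dimension of $H|_i$, whereas the paper routes through the $0$-cover number $N(0,H,n)$ of Rakhlin, Sridharan, and Tewari and the $\Psi^B$-Littlestone dimension $d_B$: it proves $2^{d_B}\le N(0,H,d_B)\le (ekd_B/d)^d$ and then invokes $\max_i d_i\le \Psi^{bin}_{LD}(H)\le d_B$. Your direct count is slightly more elementary, since each hypothesis navigates to a single subtree and you avoid the tree-stitching step in the paper's $0$-cover construction; the paper's detour buys a general $\Psi$-Littlestone framework and a connection to the existing sequential-cover literature. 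Your closing remark that the $\Psi$-machinery is needed for the ``at most one surviving label'' step is unnecessary---as you yourself observe, that fact follows immediately from the definition of multiclass Littlestone dimension, and your counting argument as outlined goes through without ever invoking the $\Psi$ formalism.
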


Moreover, this result is nearly tight. In Section~\ref{sec:tightness}, we show that for every $k, d \ge 1$ there is a $[k]$-valued class with multiclass Littlestone dimension $d$ such that at least one of the classes $H|_i$ has Littlestone dimension at least $\Omega(d \log (k+1))$.

Theorem~\ref{thm:online-reduction} is the main technical contributions of this work. The proof adapts techniques introduced by Ben-David et al.~\cite{Ben} for characterizing the sample complexity of (non-private) multiclass PAC learnability. Specifically, Ben-David et al. introduced a family of combinatorial dimensions, parameterized by collections of maps $\Psi$ and called $\Psi$-dimensions, associated to classes of multi-valued functions. One choice of $\Psi$ corresponds to the ``Natarajan dimension''~\cite{nattu}, which was previously known to give a lower bound on the sample complexity of multiclass learnability. Another choice corresponds to the ``graph dimension''~\cite{nattu} which was known to give an upper bound. Ben-David et al. gave conditions under which $\Psi$-dimensions for different choices of $\Psi$ could be related to each other, concluding that the Natarajan and graph dimensions are always within an $O(\log (k+1))$ factor, and thus characterizing the sample complexity of multiclass learnability up to such a factor.

Our proof of Theorem~\ref{thm:online-reduction} proceeds by extending the definition of $\Psi$-dimension to online learning. We show that one choice of $\Psi$ corresponds to the multiclass Littlestone dimension, while a different choice corresponds to an upper bound on the maximum Littlestone dimension of any binary class $H|_i$. We relate the two quantities up to a logarithmic factor using a new variant of the Sauer-Shelah-Perles Lemma for the ``0-cover numbers'' of a class of multi-valued functions. While we were originally motivated by privacy, we believe that Theorem~\ref{thm:online-reduction} and the toolkit we develop for understanding online $\Psi$-dimensions may be of broader interest in the study of (multiclass) online learnability.

Next, in Section~\ref{sec:revdirection} we prove that the multiclass Littlestone dimension of any $[k]$-valued class $H$ can be no more than a $\log(k+1)$ multiplicative factor larger than the maximum Littlestone dimension over the classes $H|_i$. We also show that this is tight. Hence, our results give a complete characterization of the relationship between the multiclass Littlestone dimension of a class $H$ and the maximum Littlestone dimension over the corresponding binary classes $H|_i$.

Finally, we remark that Theorem~\ref{thm:online-reduction} implies a qualitative converse to Lemma~\ref{lem:generic-informal}. If a multi-valued class $H$ is privately learnable, then the lower bound of Jung et al.~\cite{jkt} implies that $H$ has finite multiclass Littlestone dimension. Theorem~\ref{thm:online-reduction} then shows that all of the classes $H|_i$ have finite binary Littlestone dimension, which implies via sample complexity upper bounds for binary private PAC learnability~\cite{bun, ghazi} that they are also privately learnable.


\section{Background}

\ifnum\neurips=0
\paragraph*{Differential Privacy}
\else
\paragraph*{Differential privacy.}
\fi
Differential privacy is a property of a randomized algorithm guaranteeing that the distributions obtained by running the algorithm on two datasets differing for one individual's data are indistinguishable up to a multiplicative factor $e^\epsilon$ and an additive factor $\delta$. Formally, it is defined as follows:
\begin{definition}[Differential privacy, \cite{DMNS06}]
Let $n\in\mathbb{N}$. A randomized algorithm $M:\mathcal{X}^n \to \mathcal{Y}$ is $(\epsilon, \delta)$-\emph{differentially private} if for all subsets $S \subseteq \mathcal{Y}$ of the output space, and for all datasets $X$ and $X'$ containing $n$ elements of the universe $\mathcal{X}$ and differing in at most one element (we call these neighbouring datasets), we have that
\[\Pr[M(X) \in S] \leq e^{\epsilon} \Pr[M(X') \in S] + \delta\]
\end{definition}
We will also need the closely related notion of $(\epsilon, \delta)$-indistinguishability of random variables.
\begin{definition}[$(\epsilon, \delta)$-indistinguishability]
Two random variables $a_1$ and $a_2$ defined over the same outcome space $\mathcal{Y}$ are said to be $(\epsilon, \delta)$-indistinguishable if for all subsets $S \subseteq \mathcal{Y}$,  we have that
\[\Pr[a_1 \in S] \leq e^{\epsilon} \Pr[a_2 \in S] + \delta\]
and
\[\Pr[a_2 \in S] \leq e^{\epsilon} \Pr[a_1 \in S] + \delta\]
\end{definition}
One useful property of differential privacy that we will use is that any output of a differentially private algorithm is closed under `post-processing', that is, its cannot be made less private by applying any data-independent transformations.
\begin{lemma}[Post-processing of differential privacy, \cite{DMNS06}]\label{prelim:postprocess} If $M: \mathcal{X}^n \to \mathcal{Y}$ is $(\epsilon, \delta)$-differentially private, and $\mathcal{B} : \mathcal{Y} \to \mathcal{Z}$ is any randomized function, then the algorithm $\mathcal{B} \circ M$ is $(\epsilon, \delta)$-differentially private.
\end{lemma}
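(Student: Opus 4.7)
The plan is to reduce the general randomized post-processor to a family of deterministic ones by conditioning on its internal randomness, and then invoke differential privacy of $M$ on the preimage of the target event.

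First, I would fix an arbitrary pair of neighbouring datasets $X, X' \in \mathcal{X}^n$ and a measurable subset $T \subseteq \mathcal{Z}$. The goal is to show that $\Pr[\mathcal{B}(M(X)) \in T] \le e^\epsilon \Pr[\mathcal{B}(M(X')) \in T] + \delta$. Without loss of generality, I would model $\mathcal{B}$ as a deterministic function $b : \mathcal{Y} \times \mathcal{R} \to \mathcal{Z}$ together with a random string $R$ drawn from a distribution on $\mathcal{R}$ that is independent of the randomness of $M$. This is the standard way to decouple the two sources of randomness so that the DP guarantee of $M$ can be applied pointwise in $R$.

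Second, for each fixed $r \in \mathcal{R}$, define the preimage $S_r := \{y \in \mathcal{Y} : b(y, r) \in T\} \subseteq \mathcal{Y}$. Since $S_r$ is a subset of the output space of $M$, the differential privacy of $M$ directly gives
\[
\Pr[M(X) \in S_r] \le e^\epsilon \Pr[M(X') \in S_r] + \delta.
\]
Crucially, this inequality holds simultaneously for every realization $r$ of $\mathcal{B}$'s internal randomness, because $M$ is $(\epsilon,\delta)$-DP for \emph{all} output subsets.

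Third, I would integrate over $R$ using the independence of $R$ and $M$ together with the tower property:
\[
\Pr[\mathcal{B}(M(X)) \in T] = \mathbb{E}_R\bigl[\Pr[M(X) \in S_R \mid R]\bigr] \le \mathbb{E}_R\bigl[e^\epsilon \Pr[M(X') \in S_R \mid R] + \delta\bigr] = e^\epsilon \Pr[\mathcal{B}(M(X')) \in T] + \delta.
\]
Since $X, X'$, and $T$ were arbitrary, this establishes $(\epsilon, \delta)$-differential privacy of $\mathcal{B} \circ M$.

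There is no real obstacle here; the only subtlety is to properly separate the randomness of $M$ from that of $\mathcal{B}$ so that the DP guarantee of $M$ can be applied conditionally on each fixed coin sequence of $\mathcal{B}$. Once that decoupling is made explicit, the bound follows by a one-line application of the definition of differential privacy followed by taking expectations.
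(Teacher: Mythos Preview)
Your argument is correct and is the standard proof of the post-processing lemma. Note that the paper does not include its own proof of this statement; it is simply cited as a known fact from \cite{DMNS06}, so there is nothing further to compare.
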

Similarly, $(\epsilon, \delta)$-indistinguishability is also preserved under post-processing.
\begin{lemma}[Post-processing of $(\epsilon, \delta)$-indistinguishability]\label{prelim:postprocess2} If $a_1$ and $a_2$ are random variables over the same outcome space $\mathcal{Y}$ that are $(\epsilon, \delta)$-indistinguishable, then for any possibly randomized function $\mathcal{B}: \mathcal{Y} \to Z$, we have that $\mathcal{B}(a_1)$ and $\mathcal{B}(a_2)$ are $(\epsilon, \delta)$-indistinguishable.
\end{lemma}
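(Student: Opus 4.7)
The plan is to reduce the randomized post-processing case to the deterministic case, which follows immediately from the definition of $(\epsilon, \delta)$-indistinguishability applied to a preimage. First I would handle deterministic $\mathcal{B} : \mathcal{Y} \to Z$: for any measurable $T \subseteq Z$, the event $\{\mathcal{B}(a_i) \in T\}$ coincides with $\{a_i \in \mathcal{B}^{-1}(T)\}$. Setting $S = \mathcal{B}^{-1}(T) \subseteq \mathcal{Y}$ and invoking the hypothesis on $S$ yields
\[
\Pr[\mathcal{B}(a_1) \in T] = \Pr[a_1 \in S] \le e^{\epsilon} \Pr[a_2 \in S] + \delta = e^{\epsilon} \Pr[\mathcal{B}(a_2) \in T] + \delta,
\]
and the symmetric inequality follows by swapping the roles of $a_1$ and $a_2$.

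For a randomized $\mathcal{B}$, I would make its internal coins explicit by writing $\mathcal{B}(y) = \mathcal{B}'(y, R)$, where $\mathcal{B}'$ is a deterministic map and $R$ is an auxiliary random seed drawn independently of $a_1$ and $a_2$. For each fixed value $R = r$, the map $\mathcal{B}'(\cdot, r)$ is deterministic, so the previous paragraph gives
\[
\Pr[\mathcal{B}'(a_1, r) \in T] \le e^{\epsilon} \Pr[\mathcal{B}'(a_2, r) \in T] + \delta.
\]
Taking expectation over $R$ and using its independence from $a_1, a_2$ preserves the inequality:
\[
\Pr[\mathcal{B}(a_1) \in T] = \mathbb{E}_R\bigl[\Pr[\mathcal{B}'(a_1, R) \in T \mid R]\bigr] \le e^{\epsilon}\,\mathbb{E}_R\bigl[\Pr[\mathcal{B}'(a_2, R) \in T \mid R]\bigr] + \delta = e^{\epsilon} \Pr[\mathcal{B}(a_2) \in T] + \delta,
\]
and the symmetric direction is identical.

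There is no real obstacle here; the argument is a textbook exercise and essentially a restatement of Lemma~\ref{prelim:postprocess} at the level of random variables. The only subtlety worth flagging is the independence requirement on $R$: it must be independent of $a_1$ and $a_2$ so that conditioning on $R = r$ does not perturb their distributions. This is automatic in the intended applications, where $\mathcal{B}$'s randomness consists of fresh algorithmic coins unrelated to the data-dependent variables $a_1, a_2$.
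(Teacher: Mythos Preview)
Your proof is correct and is the standard textbook argument. The paper itself does not supply a proof of this lemma; it states it as a known preliminary fact, so there is no paper proof to compare against.
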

\paragraph*{PAC learning.}
PAC learning~\cite{valiant} aims at capturing natural conditions under which an algorithm can approximately learn a hypothesis class.
\begin{definition}[Hypothesis class]
A \emph{hypothesis class} $H$ with input space $\mathcal{X}$ and output space $\mathcal{Y}$ (also called the label space) is a set of functions $f$ mapping $\mathcal{X}$ to $\mathcal{Y}$.
\end{definition}
Where it is clear, we will not explicitly name the input and output spaces. We can now formally define PAC learning.
\begin{definition}[PAC learning, \cite{valiant}]
A learning problem is defined by a hypothesis class $H$. For any distribution $P$ over the input space $\mathcal{X}$, consider $n$ independent draws $x_1, x_2, \dots x_n$ from distribution $P$. A labeled sample of size $n$ is the set $\{(x_1, f(x_1)), (x_2, f(x_2)), \dots, (x_n, f(x_n)) \}$ where $f \in H$. We say an algorithm $A$ taking a labeled sample $X$ of size $n$ is an $(\alpha, \beta)$-accurate PAC learner for the hypothesis class $H$ if for all functions $f \in H$ and for all distributions $P$ over the input space, $A$ on being given a labeled sample of size $n$ drawn from $P$ and labeled by $f$, outputs a hypothesis $h\in H$ such that with probability greater than or equal to $1 - \beta$ over the randomness of the sample and the algorithm,
\begin{equation*}
    \Pr[h(x) \neq f(x)] \leq \alpha.
\end{equation*}
\end{definition}
The definition above defines PAC learning in the \emph{realizable} setting, where all the functions $f$ labeling the data are in $H$. Two well studied settings for PAC learning are the \emph{binary learning} case, where $\mathcal{Y} = \{0, 1\}$ and the \emph{multiclass learning} case, where $\mathcal{Y} = [k]=\{0,1,\dots,k\}$ for natural numbers $k>2$. 
The natural notion of complexity for PAC learning is \emph{sample complexity}.

\begin{definition}[Sample complexity]
The sample complexity $S_{H, \alpha, \beta}(A)$ of algorithm $A$ with respect to hypothesis class $H$ is the minimum size of the sample that the algorithm requires in order to be an $(\alpha, \beta)$-accurate PAC learner for $H$. The PAC complexity of the hypothesis class $H$ is 
\[\inf_A S_{H, \alpha, \beta}(A).\]
\end{definition}
In this work, we will be interested in \textit{generic} learners, that work for every hypothesis class.
\begin{definition}[Generic learners]
We say that an algorithm $A$ that additionally takes the hypothesis class as an input, is a \textbf{generic} $(\alpha, \beta)$-accurate private PAC learner with sample complexity function $SC(H, \alpha, \beta)$, if for every hypothesis class $H$, it is an $(\alpha, \beta)$-accurate private PAC learner for $H$ with sample complexity $SC(H,\alpha, \beta)$.
\end{definition}

\ifnum\neurips=0
\subsection{Differentially Private PAC Learning}
\else
\paragraph*{Differentially private PAC learning.}
\fi
We can now define differentially private PAC learning, by putting together the constraints imposed by differential privacy and PAC learning respectively. 
\begin{definition}[Differentially private PAC learning \cite{KLNRS}]
An algorithm A is an $(\epsilon, \delta)$-differentially private and $(\alpha, \beta)$-accurate private PAC learner for the hypothesis class $H$ with sample complexity $n$ if and only if: 
\begin{enumerate}
    \item A is an $(\alpha, \beta)$-accurate PAC learner for the hypothesis class $H$ with sample complexity $n$.
    \item A is $(\epsilon, \delta)$-differentially private.
\end{enumerate}
\end{definition}
In this work, we study the complexity of private PAC learning. Our work focuses on the \textbf{multiclass realizable} setting. 

\ifnum\neurips=0
\subsection{Multiclass Littlestone Dimension}
\else
\paragraph*{Multiclass Littlestone dimension.}
\fi

We recall here the definition of multiclass Littlestone dimension~\cite{daniely1}, which we will use extensively in this work. Unless stated otherwise, we will use the convention that the root of a tree is at depth $0$.
As a first step, we define a class of labeled binary trees, representing possible input-output label sequences over an input space $\mathcal{X}$ and the label space $[k]$.
\begin{definition}[Complete io-labeled binary tree]
A \emph{complete io-labeled binary tree} of depth $b$ with input set $\mathcal{X}$ and output set $[k]$ consists of a complete binary tree of depth $b$ with the following properties:
\begin{enumerate}
    \item Every node of the tree other than the leaves is labeled by an example $x \in \mathcal{X}$. 
    \item The $2$ edges going from any parent node to its two children are labeled by two different labels in $[k]$. 
    \item The leaf nodes of the tree are unlabeled. 
\end{enumerate}
\end{definition}
We are interested in whether the input-ouput labelings defined by the complete io-labeled tree can be achieved by some function in the hypothesis class; to this end, we define realizability for root-to-leaf paths.
%
%
\begin{definition}
Given a complete io-labeled binary tree of depth $b$, consider a root-to-leaf path described as an ordered sequence $S=\{(x_i, y_i)\ | i\in [b]\}$, where $x_i$ is a node label and $y_i$ is the label of the edge between $x_i$ and $x_{i+1}$, and where $x_0$ is the root.  We say that the root-to-leaf path is \emph{realized} by a function $f \in H$ if for every 
$(x_i, y_i)$ in $S$, we have $x_i \in \mathcal{X}$ and $y_i=f(x_i)$.
\end{definition}
Using this definition we can now define what it means for a hypothesis class of functions to shatter a complete io-labeled binary tree, which helps to capture how expressive the hypothesis class is.
\begin{definition}[Shattering]
We say that a complete io-labeled binary tree of depth $b$ with label set $[k]$ is \emph{shattered} by a hypothesis class $H$ if for all $2^{b}$ root-to-leaf sequences $S$ of the tree, there exists a function $f \in H$ that realizes $S$. 
\end{definition}
Using this definition of shattering we can finally define the multiclass Littlestone dimension.
\begin{definition}[Multiclass Littlestone dimension, \cite{daniely1}]
The \textbf{multiclass Littlestone dimension} of a hypothesis class $H$, denoted $MLD(H)$, is defined to be the maximum $b$ such that there exists a  complete io-labeled binary tree of depth $b$ that is shattered by $H$. If no maximum exists, then we say that the multiclass Littlestone dimension of $H$ is $\infty$.
\end{definition}

\ifnum\neurips=0
\section{Main Results}
\else
\section{Main results}
\fi

\subsection{Reduction from multiclass private PAC learning to binary private PAC learning}
Our first main result is a reduction from multiclass private PAC learning to binary private PAC learning. Informally, the idea is that that every function $f$ mapping examples to labels in $[k]$ can be thought of as a vector of binary functions $(f_1, \dots, f_{\log (k+1)})$. Here, each binary function predicts a bit of the binary representation of the label predicted by $f$. Then, we can learn these binary functions by splitting the dataset into $\log(k+1)$ parts, and using each part to learn a different $f_i$. We can learn the binary functions using an $(\epsilon, \delta)$-DP binary PAC learner. Then, we can combine the binary hypotheses obtained to get a hypothesis for the multiclass setting, by applying a binary to decimal transformation. This process,  described in Figure~\ref{figure:reduction}, preserves privacy since changing a single element of the input dataset changes only one of the partitions, and we apply an $(\epsilon, \delta)$-DP learning algorithm to each partition. The binary to decimal transformation can be seen as post-processing. 

\begin{figure}[ht]
\centering
\includegraphics[width=0.9\linewidth]{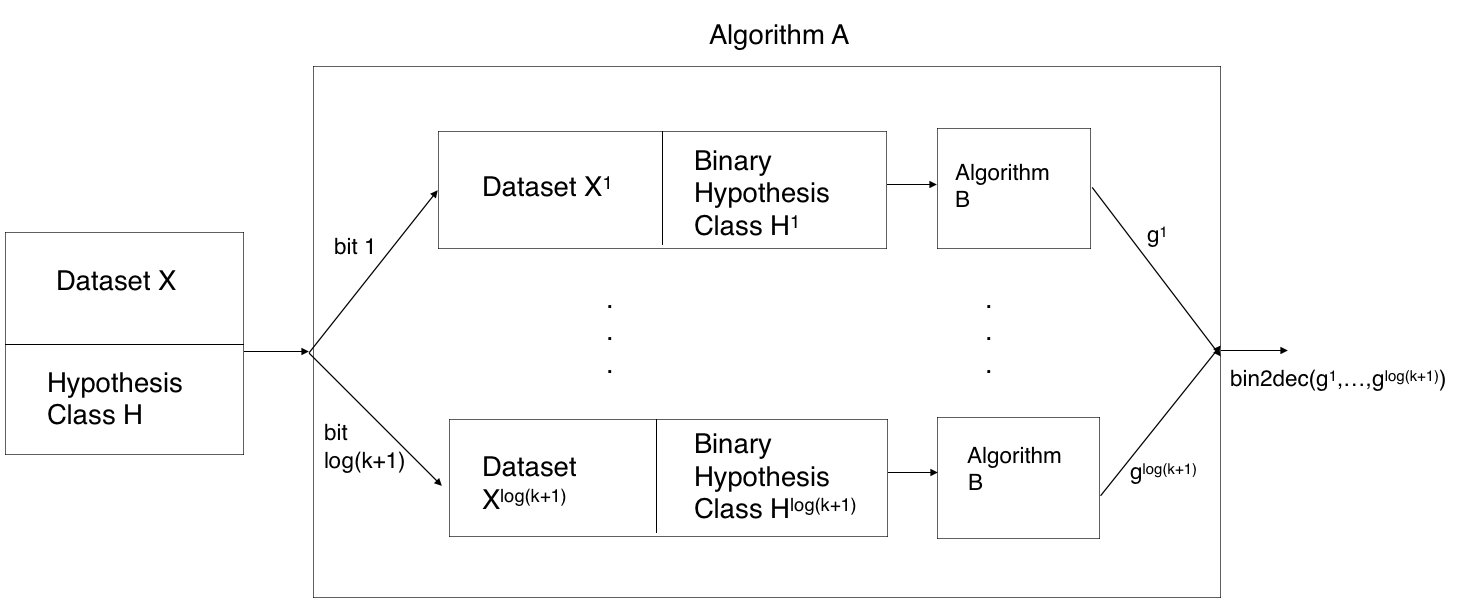}
\caption{Algorithm $A$ is the $(\epsilon, \delta)$-DP PAC learner for hypothesis classes with label set [k]. The algorithm $B$ used as a subroutine is an $(\epsilon, \delta)$-DP PAC learner for binary hypothesis classes. bin2dec represents a binary to decimal conversion.} \label{figure:reduction}
\centering
\end{figure}

Next, we formalize this idea. Given a hypothesis class $H$ with label set $[k]$, construct the following $\log (k+1)$ hypothesis classes $H|_{1}, \dots, H|_{\log (k+1)}$. For every function $f \in H$, let $f_i: \mathcal{X} \to \{0,1\}$ be the function defined such that $f_i(x)$ is the $i^{th}$ bit of the binary expansion of $f(x)$. Let the hypothesis class $H|_i$ be defined as $\{f_i: f \in H \}$. We will call these the \textbf{binary restrictions} of $H$.

\begin{theorem}\label{theorem:main}
Let $H$ be a hypothesis class with label set $[k]$ and let $H|_{1}, \dots, H|_{\log(k+1)}$ be its binary restrictions. Assume we have $(\epsilon, \delta)$-differentially private, $(\alpha, \beta)$-accurate PAC learners $B^1, \dots, B^{\log (k+1)}$ for $H|_{1}, \dots, H|_{\log(k+1)}$ with sample complexities upper bounded by $SC^1_{\alpha, \beta}, SC^2_{\alpha, \beta}, \dots,  SC^{\log(k+1)}_{\alpha, \beta}$. Then, there exists an $(\epsilon, \delta)$-differentially private, $(\alpha, \beta)$-accurate PAC learner $A$ for the hypothesis class $H$ that has sample complexity upper bounded by $\sum_{i=1}^{\log (k+1)} SC^i_{\alpha/\log (k+1), \beta/\log (k+1)}$.
\end{theorem}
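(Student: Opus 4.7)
The plan is to construct the multiclass learner $A$ exactly as depicted in Figure~\ref{figure:reduction}: on input $X=\{(x_j,f(x_j))\}$, partition $X$ into $\log(k+1)$ disjoint sub-datasets $X^{(1)},\dots,X^{(\log(k+1))}$ using a fixed deterministic partition (e.g.\ splitting consecutively), where $|X^{(i)}|=SC^i_{\alpha/\log(k+1),\,\beta/\log(k+1)}$. Relabel every example $(x,f(x))\in X^{(i)}$ by its $i^{\mathrm{th}}$ bit to obtain a sample labeled by $f_i\in H|_i$, and feed it to the binary learner $B^i$, obtaining a hypothesis $h_i\in H|_i$. Finally output the multiclass hypothesis $h(x):=\mathrm{bin2dec}(h_1(x),\dots,h_{\log(k+1)}(x))$. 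The analysis then splits into two independent pieces: privacy of the sequence of binary outputs, and accuracy after the bin2dec post-processing.

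For privacy, I would argue at the level of $(\epsilon,\delta)$-indistinguishability. Fix neighboring multiclass datasets $X,X'$ differing on a single example. Because the partition is deterministic, exactly one sub-dataset, say $X^{(i^*)}$, is affected; all others are identical. For $i\neq i^*$, the random variables $B^i(X^{(i)})$ and $B^i(X'^{(i)})$ are literally equal in distribution, and for $i=i^*$ they are $(\epsilon,\delta)$-indistinguishable because $B^{i^*}$ is $(\epsilon,\delta)$-differentially private and the relabeling is a data-independent function. Since the $B^i$'s use disjoint data and are run with independent randomness, the joint tuple $(h_1,\dots,h_{\log(k+1)})$ is $(\epsilon,\delta)$-indistinguishable on $X$ vs.\ $X'$. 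Applying Lemma~\ref{prelim:postprocess2} with the bin2dec map yields $(\epsilon,\delta)$-DP of $A$.

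For accuracy, note that if $f\in H$ labels the data then $f_i\in H|_i$ labels the $i^{\mathrm{th}}$ relabeled sub-dataset, so each $X^{(i)}$ is realizable for $B^i$ and is drawn i.i.d.\ from $P$. By the PAC guarantee for $B^i$, with probability at least $1-\beta/\log(k+1)$ the hypothesis $h_i$ satisfies $\Pr_{x\sim P}[h_i(x)\neq f_i(x)]\le \alpha/\log(k+1)$. A union bound over $i$ shows that with probability at least $1-\beta$ all $\log(k+1)$ binary error events hold simultaneously. Conditioned on this, whenever $h(x)\neq f(x)$ at least one bit must disagree, so by a second union bound inside the probability on $x$,
\[
\Pr_{x\sim P}[h(x)\neq f(x)]\;\le\;\sum_{i=1}^{\log(k+1)}\Pr_{x\sim P}[h_i(x)\neq f_i(x)]\;\le\;\alpha.
\]
The total sample size is $\sum_{i=1}^{\log(k+1)}SC^i_{\alpha/\log(k+1),\beta/\log(k+1)}$ as claimed.

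The only delicate point, and the one I would take care to state explicitly, is that the deterministic partitioning is what lets privacy cost stay at $(\epsilon,\delta)$ rather than incur composition across the $\log(k+1)$ binary learners; this is the whole reason for splitting data rather than reusing it. The two union bounds in the accuracy argument are what force the scaling of $\alpha$ and $\beta$ by $1/\log(k+1)$, which in turn accounts for the only quantitative blowup in the sample complexity. No further combinatorics is needed here; all of the work linking multiclass and binary Littlestone dimensions is postponed to Theorem~\ref{thm:online-reduction}.
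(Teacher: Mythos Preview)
Your proposal is correct and matches the paper's proof essentially step for step: the same construction (deterministic partition, bitwise relabeling, apply $B^i$, then $\mathrm{bin2dec}$), the same parallel-composition argument for $(\epsilon,\delta)$-privacy via post-processing of $(\epsilon,\delta)$-indistinguishability, and the same double union bound for accuracy. The paper packages the privacy step as a standalone ``parallel composition'' claim, but the content is identical to what you wrote.
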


\begin{proof}
For simplicity, let $k$ be a predecessor of a power of $2$. Note that if it is not, the argument below will work by replacing $k$ with the predecessor of the closest power of $2$ that is larger than $k$. 

Fix any distribution $P$ over $\mathcal{X}$ and an unknown function $f \in H = bin2dec(f_1, \dots, f_{\log(k+1)})$ ($bin2dec$ represents a binary to decimal conversion; which in this case will be an output in $[k]$) where $f_i \in H|_i$ predicts the $i^{th}$ bit of the binary expansion of the label predicted by $f$.

Assuming the algorithm is given a labeled sample $X$ of size $\sum_{i=1}^{\log(k+1)} SC^i_{\alpha/\log(k+1), \beta/\log(k+1)}$ drawn independently from $P$ and labeled by $f$, split the sample into $\log(k+1)$ smaller samples $X^1, \dots, X^{\log(k+1)}$. The first sample will be of size $SC^1_{\alpha/\log(k+1), \beta/\log (k+1)}$, the second sample will be of size $SC^2_{\alpha/\log(k+1), \beta/\log(k+1)}$ and so on. For each sample $X^i$, replace the labels of all examples in that sample by the $i^{th}$ bit of the binary expansion of what the label previously was. Note that this is equivalent to getting a sample of size $SC^i_{\alpha/\log(k+1), \beta/\log(k+1)}$ from distribution $P$ that is labeled by function $f_i \in H|_i$.

For all classes $H|_i$, $A$ runs the $(\epsilon, \delta)$-DP, $(\alpha / \log(k+1), \beta / \log(k+1))$-accurate PAC learning algorithm $B^i$ to learn $H|_{i}$ using the sample $X^i$. Let the hypothesis output when running the generic binary PAC learner on $H|_i$ be $g_i$. Then, $A$ outputs the function $g(x) = bin2dec(g_1(x), \dots, g_{\log(k+1)}(x))$.

First, we argue that $A$ is an $(\alpha, \beta)$-accurate PAC learner for $H$.
\begin{equation}\label{eq:private_pac}
    \Pr [g(x) \neq f(x) ] = \Pr[\exists i, g_i(x) \neq f_i(x)] \leq \sum_{i=1}^{\log(k+1)} \Pr[g_i(x) \neq f_i(x)]
\end{equation}
where the last inequality is by a union bound.

But since $g_i$ is the output of the $(\alpha/\log(k+1), \beta/\log(k+1))$-accurate PAC learner $B^i$ on $H|_i$, and we feed it a sufficient number of samples, we get that for any $i$, with probability $\geq 1 - \beta/\log(k+1)$,
\[\Pr[g_i(x) \neq f_i(x)] \leq \alpha / \log(k+1).\]
This means that again by a union bound, we can say that with probability $\geq 1 - \beta$,
\begin{equation}\label{eq:union_pac}
\forall i, \Pr[g_i(x) \neq f_i(x)] \leq \alpha / \log(k+1)
\end{equation}
Substituting equation~\ref{eq:union_pac} into equation~\ref{eq:private_pac}, we get that with probability $1-\beta$ over the randomness of the sample and the algorithm,
\begin{equation}
    \Pr [g(x) \neq f(x)] \leq \sum_{i=1}^{\log(k+1)} \Pr[g_i(x) \neq f_i(x)] \leq \alpha
\end{equation}
which means that $A$ is an $(\alpha, \beta)$-accurate PAC learner with sample complexity upper bounded by 
\[\sum_{i=1}^{\log(k+1)} SC^i_{\alpha/\log(k+1), \beta/\log(k+1)}.\]

We now argue that $A$ is $(\epsilon, \delta)$-DP. This will follow from the `parallel composition' property of $(\epsilon, \delta)$-DP.
\begin{claim}\label{claim:parcomp}
Let algorithm $M: \mathcal{X}^n \to \mathcal{Y}^r$ have the following structure: it splits its input data into $r$ disjoint partitions $X^1, X^2, \dots, X^r$ in a data-independent way. It runs $r$ (potentially different) $(\epsilon, \delta)$-DP algorithms $M^1: \mathcal{X}^* \to \mathcal{Y}, \dots, M^r: \mathcal{X}^* \to \mathcal{Y}$, one on each partition. It then outputs $\left( M^1(X^1), M^2(X^2), \dots, M^r(X^r) \right)$. Then, $M$ is $(\epsilon, \delta)$-DP.  
\end{claim}
\begin{proof}
Fix any two neighbouring datasets $X$ and $Y$. Then, we want to argue that the random variable $M(X)$ is $(\epsilon, \delta)$-indistinguishable from the random variable $M(Y)$. Observe that since $X$ and $Y$ differ in only one element, when we partition them, all but one partition is the same. Assume without loss of generality that only the first partition is different, that is $X^1 \neq Y^1$, but $X^2 = Y^2, \dots, X^r = Y^r$. $X^1$ and $Y^1$ are neighbouring datasets since they differ in only a single element. Hence since $M^1$ is $(\epsilon, \delta)$-DP, we have that $M^1(X^1)$ is $(\epsilon, \delta)$-indistinguishable from $M^1(Y^1)$.

Next, consider a randomized function $f_{X,Y}:\mathcal{Y} \to \mathcal{Y}^r$ (that depends on the neighbouring dataset pair) to represent the output of $M$ as follows: For any $q \in \mathcal{Y}$, let $f_{X,Y}(q) =\left( q, M^2(X^2), \dots, M^r(X^r)\right)$. By Claim~\ref{prelim:postprocess2}, since $(\epsilon, \delta)$-indistinguishability is preserved under post-processing, we have that $f_{X,Y}(M^1(X^1))$ is $(\epsilon, \delta)$-indistinguishable from $f_{X,Y}(M^1(Y^1))$. 

But \[f_{X,Y}(M^1(X^1)) = \left( M^1(X^1), M^2(X^2), \dots, M^r(X^r) \right) =M(X),\] and \[f_{X,Y}(M^1(Y^1)) =  \left( M^1(Y^1), M^2(X^2), \dots, M^r(X^r) \right) =  \left( M^1(Y^1), M^2(Y^2), \dots, M^r(Y^r) \right) = M(Y).\] where the second equality follows because $X^2 = Y^2, X^3 = Y^3, \dots, X^r = Y^r$. Hence, we get that $M(X)$ and $M(Y)$ are  $(\epsilon, \delta)$-indistinguishable. This argument works for any pair of databases; hence, we get that $M$ is $(\epsilon, \delta)$-DP.
\end{proof}
Note that algorithm $A$ follows a similar structure to that described in Claim~\ref{claim:parcomp}; it divides the dataset into $\log(k+1)$ partitions, runs an $(\epsilon, \delta)$-PAC learning algorithm for binary hypothesis classes on each partition and post-processes the outputs. Hence, by Claim~\ref{claim:parcomp} and by the fact that $(\epsilon, \delta)$-DP is closed under postprocessing (Claim~\ref{prelim:postprocess}), we get that $A$ is $(\epsilon, \delta)$-DP.
\end{proof}

Next, we recall that the sample complexity of privately learning binary hypothesis classes can be characterized by the Littlestone dimension of the hypothesis class \cite{almm, bun}. That is, there exists an $(\alpha, \beta)$-accurate, $(\epsilon, \delta)$-DP PAC learning algorithm for any binary hypothesis class $G$ with sample complexity upper and lower bounded by a function only depending on $\alpha, \beta, \epsilon, \delta$ and $d$ where $d$ is the Littlestone dimension of $G$. Using this characterization, we directly obtain the following corollary to Theorem~\ref{theorem:main}.
\begin{corollary}\label{cor:sampcompbinls}
Let $H$ be a hypothesis class with label set $[k]$ and let $H|_{1}, \dots, H|_{\log(k+1)}$ be its binary restrictions. Let the Littlestone dimensions of $H|_{1}, \dots, H|_{\log(k+1)}$ be $d_1, \dots, d_{\log(k+1)}$. Assume we have a generic $(\epsilon, \delta)$-differentially private, $(\alpha, \beta)$-accurate PAC learner $B$ for binary hypothesis classes $G$ that has sample complexity upper bounded by a function $SC_{\epsilon, \delta}(d', \alpha, \beta)$ where $d'$ is the Littlestone dimension of $G$.  Then, there exists an $(\epsilon, \delta)$-differentially private, $(\alpha, \beta)$-accurate PAC learner $A$ for $H$ that has sample complexity upper bounded by $\sum_{i=1}^{\log(k+1)} SC_{\epsilon, \delta}(d_i, \alpha/\log(k+1), \beta/\log(k+1))$.
\end{corollary}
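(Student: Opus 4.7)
The plan is to derive this as an immediate consequence of Theorem~\ref{theorem:main} by instantiating the generic binary learner $B$ on each of the $\log(k+1)$ binary restrictions $H|_i$ and reading off the resulting sample complexities from the Littlestone-dimension characterization of binary private PAC learning.

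Concretely, I first fix accuracy parameter $\alpha' = \alpha/\log(k+1)$ and confidence parameter $\beta' = \beta/\log(k+1)$, as these are the parameters at which Theorem~\ref{theorem:main} invokes its binary subroutines. Then for each $i \in \{1, \dots, \log(k+1)\}$, I define $B^i$ to be the algorithm obtained by running the generic binary learner $B$ on the hypothesis class $H|_i$. Because $B$ is generic, $B^i$ is an $(\epsilon,\delta)$-differentially private and $(\alpha',\beta')$-accurate PAC learner for $H|_i$ with sample complexity at most $SC_{\epsilon,\delta}(d_i, \alpha', \beta') = SC_{\epsilon,\delta}(d_i, \alpha/\log(k+1), \beta/\log(k+1))$, where we use that the Littlestone dimension of $H|_i$ is $d_i$ by hypothesis.

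Now I apply Theorem~\ref{theorem:main} with these concrete subroutines $B^1, \dots, B^{\log(k+1)}$, which yields an $(\epsilon,\delta)$-differentially private, $(\alpha,\beta)$-accurate PAC learner $A$ for $H$ whose sample complexity is bounded by
\[
\sum_{i=1}^{\log(k+1)} SC^i_{\alpha/\log(k+1), \beta/\log(k+1)} \;=\; \sum_{i=1}^{\log(k+1)} SC_{\epsilon,\delta}\bigl(d_i, \alpha/\log(k+1), \beta/\log(k+1)\bigr),
\]
which is exactly the bound claimed.

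There is really no substantive obstacle here: the entire content of the corollary is the composition of Theorem~\ref{theorem:main} with the known fact that $(\epsilon,\delta)$-DP binary PAC learning sample complexity is controlled by the Littlestone dimension. The only minor point to be careful about is that the invoked algorithm $B$ must be \emph{generic}, so that its sample complexity on each particular class $H|_i$ can legitimately be written as the same function $SC_{\epsilon,\delta}$ evaluated at the class-specific parameter $d_i$; this is precisely why the corollary's hypothesis is stated in terms of a generic learner.
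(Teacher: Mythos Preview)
Your proposal is correct and matches the paper's approach exactly: the paper presents this corollary without a separate proof, simply noting that it follows directly from Theorem~\ref{theorem:main} together with the Littlestone-dimension characterization of binary private PAC learning. Your instantiation of the generic learner $B$ on each $H|_i$ and the subsequent application of Theorem~\ref{theorem:main} is precisely what the paper has in mind.
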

Corollary~\ref{cor:sampcompbinls} shows that the sample complexity of privately PAC learning a hypothesis class in the multiclass setting can be upper bounded by a function depending on the Littlestone dimensions of its binary restrictions. However, as described earlier, Jung et al. \cite{jkt} showed that the sample complexity of private multiclass PAC learning could be characterized by the multiclass Littlestone dimension. Hence, an immediate question is what the relationship between the multiclass Littlestone dimension of a class and the Littlestone dimensions of its binary restrictions is.

\ifnum\neurips=0
\subsection{Connection between Multiclass and Binary Littlestone Dimension}
\else
\subsection{Connection between multiclass and binary Littlestone dimension}
\fi

We show that the multiclass Littlestone dimension $MLD(H)$ of a hypothesis class is intimately connected to the maximum Littlestone dimension over its binary restrictions.

\begin{theorem}\label{theorem:lsd}
Let $H$ by a hypothesis class with input set $\mathcal{X}$ and output set $[k]$. Let the multiclass Littlestone dimension of $H$ be $d$. Let $H|_1, H|_2, \dots, H|_{\log(k+1)}$ be the binary restrictions of $H$. Let the Littlestone dimensions of $H|_1, H|_2, \dots, H|_{\log(k+1)}$ be $d_1, \dots, d_{\log(k+1)}$. Then, 
\[\max_{i=1, \dots, \log(k+1)} d_i \leq 6 d \ln(k+1).\]
\end{theorem}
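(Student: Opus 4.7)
The plan is to lift the offline $\Psi$-dimension machinery of Ben-David et al.~\cite{Ben} to the online setting. For a family $\Psi$ of collapsing maps $\phi : [k] \to \{0, 1, *\}$, I would define the \emph{online $\Psi$-Littlestone dimension} $\Psi\text{-LD}(H)$ as the largest depth $b$ of a complete binary tree in which each internal node $v$ carries a pair $(x_v, \phi_v) \in \mathcal{X} \times \Psi$ and every root-to-leaf edge path $\sigma \in \{0, 1\}^b$ is realized by some $f \in H$ in the sense that $\phi_v(f(x_v)) = \sigma_v$ at every node $v$ on the path. A uniform variant $\Psi\text{-uLD}(H)$ forces the same $\phi$ at every node. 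Unwinding definitions: with $\Psi^N = \{\phi_{w,w'} : w \neq w'\}$ (the Natarajan family) one obtains $\Psi^N\text{-LD}(H) = MLD(H)$, and with $\Psi^{bin} = \{\phi_i\}_{i=1}^{\log(k+1)}$ (bit extraction) one obtains $\Psi^{bin}\text{-uLD}(H) = \max_i \mathrm{LD}(H|_i)$, since placing $\phi_i$ uniformly on a Littlestone-shattered tree for $H|_i$ gives a $\Psi^{bin}$-uniformly-shattered tree of the same depth, and conversely.

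The technical heart of the proof is an online Sauer--Shelah--Perles (SSP) lemma for multiclass classes, formulated via \emph{$0$-cover numbers}. I would show: if $MLD(H) \leq d$, then on any complete io-labeled binary tree of depth $n$ with $[k]$-valued edge labels, the number of distinct realizable root-to-leaf $[k]$-labeling sequences is at most $\sum_{i \leq d} \binom{n}{i} (k+1)^i \leq \bigl(e n (k+1)/d\bigr)^d$. The proof adapts the standard tree-shifting induction used for the binary online SSP bound, paying a factor of $(k+1)$ per unit of dimension to absorb multiclass edge labels.

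The two pieces combine as follows. Since $\max_i \mathrm{LD}(H|_i) = \Psi^{bin}\text{-uLD}(H) \leq \Psi^{bin}\text{-LD}(H)$, it suffices to bound the latter. Given a $\Psi^{bin}$-shattered tree of depth $t$, each of the $2^t$ sign patterns $\sigma \in \{0, 1\}^t$ is realized by some $f^\sigma \in H$. Any two distinct patterns $\sigma, \sigma'$ first differ at some node $v$, where $\phi_{i_v}(f^\sigma(x_v)) \neq \phi_{i_v}(f^{\sigma'}(x_v))$ forces $f^\sigma(x_v) \neq f^{\sigma'}(x_v)$, so the $2^t$ paths produce $2^t$ distinct multiclass label sequences. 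Applying the SSP bound with $n = t$ and $MLD(H) = d$ yields $2^t \leq \bigl(e t (k+1)/d\bigr)^d$, and solving this implicit inequality gives $t \leq 6 d \ln(k+1)$ after careful constant-tracking.

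The main obstacle will be proving a sharp online multiclass SSP lemma with polynomial, rather than exponential, dependence on $(k+1)$. The usual binary tree-shifting argument relies on flipping labels along realized paths and inducting on depth, but the ``any two distinct labels'' definition of $MLD$ demands delicate bookkeeping to preserve dimension under shifts; using $0$-covers rather than raw label counts is what keeps the inductive step compatible with multiclass dimension. Squeezing the constant down to the explicit factor $6$ requires sharpening the Sauer--Shelah inequality rather than invoking its coarsest form, and the edge case $MLD(H) = 0$ must be handled separately.
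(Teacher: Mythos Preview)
Your proposal is correct and follows essentially the same route as the paper: introduce online $\Psi$-Littlestone dimensions, prove a Sauer-type upper bound on $0$-cover numbers in terms of $MLD(H)$, extract a $2^t$ lower bound on the $0$-cover number from any $\Psi$-shattered tree of depth $t$ (via the divergence-node argument you sketch), and solve the resulting inequality $2^t \le (ekt/d)^d$ for the constant $6$. The paper routes through $\Psi^B$ (all collapsing maps) rather than $\Psi^{bin}$ directly, but this is immaterial; do note, however, that your Sauer lemma should be stated for $0$-covers of \emph{input}-labeled trees (as your final paragraph correctly indicates), not as a count of realizable paths in an io-labeled tree, since the latter quantity is not what your lower-bound argument controls.
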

A similar-looking theorem relating the Natarajan dimension of a hypothesis class with the maximum VC dimension over its binary restrictions was proved in Ben-David et al. \cite{Ben} using the notion of $\Psi$-dimension. Our proof of Theorem~\ref{theorem:lsd} is inspired by this strategy. It will proceed by defining and analyzing a notion of dimension that we call $\Psi$-Littlestone dimension. It will also use the $0$-cover function of a hypothesis class defined in Rakhlin et al. \cite{RST15}. The details of the proof are described in Section~\ref{sec:lsdproof}. 

This theorem is tight; for all $d \geq 0$ and $k \geq 1$, there exists a hypothesis class $H$ with label set $[k]$ and multiclass Littlestone dimension $d$ such the maximum Littlestone dimensions over the binary restrictions of $H$ is $O(d \ln(k+1))$. We prove this in Section~\ref{sec:tightness}. Additionally, the reverse direction is also true, the multiclass Littlestone dimension of any hypothesis class $H$ with label set $[k]$ is at most a $\log(k+1)$ factor larger than the maximum Littlestone dimension over its binary restrictions (this is also tight). We prove this in Section~\ref{sec:revdirection}. 

These arguments together completely describe the relationship between the multiclass Littlestone dimension of a hypothesis class $H$ with label set $[k]$ and the maximum Littlestone dimension over its binary restrictions. 

Finally, combining Theorem~\ref{theorem:lsd} and Corollary~\ref{cor:sampcompbinls}, we can directly obtain the following corollary to Theorem~\ref{theorem:main}. 

\begin{corollary}\label{cor:sampcompMLD}
Assume we have a generic $(\epsilon, \delta)$-differentially private, $(\alpha, \beta)$-accurate PAC learner $B$ for binary hypothesis classes $G$ that has sample complexity upper bounded by a function $SC_{\epsilon, \delta}(d', \alpha, \beta)$ where $d'$ is the Littlestone dimension of $G$.  Then, there exists a generic $(\epsilon, \delta)$-differentially private, $(\alpha, \beta)$-accurate PAC learner $A$ for multi-valued hypothesis classes $H$ (label set $[k]$) that has sample complexity upper bounded by $\log(k+1) SC_{\epsilon, \delta}(6d \ln(k+1), \alpha/\log (k+1), \beta/\log(k+1))$ where $d$ is the multiclass Littlestone dimension of $H$.
\end{corollary}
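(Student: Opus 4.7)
The proof will be a direct combination of the two main ingredients already in hand: the reduction from multiclass to binary private learning (Corollary~\ref{cor:sampcompbinls}) and the combinatorial comparison between multiclass Littlestone dimension and the Littlestone dimensions of the binary restrictions (Theorem~\ref{theorem:lsd}). Concretely, the plan is to start from the learner $A$ produced by the reduction of Theorem~\ref{theorem:main}, instantiated with $B$ on each of the binary restrictions $H|_1, \dots, H|_{\log(k+1)}$. By Corollary~\ref{cor:sampcompbinls}, $A$ is $(\epsilon,\delta)$-differentially private and $(\alpha,\beta)$-accurate, with sample complexity upper bounded by
\[
\sum_{i=1}^{\log(k+1)} SC_{\epsilon,\delta}\!\left(d_i,\, \alpha/\log(k+1),\, \beta/\log(k+1)\right),
\]
where $d_i$ is the Littlestone dimension of $H|_i$.

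The second step is to replace each $d_i$ in the above sum by the uniform bound supplied by Theorem~\ref{theorem:lsd}, namely $d_i \le \max_j d_j \le 6 d \ln(k+1)$. To substitute this into $SC_{\epsilon,\delta}(\cdot,\cdot,\cdot)$, I will note (or explicitly state as a WLOG assumption) that $SC_{\epsilon,\delta}(d',\alpha,\beta)$ may be taken to be monotonically non-decreasing in $d'$: any learner with sample complexity $SC_{\epsilon,\delta}(d',\alpha,\beta)$ on classes of Littlestone dimension $d'$ is automatically a valid learner for classes of smaller dimension, so we lose nothing by replacing the function with its running maximum in the first argument. Applying this monotonicity yields that each of the $\log(k+1)$ terms in the sum is at most $SC_{\epsilon,\delta}\!\left(6 d \ln(k+1),\, \alpha/\log(k+1),\, \beta/\log(k+1)\right)$, and summing the $\log(k+1)$ identical upper bounds gives the claimed total sample complexity $\log(k+1)\cdot SC_{\epsilon,\delta}\!\left(6d\ln(k+1), \alpha/\log(k+1), \beta/\log(k+1)\right)$.

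Privacy and accuracy require no new argument: they are inherited directly from the learner $A$ of Theorem~\ref{theorem:main}, which is already $(\epsilon,\delta)$-differentially private and $(\alpha,\beta)$-accurate provided each binary subroutine meets the stated guarantees with parameters $(\alpha/\log(k+1), \beta/\log(k+1))$. Since $B$ is assumed to be a \emph{generic} binary learner, we may apply it to each $H|_i$ without any dependence on $H$ beyond its description, so the resulting multiclass learner $A$ is itself generic.

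There is essentially no obstacle here beyond bookkeeping; the only mildly delicate point is the implicit monotonicity of $SC_{\epsilon,\delta}$ in the Littlestone dimension, which I will address with a one-line justification (take the running max in the first argument). Everything else is a direct substitution of Theorem~\ref{theorem:lsd} into the sum produced by Corollary~\ref{cor:sampcompbinls}.
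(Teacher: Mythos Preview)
Your proposal is correct and follows essentially the same approach as the paper, which simply states that the corollary is obtained by combining Theorem~\ref{theorem:lsd} with Corollary~\ref{cor:sampcompbinls}. Your explicit remark about the monotonicity of $SC_{\epsilon,\delta}$ in its first argument is a reasonable clarification that the paper leaves implicit.
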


We now consider an application of this result. The best known sample complexity bound for $(\epsilon, \delta)$-DP binary PAC learning is achieved by a learner described in Ghazi et al. \cite{ghazi}. We state a slightly looser version of their result here.
\begin{theorem}[Theorem 6.4 \cite{ghazi}]\label{thm:ghazi}
Let $G$ be any binary hypothesis class with Littlestone dimension $d_L$. Then, for any $\epsilon, \delta, \alpha, \beta \in [0,1]$,
for some \[n = O\left(\frac{d_L^6 \log^2(\frac{d_L}{\alpha \beta \epsilon \delta})}{\epsilon \alpha^2}\right),\]
there is an $(\epsilon, \delta)$-differentially private, $(\alpha, \beta)$-accurate PAC learning algorithm $B$ for $G$ with sample complexity upper bounded by $n$.
\end{theorem}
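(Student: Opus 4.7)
The statement is a direct restatement of Theorem 6.4 of Ghazi et al.\ \cite{ghazi}, so the honest "proof" in this paper is to cite their construction verbatim. If I were to reprove the bound from scratch, my plan would be to follow the same three-stage pipeline that their argument is built on: start from a mistake-bounded online learner, compile it into a weak \emph{globally stable} learner, and then amplify via differentially private boosting.

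For the first stage, I would invoke the Standard Optimal Algorithm (SOA) on the class $G$, which has mistake bound equal to the Littlestone dimension $d_L$. Feeding the SOA an i.i.d.\ realizable sample of length $n$ yields a trajectory of at most $d_L$ updates and a final hypothesis that achieves small empirical error. This produces a non-private learner whose sample complexity has only a linear dependence on $d_L$, but the outputs are highly sample-dependent and therefore far from private.

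For the second stage, I would privatize the above by running many independent copies of the SOA on random subsamples of the dataset, collecting the hypotheses they output, and then applying a differentially private histogram / stable selection primitive to identify a hypothesis that is frequently produced. The goal is to argue that there exists some hypothesis that is output with constant probability under resampling, i.e., a globally stable weak learner in the sense of Bun et al.\ \cite{bun}; the existence proof requires a Ramsey-type / averaging argument whose cost is polynomial in $d_L$. Finally, for the third stage, I would plug this weak globally stable learner into the generic private boosting framework, which takes a weak learner with constant advantage and returns a $(\alpha,\beta)$-accurate $(\epsilon,\delta)$-DP learner at the cost of additional polynomial overhead in $d_L$, $1/\alpha$, $1/\epsilon$, and logarithmic factors in the various accuracy/confidence/privacy parameters.

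The main obstacle in matching the exact $d_L^6$ exponent and the precise logarithmic factors is the careful bookkeeping across these three stages: each stage — subsampling, private selection, and boosting — introduces its own polynomial blowup and its own composition of privacy parameters, and it is the interaction between the stability amplification (which governs how many parallel online learners must be run, and at what subsample size) and the boosting analysis (which dictates the weak-to-strong conversion cost) that ultimately determines the sixth-power dependence. Getting a clean $\tilde O(d_L^6 / (\epsilon \alpha^2))$ bound, rather than something worse, requires re-using samples across rounds via advanced composition and avoiding any naïve union bound on the $d_L$ online updates, which is the delicate technical step I would expect to spend the bulk of the effort on.
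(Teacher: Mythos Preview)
Your proposal is correct: the paper does not prove this theorem at all but simply imports it as a black-box citation of Theorem~6.4 in~\cite{ghazi}, exactly as you say in your first sentence. The remainder of your write-up (the three-stage pipeline sketch) is extra commentary that goes beyond what the paper does; it is a reasonable high-level description of the~\cite{ghazi} strategy, but the paper itself makes no attempt to reproduce or even summarize that argument, so for the purposes of matching the paper your first sentence alone suffices.
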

Now, applying the reduction described in Theorem~\ref{theorem:main}, with this learner as a subroutine, we get the following theorem.  (Instead of directly applying Theorem~\ref{thm:ghazi}, we will instead first use a boosting procedure described in Appendix~\ref{sec:Boosting}.)
\begin{theorem}\label{thm:mainsampcomp}
Let $H$ be a concept class over $\mathcal{X}$ with label set $[k]$ and multiclass Littlestone dimension $d$. Then, for any $\epsilon \in [0,1/4], \delta, \alpha, \beta \in [0,1]$, for some \[n = O\Big(\frac{d^6 (\log(k+1))^8 \log^4(\frac{d \log^3(k+1)}{\epsilon \delta \alpha \beta})}{\epsilon \alpha}\Big)\]
there is an $(\epsilon, \delta)$-differentially private, $(\alpha, \beta)$-accurate PAC learning algorithm $A$ for $H$ with sample complexity upper bounded by $n$.
\end{theorem}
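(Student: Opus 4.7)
The plan is to derive Theorem~\ref{thm:mainsampcomp} by composing three ingredients stated earlier in the paper: Ghazi et al.'s binary private PAC learner (Theorem~\ref{thm:ghazi}), the private boosting procedure described in Appendix~\ref{sec:Boosting} (Theorem~\ref{thm:boosting}), and the generic multiclass-to-binary reduction of Corollary~\ref{cor:sampcompMLD} together with the Littlestone dimension bound of Theorem~\ref{theorem:lsd}.

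First, I would fix a constant weak accuracy, say $\alpha_0 = 1/3$, and invoke Theorem~\ref{thm:ghazi} to obtain a generic $(\epsilon,\delta)$-DP, $(\alpha_0,\beta)$-accurate binary PAC learner $B_0$ with sample complexity $O(d_L^6 \log^2(d_L/\beta\epsilon\delta)/\epsilon)$, where $d_L$ denotes the Littlestone dimension of the input binary class. I would then apply the private boosting procedure of Theorem~\ref{thm:boosting} to $B_0$, producing a binary learner $B$ whose sample complexity has nearly inverse linear dependence on the target accuracy:
\[
SC^B_{\epsilon,\delta}(d_L,\alpha,\beta) \;=\; \tilde{O}\!\left(\frac{d_L^6 \log^2(d_L/\beta\epsilon\delta)}{\epsilon \alpha}\right),
\]
where $\tilde{O}$ absorbs polylogarithmic factors in $1/\alpha$, $1/\beta$, and $d_L$ introduced by the boosting analysis. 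The crucial design choice is to perform boosting \emph{before} the reduction rather than \emph{after} it, which is what eventually saves a factor of $\log(k+1)$ in the final bound, because the reduction blows up $1/\alpha$ by $\log(k+1)$ and we want this extra factor to appear only linearly.

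Finally, I would plug $B$ into Corollary~\ref{cor:sampcompMLD}, using Theorem~\ref{theorem:lsd} to set $d_L = 6d\ln(k+1)$. The resulting multiclass learner uses at most $\log(k+1) \cdot SC^B_{\epsilon,\delta}(6d\ln(k+1), \alpha/\log(k+1), \beta/\log(k+1))$ samples. Substituting, the factor $d_L^6$ contributes $d^6 \log^6(k+1)$; the accuracy rescaling $\alpha/\log(k+1)$ contributes one more $\log(k+1)$ in the denominator (linear, since boosting has already replaced $1/\alpha^2$ by $1/\alpha$); and the outer $\log(k+1)$ factor from the reduction contributes one more, giving the claimed leading factor $d^6 \log^8(k+1)/(\epsilon\alpha)$. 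The $\log^4(\cdot)$ term in the theorem arises by combining the $\log^2(\cdot)$ inside Theorem~\ref{thm:ghazi}, evaluated at $d_L = 6d\ln(k+1)$ and $\beta' = \beta/\log(k+1)$, with a further $\log^2$ contributed by the polylog factors in the boosting theorem. The only real obstacle is arithmetic bookkeeping: carefully tracking how logs arising from the four separate sources (Theorem~\ref{thm:ghazi}, boosting, the accuracy splitting in the reduction, and the Littlestone dimension inflation of Theorem~\ref{theorem:lsd}) combine into the stated $\log^8(k+1) \log^4(\cdot)$ factor, and verifying that the argument of the inner $\log$ simplifies to $d\log^3(k+1)/(\epsilon\delta\alpha\beta)$ as claimed.
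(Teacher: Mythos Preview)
Your proposal is correct and follows essentially the same route as the paper: first boost the Ghazi et al.\ binary learner to obtain nearly inverse-linear dependence on $\alpha$ (the paper packages this as Corollary~\ref{thm:ghazi-2}), then plug the boosted learner into Corollary~\ref{cor:sampcompMLD} with $d' = 6d\ln(k+1)$, $\alpha' = \alpha/\log(k+1)$, $\beta' = \beta/\log(k+1)$. The only minor deviation is that the paper's boosting theorem (Theorem~\ref{thm:boosting}) takes a $(1/4,1/4)$-accurate weak learner as input rather than your $(\alpha_0,\beta)$-accurate one, but this is cosmetic and does not affect the argument or the final bound.
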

\begin{proof}
We will use the fact that the binary PAC learner from Ghazi et al. can be boosted to give a learner for binary hypothesis classes $H$ with Littlestone dimension $d'$ with sample complexity  upper bounded by $O\left(\frac{d'^6 \log^4(\frac{d'}{\alpha \beta \epsilon \delta})}{\epsilon \alpha}\right)$. The main difference is that the sample complexity is nearly inverse linear in the term $\alpha$ versus inverse quadratic. This boosting procedure is discussed in detail in Section~\ref{sec:Boosting} and the sample complexity bound we use here is derived in Corollary~\ref{thm:ghazi-2}. 

Substituting into Corollary~\ref{cor:sampcompMLD} with $d' = 6d\ln(k+1), \alpha' = \alpha/\log(k+1), \beta' = \beta/\log(k+1)$ gives the result.   
\end{proof}
\ifnum\neurips=0
\section{$\Psi$-Littlestone Dimension}
\else
\section{$\Psi$-Littlestone dimension}
\fi

\subsection{Definition}\label{sec:psidef}
In this section, we define an online analog of the $\Psi$-dimension \cite{Ben} that will help us prove Theorem~\ref{theorem:lsd}. The main intuition is that similar to in the definition of $\Psi$-dimension, we can use what we term \textit{collapsing maps} to reason about the multiclass setting while working with binary outputs. Let $\phi: [k] \to \{0, 1, *\}$ represent a function that maps labels to $\{0,1, * \}$, which we call a \textit{collapsing map}. We refer to a set of collapsing maps $\Psi$ as a \emph{family}. The definitions of labeled trees will be the only distinction from the regular definition of multiclass Littlestone dimension, and every node will have not only an example, but also a collapsing map assigned to it. 
\begin{definition}[$\Psi$-labeled binary tree]
A complete $\Psi$-labeled binary tree of depth $b$ with label set $[k]$ and mapping set $\Psi$ on input space $\mathcal{X}$ consists of a complete binary tree of depth $b$ with the following labels:
\begin{enumerate}
    \item Every node of the tree other than the leaves is labeled by an example $x \in \mathcal{X}$, and a collapsing map $\phi \in \Psi$. 
    \item The left and right edges going from any parent node to its two children are labeled by $0$ and $1$ respectively. 
    \item The leaf nodes of the tree are unlabeled. 
\end{enumerate}
A complete $\Psi$-uniformly labeled binary tree of depth $b$ with label set $[k]$ and mapping set $\Psi$ on input space $\mathcal{X}$ is defined in the same way, with the additional property that all nodes at the same depth are labeled by the same collapsing map.
\end{definition}
Where the input space, label space and mapping set are obvious, we will omit them and simply refer to a complete $\Psi$-labeled binary tree or  $\Psi$-uniformly labeled binary tree. 
\begin{definition}
Consider a root-to-leaf path in a complete $\Psi$-labeled binary tree described as an ordered sequence $S = ((x_0, \phi_0, y_0), \dots, (x_{b-1}, \phi_{b-1}, y_{b-1}))$, where each $x_i \in \mathcal{X}$ is an input, $\phi_i$ is a collapsing map, and $y_i \in \{0, 1\}$ is an edge label. We say that this path is realized by a function $f \in H$ if $y_i = \phi_i(f(x_i))$ for every triple in the ordered sequence $S$.
\end{definition}
We can now define what it means for a class of functions to $\Psi$-shatter a complete $\Psi$-labeled binary tree. 
\begin{definition}[$\Psi$-shattering]
We say that a complete $\Psi$-labeled binary tree of depth $b$ with label set $[k]$ is $\Psi$-shattered by a hypothesis class $H$ if for all $2^{b}$ root-to-leaf sequences $S$ of the tree, there exists a function $f \in H$ that realizes $S$. 
Similarly, we say that a complete binary $\Psi$-uniformly labeled tree of depth $b$ with label set $[k]$ is $\Psi$-shattered by a hypothesis class $H$ if for all $2^{b}$ root-to-leaf sequences $S$ of the tree, there exists a function $f \in H$ that realizes $S$.
\end{definition}

Finally, we are in a position to define the $\Psi$-Littlestone dimension. 

\begin{definition}[$\Psi$-Littlestone dimension]
The \textbf{$\Psi$-Littlestone dimension} $\Psi_{LD}(H)$ of a hypothesis class $H$ is defined to be the maximum depth $b$ such that there is a complete $\Psi$-labeled binary tree of depth $b$ that is $\Psi$-shattered by $H$. If no maximum exists, then we say that the $\Psi$-Littlestone dimension of $H$ is $d=\infty$. The \textbf{uniform $\Psi$-Littlestone dimension} $\Psi_{LD} U(H)$ is defined similarly (using the definition of $\Psi$-shattering for complete $\Psi$-uniformly labeled binary trees instead).
\end{definition}

\ifnum\neurips=0
\subsection{Properties of $\Psi$-Littlestone Dimension}\label{sec:prop}
\else
\subsection{Properties of $\Psi$-Littlestone dimension}\label{sec:prop}
\fi

In this section, we begin our investigation of the $\Psi$-Littlestone dimensions by discussing a few simple and useful properties. We first define three important families of collapsing maps $\Psi^N$, $\Psi^{bin}$ and $\Psi^{B}$ that will play an important role in our results.

Consider a collapsing map $\phi_{w,w'}$ defined by $\phi_{w,w'}(\ell) = 0$ if $\ell=w$, $\phi_{w,w'}(\ell) = 1$ if $\ell=w'$, and $\phi_{w,w'}(\ell) = *$ otherwise. Then, $\Psi^N$ is defined to be $\{ \phi_{w,w'} | w \neq w', w,w' \in [k] \}$. Similarly, let $\phi_i$ be a collapsing map that maps a label in $[k]$ to the $i^{th}$ bit of its $\log(k+1)$-bit binary expansion. Then, $\Psi^{bin}$ is defined to be $\{ \phi_{i} \text{ }|\text{ }i = 1, \dots, \log(k+1) \}$. Finally, $\Psi^B$ is defined as the family of all collapsing maps from $[k]$ to $\{0,1,*\}$.

We first show that the multiclass Littlestone dimension of a hypothesis class $H$ (denoted $MLD(H)$) is equivalent to $\Psi^N_{LD}(H)$. 
\begin{lemma}\label{lem:olnatdim}
For all hypothesis classes $H$,  $\Psi^N_{LD}(H) = MLD(H)$.
\end{lemma}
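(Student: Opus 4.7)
The plan is to establish a depth-preserving bijection between complete io-labeled binary trees of depth $b$ shattered by $H$ and complete $\Psi^N$-labeled binary trees of depth $b$ that are $\Psi^N$-shattered by $H$. Once this bijection is in place, the equality of the two dimensions follows immediately by taking the supremum of $b$ on both sides. The core observation that powers the bijection is that for $w \ne w'$, the collapsing map $\phi_{w,w'}$ satisfies $\phi_{w,w'}^{-1}(0) = \{w\}$ and $\phi_{w,w'}^{-1}(1) = \{w'\}$, so specifying a bit $y \in \{0,1\}$ together with the map $\phi_{w,w'}$ carries exactly the same information as specifying the multiclass label $w$ or $w'$ directly.

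To show $MLD(H) \le \Psi^N_{LD}(H)$, I would start from any complete io-labeled binary tree $T$ of depth $b$ shattered by $H$. For each internal node of $T$ labeled with example $x$ and left/right edge labels $w, w' \in [k]$ (which must be distinct by the definition of io-labeling), I relabel the corresponding node of a new tree $T'$ with $(x, \phi_{w,w'})$ and label its left and right outgoing edges $0$ and $1$ respectively. Since $w \ne w'$, $\phi_{w,w'} \in \Psi^N$, so $T'$ is a valid complete $\Psi^N$-labeled binary tree of depth $b$. Any root-to-leaf path $((x_i, \phi_{w_i, w'_i}, y_i))_{i=0}^{b-1}$ in $T'$ corresponds uniquely to a root-to-leaf path in $T$ whose $i$th edge label is $w_i$ if $y_i = 0$ and $w'_i$ if $y_i = 1$. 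By the inverse-image observation above, a function $f \in H$ realizes the $T'$-path iff $f(x_i)$ equals this edge label for every $i$, iff $f$ realizes the corresponding path in $T$. Since $T$ is shattered, every path in $T'$ is realized by some $f \in H$, giving $\Psi^N_{LD}(H) \ge b$.

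For the reverse inequality $\Psi^N_{LD}(H) \le MLD(H)$, I would invert the construction. Given a $\Psi^N$-shattered tree $T'$ of depth $b$, strip the collapsing map $\phi_{w,w'}$ off each internal node label, keep only the example $x$, and relabel the left/right outgoing edges with $w$ and $w'$ respectively. Because $\phi_{w,w'} \in \Psi^N$ forces $w \ne w'$, this yields a well-formed complete io-labeled binary tree of depth $b$, and the same realization correspondence shows it is shattered by $H$. Hence $MLD(H) \ge b$. Combining both directions (and handling the $\infty$ case by observing that the bijection holds at every finite depth) gives $MLD(H) = \Psi^N_{LD}(H)$.

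There is no real obstacle; the only thing that needs care is checking that the realization condition transfers faithfully across the bijection, which reduces to unwinding the definition of $\phi_{w,w'}$. This is essentially the online analogue of the fact that $\Psi^N$-dimension equals Natarajan dimension in the offline setting proved in \cite{Ben}, and the argument is purely syntactic once the correct correspondence between node/edge labels is written down.
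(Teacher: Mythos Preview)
Your proposal is correct and follows essentially the same approach as the paper: both directions are proved by the natural depth-preserving relabeling between io-labeled trees and $\Psi^N$-labeled trees (replacing the pair of distinct edge labels $w,w'$ at a node by the collapsing map $\phi_{w,w'}$ and edge labels $0,1$, and vice versa), and then checking that realization of root-to-leaf paths is preserved. Your explicit use of $\phi_{w,w'}^{-1}(0)=\{w\}$, $\phi_{w,w'}^{-1}(1)=\{w'\}$ to get the ``iff'' is a slightly more careful way of stating what the paper leaves implicit, but the argument is the same.
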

\begin{proof}
Consider any complete io-labeled binary tree $T$ of depth $MLD(H)$ that is shattered by $H$. Construct a complete $\Psi^N$-labeled binary tree $T'$ as follows. The tree will be of the same depth as $T$. If in $T$, for a particular parent node, the two edges from a parent to a child are labeled by $w,w'$, then let the collapsing map labeling the parent node in $T'$ be $\phi_{w,w'}$. The edge labeled $w$ in $T$ will be labeled by $0$ in $T'$ and the other edge will be labeled by $1$. Also, label the nodes of $T'$ with examples in exactly the same way as $T$. The leaves remain unlabeled. By the definition of shattering, for every root-to-leaf path in $T$, there is a function that realizes that path. This function will continue to realize the corresponding path in $T'$. Hence, $T'$ is $\Psi^N$-shattered by $H$. This implies that
\[ MLD(H) \leq \Psi^N_{LD}(H).\]
The other direction performs this construction in reverse: it takes a complete $\Psi^N$-labeled binary tree $T'$ that is $\Psi^N$-shattered by $H$ and creates a complete io-labeled binary tree $T$ of the same depth that is shattered by $H$. For any node in $T'$, if the collapsing map assigned to that node is $\phi_{k,k'}$, the edges of that node to its children in $T$ will be labeled $k$ and $k'$ respectively (the edge labeled $0$ in $T'$ will be labeled by $k$ in $T$ and the other edge will be labeled by $k'$). The nodes of $T$ are labeled with the same examples as $T'$. The leaves remain unlabeled. By a similar argument to that in the previous paragraph, we have that $T$ is shattered by $H$, which means that
\[ \Psi^N_{LD}(H) \leq MLD(H).\]
This proves the claim.
\end{proof}

Next, we connect the Littlestone dimension of the binary restrictions of a hypothesis class $H$ with label set $[k]$ to the $\Psi^{bin}$-Littlestone dimension of the class.
\begin{claim}\label{lem:lsbin}
Consider any hypothesis class $H$ with label set $[k]$, and let $H|_1, H|_2, \dots, H|_{\log(k+1)}$ be the binary restrictions of $H$. Let the Littlestone dimension of $H|_j$ be $d_j$. Then,
\[\max_j d_j \leq \Psi^{bin}_{LD} U(H) \leq \Psi^{bin}_{LD}(H).\]
\end{claim}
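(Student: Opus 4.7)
The plan is to prove both inequalities directly from the definitions, with essentially no combinatorial machinery required beyond matching up the structure of the trees involved.

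For the right inequality $\Psi^{bin}_{LD}U(H) \leq \Psi^{bin}_{LD}(H)$, I would observe that a complete $\Psi^{bin}$-uniformly labeled binary tree is simply a complete $\Psi^{bin}$-labeled binary tree that happens to satisfy the extra constraint that all nodes at the same depth share the same collapsing map. Therefore, any depth-$b$ uniformly labeled tree that $H$ $\Psi^{bin}$-shatters is already a depth-$b$ $\Psi^{bin}$-labeled tree that $H$ $\Psi^{bin}$-shatters, giving the inequality immediately.

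For the left inequality, fix an index $j^{*} \in \{1, \dots, \log(k+1)\}$ that attains $\max_j d_j$. By the definition of Littlestone dimension, there is a complete io-labeled binary tree $T$ of depth $d_{j^{*}}$ (with binary label set $\{0,1\}$) that is shattered by $H|_{j^{*}}$. I would convert $T$ into a complete $\Psi^{bin}$-uniformly labeled binary tree $T'$ of the same depth by keeping the same example at every internal node, relabeling every node (at every depth) with the collapsing map $\phi_{j^{*}} \in \Psi^{bin}$, and keeping the edge labels $\{0,1\}$ exactly as in $T$.

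To check that $T'$ is $\Psi^{bin}$-shattered by $H$, consider any root-to-leaf path $S = ((x_0, \phi_{j^{*}}, y_0), \dots, (x_{b-1}, \phi_{j^{*}}, y_{b-1}))$ in $T'$. The corresponding path in $T$ is realized by some $f_{j^{*}} \in H|_{j^{*}}$, meaning $f_{j^{*}}(x_i) = y_i$ for each $i$. By definition of the binary restriction, there exists $f \in H$ with $\phi_{j^{*}}(f(x)) = f_{j^{*}}(x)$ for every $x$, so $\phi_{j^{*}}(f(x_i)) = y_i$ for each $i$; hence $f$ realizes $S$ in $T'$. Thus $\Psi^{bin}_{LD}U(H) \geq d_{j^{*}} = \max_j d_j$, completing the proof. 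The only ``obstacle'' is just being careful about the direction of the translation between a binary hypothesis class shattering a standard tree and the multiclass hypothesis class shattering the corresponding uniformly labeled $\Psi^{bin}$-tree, which is an immediate unfolding of definitions.
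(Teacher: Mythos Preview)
Your proposal is correct and follows essentially the same approach as the paper: both prove the right inequality by noting that a uniformly labeled tree is in particular a $\Psi$-labeled tree, and both prove the left inequality by taking a depth-$d_{j^*}$ tree shattered by $H|_{j^*}$ and relabeling every node with the single collapsing map $\phi_{j^*}$. Your write-up actually spells out the verification that $H$ $\Psi^{bin}$-shatters $T'$ (via the existence of $f\in H$ lifting $f_{j^*}\in H|_{j^*}$) in slightly more detail than the paper does.
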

\begin{proof}
The second inequality follows immediately from the fact that for any $\Psi$, if there exists a complete $\Psi$-uniformly labeled binary tree that is $\Psi$-shattered by $H$, then there exists a complete $\Psi$-labeled binary tree that is $\Psi$-shattered by $H$.

To prove the first inequality, fix a class $H|_i$ such that $d_i = \max_j d_j$. Consider a complete, io-labeled binary tree $T$ of depth $d_i$ that is shattered by $H|_i$. Then, construct the following complete $\Psi^{bin}$-labeled binary tree $T'$ of the same depth $d_i$. For every node, label it with the same example as in tree $T$. Every node in $T'$ is labeled with the collapsing map $\phi_i$ which maps a label to the $i^{th}$ bit of its binary expansion. The leaves remain unlabeled. Then, we have that $H$ $\Psi^{bin}$-shatters $T'$. Additionally, $T'$ is of the same depth as $T$ and all nodes at the same depth are labeled by the same collapsing map. Hence, \[\max_j d_j \leq \Psi^{bin}_{LD} U(H).\] \end{proof}

Finally, we relate the notions of $\Psi$-Littlestone dimension we have obtained with the families $\Psi^N$, $\Psi^B$ and $\Psi^{bin}$.
\begin{claim}\label{lem:psilsdrel}
For all hypothesis class $H$,
\[\Psi^N_{LD}(H) \leq \Psi^{bin}_{LD}(H) \leq \Psi^{B}_{LD}(H).\]
\end{claim}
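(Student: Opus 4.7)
The plan is to prove each inequality by converting a $\Psi$-shattered tree witnessing the smaller dimension into a $\Psi$-shattered tree (of the same depth) witnessing the larger one.

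The second inequality $\Psi^{bin}_{LD}(H) \leq \Psi^{B}_{LD}(H)$ is essentially immediate. Each map $\phi_i \in \Psi^{bin}$ is itself a total function $[k] \to \{0,1\} \subseteq \{0,1,*\}$, so $\Psi^{bin} \subseteq \Psi^{B}$. Hence any complete $\Psi^{bin}$-labeled tree is already a complete $\Psi^{B}$-labeled tree, and any function realizing a given root-to-leaf path in the first sense realizes it in the second sense as well. The inequality follows.

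The first inequality $\Psi^N_{LD}(H) \leq \Psi^{bin}_{LD}(H)$ is the substantive step and not a subset containment (note $\Psi^N \not\subseteq \Psi^{bin}$, since $\Psi^N$-maps output $*$). Starting from a $\Psi^N$-shattered complete tree $T$ of depth $d=\Psi^N_{LD}(H)$, I would relabel each internal node to build a $\Psi^{bin}$-shattered tree $T'$ of the same depth as follows. If a node of $T$ carries example $x$ and collapsing map $\phi_{w,w'}$ with $w\neq w'$, pick any index $i \in \{1,\dots,\log(k+1)\}$ at which the binary expansions of $w$ and $w'$ disagree (such $i$ exists since $w\neq w'$), and label the corresponding node of $T'$ with $(x,\phi_i)$. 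Let $b_w,b_{w'}\in\{0,1\}$ be the $i$-th bits of $w$ and $w'$. If $b_w=0$, keep the two children of this node in the same left/right order as in $T$; if $b_w=1$, swap them. Recurse into each subtree.

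To verify correctness, fix any root-to-leaf path in $T'$, given by edge labels $(y_0,\dots,y_{d-1})\in\{0,1\}^d$. By the swapping rule, this path corresponds bijectively to a root-to-leaf path in $T$: at each node, the child reached along edge label $y_j$ in $T'$ is precisely the child in $T$ whose incoming edge is labeled with the value $w$ or $w'$ whose $i$-th bit equals $y_j$. Since $T$ is $\Psi^N$-shattered by $H$, there is an $f \in H$ realizing that corresponding path in $T$, i.e.\ at each node $f(x)$ equals the relevant element of $\{w,w'\}$; by the choice of $i$ we then have $\phi_i(f(x))=y_j$, so $f$ realizes the chosen path in $T'$. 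Hence $T'$ is $\Psi^{bin}$-shattered and has depth $d$, giving $\Psi^N_{LD}(H)\le \Psi^{bin}_{LD}(H)$. The only mildly delicate part is the bookkeeping around the swap, which is needed because the convention fixes edge labels $0$ (left) and $1$ (right) while the bit of $w$ need not be $0$; this is where I would be most careful but it is purely combinatorial.
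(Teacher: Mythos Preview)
Your proposal is correct and follows essentially the same approach as the paper: for the second inequality you use $\Psi^{bin}\subseteq\Psi^{B}$, and for the first you relabel each node carrying $\phi_{w,w'}$ by $\phi_i$ for a bit $i$ on which $w$ and $w'$ differ, adjusting for the case $b_w=1$. Your description of ``swapping children'' is in fact more faithful to the definition (which fixes left~$=0$, right~$=1$) than the paper's informal phrasing of ``relabeling edges,'' and your explicit verification that the realizing $f$ carries over is more detailed than the paper's one-line assertion.
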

\begin{proof}
Consider any complete $\Psi^N$-labeled binary tree of depth $\Psi^N_{LD}(H)$ that is $\Psi^N$-shattered by $H$. Construct a complete $\Psi^{bin}$-labeled binary tree $T'$ of the same depth as follows.  Label the nodes of $T'$ with examples exactly as in $T$. Consider a node in $T$ and the collapsing map $\phi_{w,w'}$ that labels the node. There is at least one bit in which the binary expansions of $w$ and $w'$ vary. Let this bit be the $i^{th}$ bit. Then, label the corresponding node in $T'$ with the collapsing map $\phi_i$, which maps every label to the $i^{th}$ bit of its binary expansion. Consider the two edges emanating from this node. If the $i^{th}$ bit of the binary expansion of $w$ is $0$, then in $T'$, label the edge that was labeled $0$ in $T$ by $0$ and the other by $1$. Else, label the edge that was labeled $0$ in $T$ by $1$ and the other by $0$. Perform this transformation for every labeled node in $T$ to obtain a corresponding labeled node in $T'$. The leaves of $T'$ will remain unlabeled.

Then, $T'$ is $\Psi^{bin}$-shattered by $H$. This gives that $\Psi^N_{LD}(H) \leq \Psi^{bin}_{LD}(H)$. The second inequality follows because $\Psi^{bin} \subseteq \Psi^B$, and so a $\Psi^{bin}$-labeled tree that is $\Psi^{bin}$-shattered by $H$ is automatically also a $\Psi^B$-labeled tree that is  $\Psi^{B}$-shattered by $H$.
\end{proof}


\ifnum\neurips=0
\section{Proof of Theorem~\ref{theorem:lsd}}\label{sec:lsdproof}
\else
\section{Proof of theorem~\ref{theorem:lsd}}\label{sec:lsdproof}
\fi
In this section, we use the concept of $\Psi$-Littlestone dimension to prove Theorem~\ref{theorem:lsd}. 

\ifnum\neurips=0
\subsection{Sauer's Lemma for Multiclass Littlestone Dimension}
\else
\subsection{Sauer's lemma for multiclass Littlestone dimension}
\fi

In this section, we will describe a version of Sauer's Lemma that will suffice for our application. This argument is essentially due to Rakhlin et al. \cite{RST15}. Theorem 7 in that paper states a Sauer's lemma style upper bound for a quantity they introduce called the ``0-cover function'', for hypothesis classes with bounded ``sequential fat-shattering dimension.''  We show that this argument applies almost verbatim for hypothesis classes with bounded multiclass Littlestone dimension. 

\ifnum\neurips=0
\subsubsection{$\mathbf{0}$-Cover Function}
\else
\subsubsection{$\mathbf{0}$-Cover function}
\fi

We start by recalling the definition of 0-cover from Rakhlin et al.
\begin{definition}[output-labeled trees, input-labeled trees]
A complete output-labeled binary tree of depth $b$ with label set $[k]$ is a complete binary tree of depth $b$ such that every node of the tree is labeled with an output in $[k]$. A complete input-labeled binary tree of depth $b$ with input set $\mathcal{X}$ is a complete binary tree of depth $b$ such that every node of the tree is labeled with an input in $\mathcal{X}$.
\end{definition}
The convention we will use is that output and input-labeled binary trees have root at depth $1$ (as opposed to io-labeled trees and $\Psi$-labeled trees, where we use the convention that root has depth $0$). 
Consider a set $V$ of complete output-labeled binary trees of depth $b$ with label set $[k]$. Consider a hypothesis class $H$ consisting of functions from input space $\mathcal{X}$ to label set $[k]$. Fix a complete input-labeled binary tree $z$ of depth $b$ with input space $\mathcal{X}$ and a complete output-labeled tree $v \in V$. 
\begin{definition}
We say that a root-to-leaf path $A$ in $z$ \textbf{corresponds} to a root-to-leaf path $B$ in $v$ if for all $1 \leq i \leq b-1 $, if node $i+1$ in $A$ is the left child of node $i$ in $A$, then node $i+1$ in $B$ is the left child of node $i$ in $B$ and likewise for the case where node $i+1$ is the right child of node $i$. 
\end{definition}
\begin{definition}
Let $A$ be a root-to-leaf path in $z$ and let the the labels of the nodes in $A$ be $(x_1, \dots, x_b)$ where $x_i \in \mathcal{X}$.
The function $f \in H$ applied to $A$, denoted by $f(A)$, is the sequence $\left( f(x_1), \dots, f(x_b) \right)$.
\end{definition}
\begin{definition}[$0$-cover, \cite{RST15}]
We say that $V$ forms a \textbf{0-cover} of hypothesis class $H$ on tree $z$ if, for every function $f \in H$ and every root-to-leaf path $A$ in $z$, there exists a complete output-labeled tree $v \in V$, such that for the corresponding root-to-leaf path $B \in v$ with the labels of nodes in $B$ denoted by a tuple $C \in [k]^b$ (call this the \textbf{label sequence} of $B$), we have that $f(A) = C$.   
\end{definition}
\begin{definition}[$0$-cover function, \cite{RST15}]
Let $N(0,H,z)$ denote the size of the smallest $0$-cover of hypothesis class $H$ on tree $z$. Let $T_b^{\mathcal{X}}$ be the set of all complete input-labeled binary trees of depth $b$ with input space $\mathcal{X}$. Then, the \textbf{0-cover function} $N(0,H,b)$ of hypothesis class $H$ is defined as $\sup_{z \in T_b^{\mathcal{X}}} N(0,H,z)$.
\end{definition}
We use the convention that $N(0, H, 0) = 1$.
\subsubsection{Statement of theorem}
The following theorem is essentially Theorem 7 of Rakhlin et al.~\cite{RST15} (with multiclass Littlestone dimension in place of sequential fat shattering dimension).
\begin{theorem}\label{thm:sauer}
Let hypothesis class $H$ be a set of functions $f: \mathcal{X} \to [k]$. Let the multiclass Littlestone dimension of $H$ be $d$. Then, for all natural numbers $n \geq d$, with $d \geq 0$,
\begin{equation}\label{eq:sauer}
N(0,H,n) \leq \sum_{i=0}^d {n \choose i} k^i
\end{equation}
For all natural numbers $n \geq d$, with $d > 0$, we additionally have the following:
\begin{equation}\label{eq:sauer2}
N(0,H,n) \leq \sum_{i=0}^d {n \choose i} k^i \leq \left(\frac{ekn}{d}\right)^d.
\end{equation}
Finally, for all $d \geq 0$, for all natural numbers $n < d$, we have $N(0,H,n) \leq (k+1)^n$. 
\end{theorem}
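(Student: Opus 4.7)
The plan is to reduce all three parts of Theorem~\ref{thm:sauer} to the single unified bound
\[ N(0, H, n) \leq \sum_{i=0}^{d} \binom{n}{i} k^i \qquad \text{for all } n \geq 0, \]
where $d = MLD(H)$. The third part (for $n < d$) is then immediate, since when $n \leq d$ this sum equals $\sum_{i=0}^{n} \binom{n}{i} k^i = (k+1)^n$ by the binomial theorem (the terms with $i > n$ vanish). The second part follows from the comparison $\binom{n}{i} k^i \leq \binom{nk}{i}$ (valid for $k \geq 1$ by comparing the $i$-fold falling factorials term by term) combined with the classical Sauer--Shelah estimate $\sum_{i=0}^{d} \binom{m}{i} \leq (em/d)^d$ applied at $m = nk$.

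The unified bound I would prove by strong induction on $n$, adapting the argument of Rakhlin et al. The base case $n = 0$ matches via the convention $N(0, H, 0) = 1$. In the inductive step with $n \geq 1$, the subcase $d = 0$ is immediate since $MLD(H) = 0$ forces every pair of functions in $H$ to agree pointwise on $\mathcal{X}$, so a single output-labeled tree is a $0$-cover. Otherwise, with $d \geq 1$, fix any input-labeled tree $z$ of depth $n$ with root input $x$ and subtrees $z^L, z^R$ of depth $n-1$, and partition the class as $H = \bigcup_{\ell \in [k]} H_\ell$ with $H_\ell = \{ f \in H : f(x) = \ell\}$ and $d_\ell = MLD(H_\ell) \leq d$.

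The combinatorial heart of the proof is the claim that at most one label $\ell^* \in [k]$ satisfies $d_{\ell^*} = d$, so every other $\ell$ has $d_\ell \leq d-1$. This is proved by contradiction: if distinct $\ell_1, \ell_2$ both achieve $d_{\ell_j} = d$, pick io-labeled trees $T_1, T_2$ of depth $d$ shattered by $H_{\ell_1}, H_{\ell_2}$, and assemble a depth-$(d+1)$ io-labeled tree with root $x$, edges labeled $\ell_1, \ell_2$, and subtrees $T_1, T_2$; every root-to-leaf sequence is realized by some function in $H_{\ell_1} \cup H_{\ell_2} \subseteq H$, contradicting $MLD(H) = d$. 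Next, I would build a $0$-cover of $H$ on $z$ of size at most $\sum_\ell N(0, H_\ell, n-1)$ by taking optimal covers $V_\ell^L, V_\ell^R$ of $H_\ell$ on $z^L$ and $z^R$ respectively, and for each $\ell$ pairing their entries (padding the shorter list by repetition of its last element) into $\max(|V_\ell^L|, |V_\ell^R|)$ output-labeled trees rooted at $\ell$ with paired $(v^L, v^R)$ as subtrees. A direct verification shows that for any $f \in H$ and root-to-leaf path $A$ in $z$, setting $\ell = f(x)$, the paired tree whose relevant subtree covers $f$ on the corresponding subpath in $z^L$ or $z^R$ has its root-to-leaf label sequence equal to $f(A)$. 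Invoking the inductive hypothesis on $H_{\ell^*}$ (bounded by $\sum_{i=0}^{d} \binom{n-1}{i} k^i$) and on each of the remaining $k$ labels (bounded by $\sum_{i=0}^{d-1} \binom{n-1}{i} k^i$), and applying Pascal's rule $\binom{n-1}{i} + \binom{n-1}{i-1} = \binom{n}{i}$, closes the induction. The main obstacle is getting this cover construction tight: a naive disjoint-union $V_\ell^L \sqcup V_\ell^R$ would incur a factor-of-two blowup that breaks the Pascal identity, so the pairing trick is essential for the proof to close.
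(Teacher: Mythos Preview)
Your proposal is correct and follows essentially the same route as the paper: partition $H$ by the root label, show at most one subclass retains full multiclass Littlestone dimension, stitch left/right subtree covers via a pairing (the paper uses a surjection $\xi$, which is the same device), and close with Pascal's rule. The only cosmetic differences are that you run a single induction on $n$ with a unified bound valid for all $n$ (so the $n<d$ case falls out of the binomial identity rather than a separate base case), and you reach the $(ekn/d)^d$ estimate via $\binom{n}{i}k^i \le \binom{nk}{i}$ plus the classical Sauer--Shelah bound instead of the paper's direct manipulation.
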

\begin{proof}
Firstly, observe that for all $n \geq d$, $d>0$,
\begin{align*}
    \sum_{i=0}^d {n \choose i} k^i 
    & = \sum_{i=0}^d {n \choose i} k^i \left(\frac{n}{d}\right)^i \left(\frac{d}{n}\right)^i \\
    & \leq \left(\frac{kn}{d}\right)^d \sum_{i=0}^d {n \choose i} \left(\frac{d}{n}\right)^i \\
    & \leq \left(\frac{kn}{d}\right)^d \left(1+\frac{d}{n}\right)^n \\
    & \leq \left(\frac{ekn}{d}\right)^d.
\end{align*}
This proves the second inequality in expression~\ref{eq:sauer2}.

The proof of the rest of the theorem will be by double induction on $n$ and $d$.

\paragraph{First base case $\bm{(d=0, n \geq 1)}$:} Observe that when $d=0$, the class $H$ consists of only a single distinct function. Call this function $f$. Then, for any complete, input-labeled binary tree $z$ of depth $n$ on input set $\mathcal{X}$, create a complete, output-labeled binary tree $v$ of depth $n$ on output set $[k]$ as follows: for every node in $z$ labeled by input $x \in \mathcal{X}$, label the corresponding node in $v$ by $f(x)$. Then the set consisting of just one tree $v$ is a $0$-cover for $z$. Thus we have that $N(0,H,n) = 1 = \sum_{i=0}^0  {n \choose i} k^i $, verifying this base case.

\paragraph{Second base case $\bm{(0 < n \leq d)}$:} 
We will prove a stronger statement; we will show that for any complete input-labeled binary tree $z$ of depth $n$ (for any natural number $n$), there is a $0$-cover of hypothesis class $H$ on $z$ of size $(k+1)^n$. This also proves the final part of the theorem corresponding to $n < d$. We start by observing that there are $(k+1)^n$ sequences of $n$ elements from $[k]$. For every such sequence, create a complete output-labeled binary tree $v$ of depth $n$ as follows: label all nodes at depth $i$ by the $i^{th}$ element of the sequence. In this way, we create $(k+1)^n$ different trees. This set of trees $V$ will form a $0$-cover for $H$ on $z$. To see this, fix a root-to-leaf path $A$ in $z$ and a function $f \in H$ and consider the sequence  $f(A) \in [k]^n$. Then by construction, there is a tree $v \in V$ such that every root-to-leaf path $B$ in $v$ has label sequence $f(A)$. This implies that $V$ is a $0$-cover of hypothesis class $H$ on $z$. Thus, we have that $N(0,H,n) \leq (k+1)^n = \sum_{i=0}^n {n \choose i} k^i \leq \sum_{i=0}^d {n \choose i} k^i$ for $n \leq d$, verifying the second base case.

\paragraph{Inductive case:} Fix a $d \geq 1,n \geq 2$ such that $n>d$ (note that the base cases handle other values of $d$ and $n$). Assume that the theorem is true for all pairs of values $(d', n')$ where $d' \leq d$ and $n' \leq n-1$. We will prove it is true for values $d,n$. Consider a complete, input-labeled binary tree $z$ of depth $n$ with input set $\mathcal{X}$. Let the root node of $z$ be labeled by example $x_r \in \mathcal{X}$. Divide hypothesis class $H$ into $k+1$ subclasses $H^0, \dots, H^k$ as follows, 
\[H^i = \{f \in H: f(x_r) = i\}.\]
That is, $H^i$ is the subclass of functions in $H$ that output label $i$ on example $x_r$. 
\begin{claim}\label{claim:MLDsub}
There exists at most one $i \in [k]$ such that $H^i$ has multiclass Littlestone dimension $d$. Every other subclass has multiclass Littlestone dimension at most $d-1$. 
\end{claim}
\begin{proof}
 Assume by way of contradiction that there are two hypothesis classes $H^i$ and $H^j$ that both have multiclass Littlestone dimension $d$. Then there are complete io-labeled, binary trees $T^L$ and $T^R$ of depth $d$ with input set $\mathcal{X}$ and output set $[k]$ that are shattered by $H^i$ and $H^j$ respectively. Construct a complete io-labeled binary tree $T$ of depth $d+1$ with input set $\mathcal{X}$ and output set $[k]$ as follows: set the root node to be $x_r$, and label the two edges emanating from the root by $i$ and $j$ respectively. Set the left sub-tree of the root to be $T^L$ and the right sub-tree to be $T^R$. Then $H$ shatters $T$ since $H^i$ and $H^j$ shatter $T^L$ and $T^R$ respectively. However, this is a contradiction since $H$ has multiclass Littlestone dimension $d$, and the shattered tree $T$ has depth $d+1$.  
\end{proof}
Next, consider any hypothesis class $H^i$ with multiclass Littlestone dimension equal to $d$. If no such class exists, simply choose the class $H^i$ with maximum multiclass Littlestone dimension (note that $MLD(H^i)$ will be upper bounded by $d$). Let $z^L$ and $z^R$ be the left and right sub-trees of depth $n-1$ of the root of $z$. By the inductive hypothesis, there are $0$-covers $V^L$ and $V^R$ of $H^i$ on $z^L$ and $z^R$ each of size at most $\sum_{i=0}^{d} {n-1 \choose i} k^i$.  We will now stitch together trees from $V^L$ and $V^R$ to create a set of trees $V$ that will form a $0$-cover of $H^i$ on $z$. Informally, we do this as follows. Every tree in $V$ will have root labeled by $i$. The left sub-tree of the root will be assigned to be some tree from $V^L$ and the right sub-tree of the root will be assigned to be some tree from $V^R$.  

Formally, without loss of generality, let $|V^L| \geq |V^R|$. Then, there exists a surjective function $\xi$ from $V^L$ to $V^R$. For every tree $v^L \in V^L$, construct a tree in $V$ as follows, the root will be labeled by $i$, the left subtree will be $v^L$ and the right subtree will be labeled by $\xi(v^L)$. Clearly, the size of $V$ is equal to the size of $V^L$, which is at most $\sum_{i=0}^{d} {n-1 \choose i} k^i$. 
Next, we argue that the set $V$ is a $0$-cover for $H^i$ on $z$. 
\begin{claim}
$V$ is a $0$-cover of $H^i$ on $z$.
\end{claim}
\begin{proof}
Fix a root-to-leaf path $A$ in $z$ and fix a function $f \in H^i$.  Let $A = (x_r, A|_{2:n})$ where $A|_{2:n}$ is the root-to-leaf path omitting the root. Consider $f(A) = (f(x_r), f(A|_{2:n})) = (i, f(A|_{2:n}))$. Note that $A|_{2:n}$ is a root-to-leaf path of either $z^L$ or $z^R$. Without loss of generality, assume it is a root-to-leaf path of $z^L$. Hence, there exists a tree $v^L$ in $V^L$ such that the root-to-leaf path $B$ in $v^L$ corresponding to root-to-leaf path $A|_{2:n}$ in $z^L$ has label sequence $C$ such that $f(A|_{2:n}) = C$. (This is true since $V^L$ is a $0$-cover of $H^i$ on $z^L$). By the construction of $V$, there is a tree $v$ in $V$ that has a root-to-leaf path $B'$ (with label sequence $C'$) corresponding to root-to-leaf path $A$ in $z$ such that $f(A) = (i, f(A|_{2:n})) = (i,C) = C'$. Hence, we have that $V$ is a $0$-cover of $H^i$ on $z$. 
\end{proof}
A very similar argument can also be used to construct $0$ covers of size at most $\sum_{i=0}^{d-1} {n-1 \choose i} k^i$ for  hypothesis classes $H^i$ with multiclass Littlestone dimension at most $d-1$. 

We now use the fact that the covers we constructed for $H^0, \dots, H^k$ can be combined into a cover for $H$.
\begin{claim}\label{claim:unioncover}
Let hypothesis class $G = G^1 \cup G^2 \cup \cdots \cup G^l$ for some positive integer $l$. Let $V^1, \dots, V^l$ be $0$-covers of $G^1, \dots, G^l$ on $z$. Then $V' = V^1 \cup \cdots \cup V^l$ is a $0$-cover of $G$ on $z$.
\end{claim}
\begin{proof}
Consider $f \in G$. Then, there exists an $i \in \{1,\dots,l\}$ such that $f \in G^i$. Thus, for every root-to-leaf path in $z$, there is a tree $v \in V^i$ that is consistent with $f(z)$. By the definition of $V'$, $v \in V'$ as well. This argument works for any function $f \in G$. Hence $V'$ is a $0$-cover of $G$ on $z$.
\end{proof}

Using Claim~\ref{claim:unioncover}, we can construct a $0$-cover $V'$ for $H$ on $z$ by taking the union of the $0$-covers for $H^0, \dots, H^k$ on $z$. This means that the size of $V'$ is less than or equal to the sums of sizes of the $0$-covers for $H^0, \dots, H^k$ on $z$. Additionally, by Claim~\ref{claim:MLDsub}, we have that at least $k$ of the hypothesis classes $H^i$ have multiclass Littlestone dimension at most $d-1$. This implies that
\[|V'| \leq  k \sum_{i=0}^{d-1} {n-1 \choose i} k^i + \sum_{i=0}^{d} {n-1 \choose i} k^i.\]
Finally, we simplify this using the following claim:
\begin{claim}
For all natural numbers $k,n$ and for all integers $d$ such that $0 < d < n$,
\[
 \sum_{i=0}^{d} {n-1 \choose i} k^i+ k \sum_{i=0}^{d-1} {n-1 \choose i} k^i = \sum_{i=0}^{d} {n \choose i} k^i.
\]
\end{claim}
\begin{proof}
Regrouping the terms in the sums, and using the fact that ${n \choose i } = {n-1 \choose i-1 } + {n-1 \choose i }$, we get that
\begin{align*}
    \sum_{i=0}^{d}& {n-1 \choose i} k^i+ k \sum_{i=0}^{d-1} {n-1 \choose i} k^i \\
    &=   \left({n-1 \choose d} k^d +  {n-1 \choose d-1}k \cdot k^{d-1} \right) + \left({n-1 \choose d-1} k^{d-1} +  {n-1 \choose d-2}k \cdot k^{d-2} \right) + \dots \\
    & \qquad \dots + \left({n-1 \choose 1} k^{1} + {n-1 \choose 0} k \cdot k^{0} \right) + {n \choose 0}k^0  \\
  &  = {n \choose d}k^d + {n \choose d-1}k^{d-1} + \cdots + {n \choose 0}k^0 \\
  &  = \sum_{i=0}^{d} {n \choose i} k^i.
\end{align*}
\end{proof}
The argument applies for any complete binary labeled tree $z$ of depth $n$ with input space $\mathcal{X}$, which means that the $0$-cover number $N(0,H,n) \leq \sum_{i=0}^{d} {n \choose i} k^i$. This completes the inductive argument and proves the theorem.
\end{proof}

\ifnum\neurips=0
\subsection{Lower Bound for $0$-Cover Function}\label{sec:0cov}
\else
\subsection{Lower bound for $0$-cover function}\label{sec:0cov}
\fi

To complement the upper bound given by our variant of Sauer's Lemma, we give a lower bound showing that the 0-cover function must grow exponentially in the $\Psi^B$-Littlestone dimension of a class.
\begin{lemma}\label{lemma:lowerb}
Let the $\Psi^B$-Littlestone dimension of hypothesis class $H$ be $d$. Then, \[N(0,H,d) \ge 2^d.\]
\end{lemma}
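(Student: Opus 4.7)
The plan is to prove the lemma by strong induction on $d$, strengthened to the following claim: if $T$ is a complete $\Psi^B$-labeled binary tree of depth $d$ that is $\Psi^B$-shattered by $H$, and $z$ is the input-labeled binary tree of depth $d$ obtained from $T$ by forgetting the collapsing map at each internal node, then $N(0,H,z) \geq 2^d$. Since $N(0,H,d) = \sup_z N(0,H,z)$, this immediately yields the lemma. The base case $d=0$ is immediate from the convention $N(0,H,0) = 1 = 2^0$.

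For the inductive step, let $T$ have root labeled by example $x_r$ and collapsing map $\phi_r$, and let $T_L, T_R$ be its left and right subtrees, with corresponding stripped input-labeled subtrees $z_L, z_R$. Define $H_0 = \{f \in H : \phi_r(f(x_r)) = 0\}$ and $H_1 = \{f \in H : \phi_r(f(x_r)) = 1\}$. The $\Psi^B$-shattering of $T$ by $H$ directly implies that $T_L$ is $\Psi^B$-shattered by $H_0$ and $T_R$ by $H_1$, by considering paths extending through the left or right edge at the root. The inductive hypothesis then gives $N(0,H_0,z_L) \geq 2^{d-1}$ and $N(0,H_1,z_R) \geq 2^{d-1}$.

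Now fix any 0-cover $V$ of $H$ on $z$ and partition it by the $\phi_r$-value of each tree's root label: let $V_0 = \{v \in V : \phi_r(\text{root}(v)) = 0\}$ and $V_1 = \{v \in V : \phi_r(\text{root}(v)) = 1\}$, which are disjoint. For any $f \in H_0$ and any root-to-leaf path $A$ in $z_L$, extend $A$ by the left edge at the root of $z$ to a path $A'$ in $z$; since $V$ covers $f$ on $A'$, some $v \in V$ has a corresponding root-to-leaf path with label sequence $(f(x_r), f(A))$. The root of $v$ is labeled $f(x_r)$, which lies in $\phi_r^{-1}(0)$, so $v \in V_0$; the remainder of the corresponding path lies in $v_L$ and realizes $f(A)$. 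Thus $\{v_L : v \in V_0\}$ is a 0-cover of $H_0$ on $z_L$, giving $|V_0| \geq N(0,H_0,z_L) \geq 2^{d-1}$. A symmetric argument using $z_R$ yields $|V_1| \geq 2^{d-1}$, and combining gives $|V| \geq |V_0| + |V_1| \geq 2^d$.

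The only real subtlety is that $\phi_r$ maps to $\{0,1,*\}$, so some trees in $V$ may have root labels sent to $*$ and lie in neither $V_0$ nor $V_1$; these contribute nothing to our bound and are simply discarded. The heart of the argument is the observation that a tree $v \in V$ can cover some $f \in H_b$ only if $\text{root}(v) = f(x_r) \in \phi_r^{-1}(b)$, which cleanly splits the trees responsible for covering $H_0$ from those responsible for covering $H_1$ and lets the two inductive bounds add rather than compete.
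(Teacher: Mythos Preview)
Your proof is correct but proceeds by a genuinely different route from the paper's. The paper gives a direct, non-inductive argument: from a $\Psi^B$-shattered tree $T$ of depth $d$ it selects one realizing function $f_A$ for each of the $2^d$ root-to-leaf paths, builds the input-labeled tree $T'$ by stripping leaves and collapsing maps, and then argues pairwise that no single output-labeled tree in a $0$-cover of $T'$ can simultaneously cover $f_{A_1}$ and $f_{A_2}$ for $A_1 \neq A_2$ --- the reason being that at the node $(x_{div},\phi_{div})$ where the two paths diverge, $\phi_{div}(f_{A_1}(x_{div})) \neq \phi_{div}(f_{A_2}(x_{div}))$, hence $f_{A_1}(x_{div}) \neq f_{A_2}(x_{div})$, so that node would need two distinct labels.

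Your inductive splitting by $\phi_r^{-1}(0)$ versus $\phi_r^{-1}(1)$ is essentially a recursive packaging of the same divergence observation applied only at the root, with the recursion handling deeper divergences. It is arguably cleaner: it avoids the paper's case split on whether the divergence node happens to be a leaf of $T'$, and it never needs to name the $2^d$ functions explicitly. The paper's version, on the other hand, makes the witnessing family of functions visible, which is conceptually useful. One small point worth tightening in your write-up: when you say $z$ is obtained ``by forgetting the collapsing map at each internal node,'' you should also say you drop the unlabeled leaf layer (as the paper does), so that $z$ is genuinely an input-labeled tree of depth $d$ under the paper's convention; the inductive step for $d=1$ then goes through with $z_L,z_R$ degenerate and $N(0,H_b,0)=1$.
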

\begin{proof}
The case where $d = 0$ follows trivially from the convention that $N(0,H,0 ) = 1$. Hence, we consider $d>0$.

As a reminder, we note that the convention for input-labeled and output-labeled trees is that the root is at depth $1$, whereas the convention for $\Psi$-labeled trees is that the root is at depth $0$.

Since $H$ has $\Psi^B$-Littlestone dimension $d$, there is a complete $\Psi^B$-labeled binary tree $T$ of depth $d$ that is $\Psi^B$-shattered by $H$. Since $T$ is $\Psi^B$-shattered by $H$, for every root-to-leaf path $A$ of $T$, there is at least one function $f \in H$ that realizes that path. For each root-to-leaf path $A$, choose such a function and denote it by  $f_A$.

Construct an input labeled tree $T'$ of depth $d$ as follows: simply remove the unlabeled leaves of $T$, the labels on the edges and the collapsing map assigned to each node. Note that $T'$ has depth $d$ by the convention used that input-labeled trees have roots at depth $1$. Observe that the process of going from $T$ to $T'$ removes a layer of leaf nodes from $T$, and therefore each root-to-leaf path $B$ in $T'$ corresponds to two root-to-leaf paths, say $B_1$ and $B_2$, in $T$. Then, map functions $f_{B_1}$ and $f_{B_2}$ both to $B$. Hence, each root-to-leaf path of $T'$ has two functions mapped to it. See Figure~\ref{figure:psitoinput} for a visual depiction of this.
\begin{figure}[ht]
\centering
\includegraphics[width=0.9\linewidth]{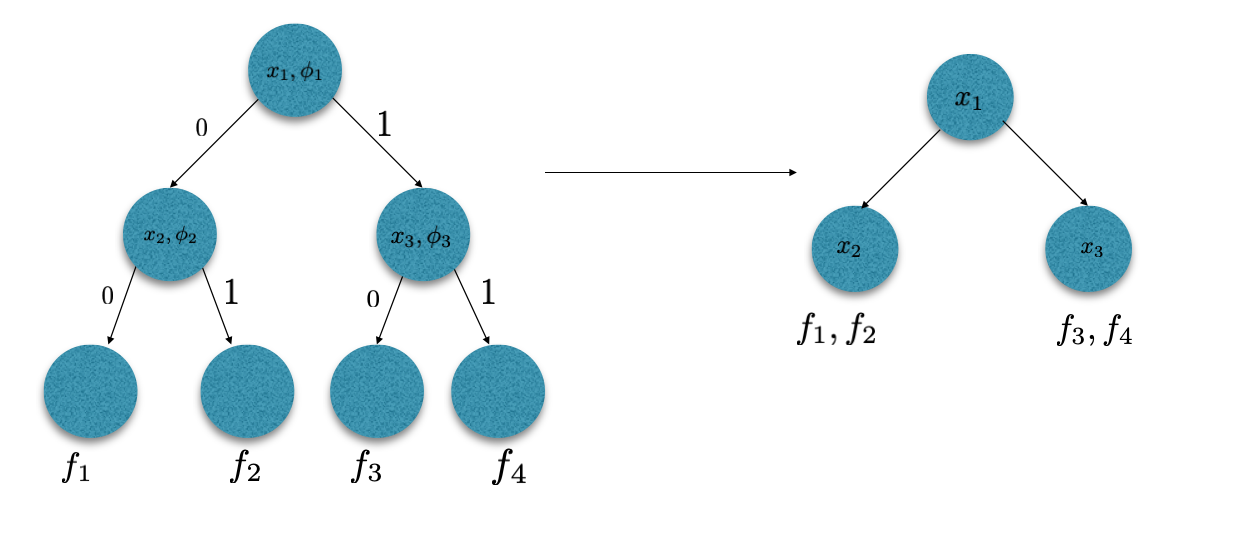}
\caption{The tree on the left represents the $\Psi^B$-labeled tree $T$ and the tree on the right represents the input-labeled tree $T'$ created from $T$. Functions $f_1, \dots, f_4$ are each assigned to a different root-to-leaf path in $T$ as shown, whereas in $T'$, two functions are assigned to each root-to-leaf path} \label{figure:psitoinput}
\centering
\end{figure}
We argue that any 0-cover of $T'$ must contain a distinct tree for each of the $2^d$ functions mapped to root-to-leaf paths of $T'$.

Fix any 2 root-to-leaf paths $A_1$ and $A_2$ in the complete $\Psi^B$-labeled tree $T$. Then there exists some node in $T$ where the two paths diverge. Let that node be labeled by example $x_{div}$ and collapsing map $\phi_{div}$. Then we have that $\phi_{div}\left(f_{A_1}(x_{div}) \right) \neq \phi_{div}\left(f_{A_2}(x_{div}) \right)$ since $f_{A_1}$ and $f_{A_2}$ realize $A_1$ and $A_2$ respectively. However, since $\phi_{div}$ is a function, this implies that $f_{A_1}(x_{div}) \neq f_{A_2}(x_{div})$. 

Now, consider the input-labeled tree $T'$. The node at which $A_1$ and $A_2$ diverge is in $T'$ as well. 

First, consider the case where it is a leaf node of $T'$. This means that both $f_{A_1}$ and $f_{A_2}$ are mapped to the same root-to-leaf path in $T'$. Hence, since  $f_{A_1}(x_{div}) \neq f_{A_2}(x_{div})$, we cannot construct a single output-labeled tree that covers both functions since then a single leaf node would need two labels.

On the other hand, if the node at which $A_1$ and $A_2$ diverge is not a leaf node of $T'$, then $f_{A_1}$ and $f_{A_2}$ are mapped to different root-to-leaf paths $B_1$ and $B_2$ in $T'$, which diverge at a node labeled by $x_{div}$. Hence, if we were trying to construct a single output-labeled tree $v$ such that the root-to-leaf paths $C_1$ and $C_2$ in $v$ corresponding to $B_1$ and $B_2$ in $T'$ had label sequences $s_1$ and $s_2$ such that $s_1 = f_{A_1}(B_1)$ and $s_2 = f_{A_2}(B_2)$, then at the node at which $B_1$ and $B_2$ diverge, we would need two different labels in $v$, which is impossible. 

This argument works for any pair $A_1$ and $A_2$ of root-to-leaf paths in $T$, and shows that the functions $f_{A_1}$ and $f_{A_2}$ require different trees in the $0$-cover of $T'$. The number of root-to-leaf paths in $T$ is $2^d$. Hence, we have that $N(0,H,d) \geq 2^d$. 
\end{proof}

\ifnum\neurips=0
\subsection{Putting the Pieces Together}
\else
\subsection{Putting the pieces together}
\fi
In this section, we prove Theorem~\ref{theorem:lsd} using the techniques we have built up.

Let the $\Psi^B$-Littlestone dimension of $H$ be $d_B$. The theorem is trivially true for $d_B = 0$, since by Lemmas~\ref{lem:olnatdim} and~\ref{lem:psilsdrel}, multiclass Littlestone dimension of $H$ (which is $d$) is a lower bound for $d_B$. 

Hence, let $d_B > 0$. By Lemmas~\ref{lem:olnatdim} and~\ref{lem:psilsdrel}, we have that $d \leq d_B$. Additionally, using Lemma~\ref{lemma:lowerb} and Theorem~\ref{thm:sauer} with $n = d_B \geq d$, we have that
\begin{equation}\label{eq:finalthm}
    2^{d_B} \leq N(0,H,d_B) \leq \left(\frac{ekd_B}{d}\right)^d.
\end{equation}

We will use the fact that for all positive real numbers $x,y$, $\ln x \leq xy - \ln(ey) $. Fix some constant $y < \ln 2$ to be chosen later. Removing the middle man from equatiion~\ref{eq:finalthm} and simplifying, we can write the following chain of inequalities.

\begin{align*}
    2^{d_B} \leq \left(\frac{ekd_B}{d}\right)^d 
    & \implies d_B \ln 2 \leq d\left(\ln\left(\frac{d_B}{d}\right) + \ln (ek) \right) \\
    & \implies d_B \ln 2 \leq d\left(y \cdot \frac{d_B}{d} - \ln(ey) + \ln (ek) \right) \\
    & \implies d_B(\ln 2 - y) \leq d\ln \left(\frac{k}{y}\right) \\
    & \implies d_B \leq \frac{1}{\ln2 - y} d\ln \left(\frac{k}{y}\right).
\end{align*}
Setting $y = \frac{1}{5} < \ln 2$, we get that
\[d_B \leq 6d \ln(k+1).\]
Next, by Lemmas~\ref{lem:lsbin} and~\ref{lem:psilsdrel}, we get that 
\[\max_{i=1, \dots, \log(k+1)} d_i \leq \Psi^{bin}_{LD}(H) \leq d_B \leq 6d \ln (k+1).\]
This proves the theorem.

\section{Tightness of Theorem~\ref{thm:online-reduction}}\label{sec:tightness}
In this section, we show that Theorem~\ref{thm:online-reduction} is tight up to constant factors.

\begin{theorem}
For all integers $k \geq 1, d \geq 0$, there exists a hypothesis class $H$ with label set $[k]$ such that the multiclass Littlestone dimension of $H$ is $d$ and the maximum Littlestone dimension over the binary restrictions of $H$ is at least $\frac{d}{10} \log_2(k+1)$. 
\end{theorem}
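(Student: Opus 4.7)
The plan is a product construction: build $H$ as $d$ disjoint parallel copies of a base class $H_0$ of multiclass Littlestone dimension one, so that $\mathrm{MLD}$ scales linearly in $d$ while the Littlestone dimension of the least-significant-bit restriction scales like $d\cdot b$ with $b=\lfloor\log_2(k+1)\rfloor$. The case $d=0$ is trivial ($H$ a singleton), so assume $d,b\ge 1$.

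For the base class, let $\mathcal{X}_0=\bigcup_{j=0}^{b-1}\{0,1\}^j$, viewed as the set of internal nodes of a perfect depth-$b$ binary tree. For each leaf $s\in\{0,1\}^b$, define $h_s:\mathcal{X}_0\to\{0,\dots,2^b-1\}\subseteq[k]$ so that, for any node $t$ of depth $j=|t|$, the value $h_s(t)$ is the $b$-bit integer whose least significant bit equals $s_{j+1}$ and whose remaining bits are some fixed permutation of the other bits of $s$; concretely, any bit-permutation that moves coordinate $j+1$ of $s$ to the LSB position works. Let $H_0=\{h_s:s\in\{0,1\}^b\}$. Two observations follow directly: (i) on every $t\in\mathcal{X}_0$ the $2^b$ values $\{h_s(t)\}_s$ are all distinct, hence any root split at $t$ leaves at most one function on each side and rules out depth-$2$ multiclass shattering, giving $\mathrm{MLD}(H_0)\le 1$; (ii) the LSB of $h_s(t)$ is exactly the direction of the next step on the root-to-$s$ path, so $(H_0)|_1$ shatters the natural depth-$b$ tree whose node at position $t$ is labeled by $t$, witnessing $\mathrm{LD}((H_0)|_1)\ge b$.

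For general $d\ge 1$, take disjoint copies $\mathcal{X}_1,\dots,\mathcal{X}_d$ of $\mathcal{X}_0$ and set $H=\{h_{\vec s}:\vec s\in(\{0,1\}^b)^d\}$ with $h_{\vec s}(x)=h_{s_j}(x)$ when $x\in\mathcal{X}_j$. The binary restriction $H|_1$ shatters a depth-$db$ tree obtained by stacking $d$ copies of the base shatter tree, one per $\mathcal{X}_j$: each root-to-leaf path determines a tuple $\vec s$, and $h_{\vec s}$ realizes it, so $\mathrm{LD}(H|_1)\ge db$. For the multiclass Littlestone dimension of $H$ I would prove both directions. The lower bound $\mathrm{MLD}(H)\ge d$ is the explicit shatter tree that at level $j$ splits on a point of $\mathcal{X}_j$ using any two labels as edges, which is always possible because $s_j\mapsto h_{s_j}(x)$ is a bijection. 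The upper bound $\mathrm{MLD}(H)\le d$ is by induction on $d$: a split at $x\in\mathcal{X}_j$ with two distinct edge labels pins $s_j$ to a fixed value in each branch, so the subclass realizing each branch is isomorphic to the $(d{-}1)$-fold product on the remaining $\mathcal{X}_{j'}$'s and, by the inductive hypothesis, has MLD at most $d-1$.

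Combining gives $\mathrm{MLD}(H)=d$ and $\mathrm{LD}(H|_1)\ge db\ge\frac{d}{10}\log_2(k+1)$, using that $b+1>\log_2(k+1)$ and hence $b\ge\log_2(k+1)/10$ whenever $b\ge 1$. The main technical subtlety is the upper bound $\mathrm{MLD}(H)\le d$, whose proof hinges on the ``all values distinct on each base point'' property of the construction: this is what forces any single split on $\mathcal{X}_j$ to consume exactly one degree of freedom and reduce the problem to the smaller product.
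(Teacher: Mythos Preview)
Your proof is correct and follows the same high-level strategy as the paper --- build a base class with $\mathrm{MLD}=1$ whose LSB restriction has Littlestone dimension $\Theta(\log(k+1))$, then amplify by taking $d$ disjoint copies --- but the two ingredients are realized differently.

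For the base class, the paper uses threshold functions $f_t(x)=(g_t(x),t)$ over a domain of size roughly $k/2$; the label already contains the parameter $t$, so $\mathrm{MLD}(F)=1$ is immediate, and the first binary restriction is the class of thresholds, whose Littlestone dimension is classically $\Omega(\log k)$. Your base class instead indexes functions by leaves of a depth-$b$ tree and sets $h_s(t)$ to a bit-permutation of $s$ with $s_{|t|+1}$ in the LSB; injectivity of $s\mapsto h_s(t)$ replaces the ``label determines function'' argument, and the LSB restriction is shattered by the explicit tree rather than by appeal to the thresholds lower bound. Both work; yours avoids citing the threshold Littlestone-dimension result.

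For the amplification step, the paper bounds $\mathrm{MLD}(H')\le \ell\cdot\mathrm{MLD}(H)$ by running $\ell$ parallel copies of the Standard Optimal Algorithm and invoking the equivalence between $\mathrm{MLD}$ and optimal mistake bound. Your induction instead exploits the special structure of your base class (that a single split on a point of $\mathcal{X}_j$ pins $s_j$ uniquely and hence collapses one factor). This is more elementary --- no online-learning machinery --- but less modular: the paper's amplification applies to any base class, whereas yours relies on the bijectivity property you built in. Either way the upper bound goes through, and the rest of the argument (stacking shatter trees, the floor estimate $b\ge\tfrac{1}{10}\log_2(k+1)$) matches.
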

\begin{proof}
First, we will deal with trivial cases. When $d=0$, and $k \geq 1$, observe that any hypothesis class $H$ with $MLD(H)=d=0$ consists of a single function, which means each of the binary restrictions consists of a single function and the maximum Littlestone dimension over the binary restrictions is $0$. 

Next, when $k = 1$, and $d \geq 0$, any hypothesis class $H$ with multiclass Littlestone dimension $d$ has only one binary restriction, and $H|_1 = H$, which means that the multiclass Littlestone dimension of $H$ is equal to the Littlestone dimension of $H|_1$. For $k = 2,3,4$, $d \geq 0$ we can consider any binary hypothesis class $H$, and consider it as a multi-valued hypothesis class, with no functions in the class ever outputting the labels $2,3,4$. The multiclass Littlestone dimension in this case remains unchanged, and for $k=2,3$ the second binary restriction $H|_2$ is equal to $H$, while for $k=4$, the third binary restriction $H|_3$ is equal to $H$. Hence, the maximum Littlestone dimension over the binary restrictions also remains unchanged. Additionally $\log(k+1) \leq \log(5) < 10$, so the theorem is true in these cases as well.

Thus, it suffices to consider $k \geq 5$, $d \geq 1$. The proof strategy will start by proving the theorem for the case where $d=1$ and $k \geq 5$. We will then create an ``amplified'' class which will prove the theorem for arbitrary $d \geq 1$ and $k \geq 5$.  

Fix $k \geq 5$. Fix $k' = k$ if $k$ is even, and $k' = k+1$ if $k$ is odd. Consider an input domain $\mathcal{X} = [k'/2 - 1]$. Consider the class of threshold functions over the domain $\mathcal{X}$, where threshold functions $g_t$ are parametrized by a number $t \in \mathcal{X}$ and $g_t(x) = 1[t \geq x]$ indicates whether the parameter $t$ is at least the input $x$.

Then, the hypothesis class $F$ consists of functions parametrized by $t \in \mathcal{X}$, such that $f_t(x)=(g_t(x), t)$. Note that the number of labels in the label set of $F$ is $2(k'/2) = k'$, (the threshold function has binary outputs and $t \in [k'/2-1]$), and hence the output set can be encoded by $[k'-1]$. It is immediate that the multiclass Littlestone dimension of $F$ is $1$, since any label completely specifies the function $f_t$. Next, observe that the binary restriction $F|_1$ corresponds to the class of thresholds over $[k'/2 - 1]$, which is known to have Littlestone dimension lower bounded by $\log(k'/4)$ \cite{littlestone}. If $k$ is odd, then the number of labels in the label set of $F$ is $k+1$; however, if $k$ is even, then the number of labels is $k$. In order to make the number of labels $k+1$ so that the output set can be encoded by $[k]$, we add an extra label that is never used. (If $k$ is odd, we keep the hypothesis class $F$ unchanged). Note that adding unused labels does not change the multiclass Littlestone dimension of a class, nor does it change the maximum Littlestone dimension over its binary restrictions. 

Let the maximum Littlestone dimension over the binary resrictions of $F$ be $d_1$. Hence, we get that $d_1 \geq MLD(F) \log(k/4) \geq \frac{1}{10} MLD(F) \log(k+1)$ for $k \geq 5$. 

Next, we ``amplify'' the gap.
\begin{claim}[Gap amplification]\label{claim:gapamp}
Let $H$ be a hypothesis class with input space $\mathcal{X}$ and output space $[k]$ such that $MLD(H) = d$ and let the maximum Littlestone dimension over its binary restrictions be at least $D$. Then, for all $\ell \geq 1$, there exists a hypothesis class $H'$ with output space $[k]$ such that $MLD(H') = \ell d$ and the maximum Littlestone dimension over the binary restrictions of $H'$ is at least $\ell D$. 
\end{claim}
\begin{proof}
The hypothesis class $H'$ will have input space $\{1,\dots,\ell \} \times \mathcal{X}$ and output space $[k]$. It is constructed as follows:
\begin{align}
H' = \{h: \exists f_1, \dots, f_{\ell} \in H \text{ such that } \forall x \in \mathcal{X}, \forall j \in \{1,\dots,\ell\}, h((j,x)) = f_j(x) \}.
\end{align}
This class was used in Ghazi et al. \cite{closureld} in a different context.

First, we argue that the multiclass Littlestone dimension of $H'$ is upper bounded by $\ell D$. To see this, consider an online learner $A$ for $H'$ that runs $\ell$ copies of the multiclass Standard Optimal Algorithm \cite{daniely1}, $A_1, \dots, A_{\ell}$ for $H$ in parallel. Note that the multiclass Standard Optimal Algorithm is a deterministic online learning algorithm that makes at most $MLD(H)$ mistakes while online learning $H$. On getting an example $(j,x)$, $A$ outputs the prediction of $A_j$ on input $x$, and updates $A_j$ based on the feedback from the environment. It is immediate that the worst-case number of mistakes made by $A$ while online learning $H'$ is upper bounded by the sum over $j$ of the worst-case number of mistakes made by each $A_j$ while online learning $H$.

By work of Daniely et al. \cite{daniely1}, it is known that the multiclass Littlestone dimension of a hypothesis class lower bounds the worst-case number of mistakes of any deterministic online learner for that hypothesis class. 

Combining the arguments of the above two paragraphs, we get that 
\begin{equation}
MLD(H') \leq \ell \cdot MLD(H) = \ell d.
\end{equation}
We postpone the proof that $MLD(H') \geq \ell d$ till later since it will follow by an argument very similar to the argument below.

Next, we argue that the maximum Littlestone dimension over the binary restrictions of $H'$ is greater than or equal to $\ell D$. We will do this by relating it to the maximum Littlestone dimension over the binary restrictions of $H$. Let the maximum Littlestone dimension over the binary restrictions of $H$ be achieved by the $i^{th}$ binary restriction $H|_i$. Let its Littlestone dimension be $d_i \ge D$. 

Next, we make the following observation.
\begin{align}\label{eq:H'|_i}
H'|_i = \{h: \exists g_1, \dots, g_{\ell} \in H|_i \text{ such that } \forall x \in \mathcal{X}, \forall j \in \{1,\dots,\ell \}, h((j,x)) = g_j(x) \}.
\end{align}
We will represent any hypothesis $h \in H'|_i$ by $\ell$ hypotheses in $H|_i$. That is we will represent hypothesis $h$ by $(g_1, \dots, g_{\ell})$ where $g_1, \dots, g_{\ell}$ are $\ell$ hypotheses guaranteed by equation~\ref{eq:H'|_i}. 

We now argue that there exists a complete io-labeled binary tree of depth $\ell D$ with input set $\{1,\dots,\ell \} \times \mathcal{X}$ that is shattered by $H'|_i$. First, note that since $d_i$ is lower bounded by $D$, there exists a complete io-labeled binary tree $T$ of depth $D$ with input set $\mathcal{X}$ that is shattered by $H|_i$. We will derive a complete io-labeled tree $T'$ of depth $\ell D$ with input set $\{1,\dots,\ell \} \times \mathcal{X}$ from $T$.

First, create $\ell$ copies of $T$ and call them $T_1, \dots, T_{\ell}$. Remove the unlabeled leaves of $T_1, \dots, T_{\ell-1}$ (keep the unlabeled leaves of $T_{\ell}$) and change the node labelings of $T_j$ from $x$ to $(j,x)$. Keep the edge labels as is.

Next, define concatenation as follows. Let $T_1$ be the top-most tree. Create tree $T'$ by letting $T_2$ be both the left and right sub-tree to every leaf of $T_1$. In $T'$, label the two edges emanating out of each leaf of $T_1$ by $0$ and $1$ respectively. We will call $T'$ the \emph{concatenation} of $T_1$ and $T_2$, which we denote by $T' = T_1 \otimes T_2$.

Construct $T''$ as follows; let $T'' = T_1 \otimes (T_2 \otimes ( \dots \otimes T_\ell))$. Observe that $T''$ is a complete io-labeled binary tree of depth $\ell D$ with input set $\{1,\dots,\ell \} \times \mathcal{X}$.  

Consider root-to-leaf paths represented as a sequence of tuples of the form $(\text{node label}, \text{edge label})$. Then, any root-to-leaf path $B$ of $T''$ is of the following form. 
\[ B = \left(((1,x_1), edge_1), \dots, ((1,x_{\ell}), edge_{\ell}), ((2,x'_1), edge'_1), \dots, ((2,x'_{\ell}), edge'_{\ell}), \dots, ((\ell,x''_1), edge''_1), \dots, ((\ell,x''_{\ell}), edge''_{\ell})\right).\]

In this sense, $B$ is the concatenation of $\ell$ root-to-leaf paths $C_1, \dots, C_{\ell}$ of $T$ where 
\[C_1 = \left((x_1, edge_1), \dots, (x_{\ell}, edge_{\ell}) \right),\]
\[C_2 = \left((x'_1, edge'_1), \dots, (x'_{\ell}, edge'_{\ell}) \right),\]
and so on. 

Next, we argue that $T''$ is shattered by $H'|_i$. First, since $T$ is shattered by $H|_i$, every root-to-leaf path $C$ of $T$ is realized by a function $g_C \in H|_i$. Now, fix any root-to-leaf path $B$ of $T''$. By the argument described in the preceding paragraph, $B$ is the concatenation of $\ell$ root-to-leaf paths $C_1, \dots, C_{\ell}$ of $T$. The individual paths $C_1, \dots, C_{\ell}$ are realized by functions $g_{C_1}, \dots, g_{C_{\ell}} \in H|_i$ respectively. By construction,  the hypothesis $h = (g_{C_1}, \dots, g_{C_{\ell}}) \in H'|_i$ realizes $B$. This argument works for every root-to-leaf path $B \in T''$, and hence we have that $H'|_i$ shatters $T''$. This proves that the Littlestone dimension of $H'|_i$ is at least $\ell D$.

Finally, a similar argument can be used to show that the multiclass Littlestone dimension of $H$, i.e. $MLD(H)$ is at least $\ell d$. Hence, since the multiclass Littlestone dimension of $H'$ is at least $\ell d$ and at most $\ell d$, we have that it is exactly equal to $\ell d$.
\end{proof}

Now, applying the gap amplification Claim~\ref{claim:gapamp} to hypothesis class $F$ with multiclass Littlestone dimension $1$ and maximum Littlestone dimension over its binary restrictions at least $\frac{1}{10} \log(k+1)$, setting $\ell = d$, we get a hypothesis class $F'$. The maximum Littlestone dimension over the binary restrictions of $F'$ is at least $\frac{d}{10} \log(k+1)$ and the multiclass Littlestone dimension of $F'$ is $d$, proving the theorem.
\end{proof}

\section{Reverse Direction}\label{sec:revdirection}
\begin{theorem}\label{thm:lsdreverse}
Let $H$ by a hypothesis class with input set $\mathcal{X}$ and output set $[k]$. Let the multiclass Littlestone dimension of $H$ be $d$. Let $H|_1, H|_2, \dots, H|_{\log(k+1)}$ be the binary restrictions of $H$. Let the Littlestone dimensions of $H|_1, H|_2, \dots, H|_{\log(k+1)}$ be $d_1, \dots, d_{\log(k+1)}$. Then, 
\[d \leq \left[\max_{i=1, \dots, \log(k+1)} d_i \right]\log(k+1).\]
\end{theorem}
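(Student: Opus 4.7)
The plan is to exhibit a deterministic online learner for $H$ whose worst-case mistake bound is at most $\log(k+1) \cdot \max_i d_i$, and then invoke the fact (noted in the proof of Claim~\ref{claim:gapamp}, via Daniely et al.~\cite{daniely1}) that the multiclass Littlestone dimension of $H$ lower-bounds the worst-case mistake count of any deterministic online learner for $H$. So the whole argument is ``bitwise parallel composition of online learners''.

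Concretely, for each $i \in \{1, \dots, \log(k+1)\}$, the class $H|_i$ has Littlestone dimension $d_i$, so it admits a deterministic online learner $A_i$ (e.g.\ the Standard Optimal Algorithm) making at most $d_i$ mistakes on any sequence realizable by $H|_i$. I would then define an online learner $A$ for $H$ as follows: on input $x$, predict the label $\tilde{y} \in [k]$ whose binary expansion is $(A_1(x), A_2(x), \dots, A_{\log(k+1)}(x))$; upon seeing the true label $y \in [k]$, feed each $A_i$ the bit $y_i$ (the $i$-th bit of $y$) as its feedback. Since the adversary's sequence is labeled by some $f \in H$, the sequence of bits fed to $A_i$ is labeled by $f_i \in H|_i$, so each $A_i$ is indeed playing a valid realizable online learning game for $H|_i$.

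For the mistake count: whenever $A$ errs on $(x,y)$, the predicted label $\tilde{y}$ differs from $y$, so their binary expansions disagree in at least one bit $i$, meaning $A_i$ also errs on that round. Charging each mistake of $A$ to at least one mistake of some $A_i$ gives
\[
\#\text{mistakes}(A) \;\le\; \sum_{i=1}^{\log(k+1)} \#\text{mistakes}(A_i) \;\le\; \sum_{i=1}^{\log(k+1)} d_i \;\le\; \log(k+1)\cdot \max_i d_i.
\]
Combining with $MLD(H) \le \#\text{mistakes}(A)$ yields the theorem.

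There is no real obstacle: the only thing to double-check is that feeding $A_i$ the bit $y_i$ even on rounds where $A_i$ predicted correctly is a legitimate update (it is, since online learners always consume the true label), and that the sequences presented to each $A_i$ are realizable in $H|_i$ (they are, by construction of the binary restrictions). The bound then follows purely by counting.
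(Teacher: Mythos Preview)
Your proposal is correct and is essentially identical to the paper's own proof: both construct the same bitwise online learner that runs the Standard Optimal Algorithm on each binary restriction $H|_i$, bound the total mistakes by $\sum_i d_i \le \log(k+1)\cdot\max_i d_i$, and finish by invoking the Daniely et al.\ lower bound on deterministic mistake complexity. The paper's write-up is slightly terser but there is no substantive difference.
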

\begin{proof}
We prove the theorem by using the online learners for the binary restrictions $H|_i$ to construct an online learner for $H$. 

The online learner $A$ for $H$ works as follows. It runs $\log(k+1)$ online learners $A_1,\dots,A_{\log(k+1)}$ simultaneously, one for each of the binary restrictions $H|_i$. On getting an example $x$, it gets the $i^{th}$ online learner to predict the $i^{th}$ bit of the label by sending it $x$. It then concatenates these predictions and applies a binary to decimal conversion to obtain a prediction for the label in $[k]$. On receiving the true label $y$ from the environment, it updates the $i^{th}$ online learner with the $i^{th}$ bit of the binary expansion of the label.

If $A_1,\dots,A_{\log k}$ are all set to be the Standard Optimal Algorithm, then by the results of Littlestone \cite{littlestone}, we have that $A_i$ makes at most $d_i$ mistakes while online learning $H|_i$. Additionally, observe that $A$ makes a mistake if and only if at least one of the learners $A_1,\dots,A_{\log k}$ makes a mistake. Hence, the number of mistakes made by $A$ is upper-bounded by $\sum_{i=1}^{\log(k+1)} d_i \leq [\max_{i=1, \dots, \log(k+1)} d_i] \log(k+1).$ By work of Daniely et al. \cite{daniely1}, it is known that the multiclass Littlestone dimension of a hypothesis class lower bounds the worst-case number of mistakes of any deterministic online learner for that hypothesis class. Hence, we get that $d \leq  [\max_{i=1, \dots, \log(k+1)} d_i] \log(k+1)$. 
\end{proof}
Next, we describe a hypothesis class for which the above result is tight. The intuition is that such a hypothesis class is one where the information about the label is spread across the bits of the label as opposed to concentrated in a few bits.  

\begin{theorem}\label{thm:lsdreversetight}
For all integers $k \geq 1, d \geq 0$, there exists a hypothesis class $H$ with label set $[k]$ such that the multiclass Littlestone dimension of $H$ is $d \log(k+1)$ and the maximum Littlestone dimension over the binary restrictions of $H$ is $d$.
\end{theorem}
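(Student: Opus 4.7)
The plan is to construct a ``product'' hypothesis class in which the $m = \log_2(k+1)$ bits of the output label are carried by $m$ independent copies of a binary class of Littlestone dimension $d$, each operating on its own disjoint slice of the input domain. This cleanly decouples the binary restrictions (each sees only one of the copies) from the multiclass structure (all $m$ copies contribute in parallel to the multiclass Littlestone tree, giving the desired multiplicative blow-up of $\log(k+1)$).

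Concretely, I would fix any binary class $G$ over an input set $\mathcal{X}$ with Littlestone dimension exactly $d$ (if $d=0$, the theorem is immediate by taking $H$ to consist of a single constant function; if $k+1$ is not a power of $2$, replace $m$ by $\lceil \log_2(k+1) \rceil$ throughout). Let the new input domain be $\mathcal{X}' = \mathcal{X} \times [m]$ and define
\[
H = \bigl\{ f_{g_1, \dots, g_m} : (g_1, \dots, g_m) \in G^m \bigr\}, \qquad f_{g_1, \dots, g_m}(x, j) = 2^{j-1}\, g_j(x),
\]
so that on input $(x,j)$ the output is the label in $[k]$ whose $j$-th bit is $g_j(x)$ and whose other bits are $0$.

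First I would verify that the maximum Littlestone dimension over the binary restrictions of $H$ is exactly $d$. The $i$-th binary restriction reads off the $i$-th bit, so $f_{g_1,\dots,g_m}|_i(x,j)$ equals $g_i(x)$ when $j=i$ and $0$ otherwise. Hence on inputs of the form $(x, i)$ the class $H|_i$ is isomorphic to $G$ (forcing $d_i \ge d$), while on inputs $(x, j)$ with $j \ne i$ it is identically $0$ and contributes no additional mistakes. Thus $d_i = d$ for every $i$, and in particular $\max_i d_i = d$.

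Next I would pin down the multiclass Littlestone dimension. The upper bound $MLD(H) \le d \log(k+1)$ is free from Theorem~\ref{thm:lsdreverse}, so the real work is the matching lower bound. I would build a shattered io-labeled tree $T_H$ of depth $dm$ by stacking $m$ relabeled copies of the depth-$d$ tree $T_G$ that $G$ shatters: let $T_G^{(j)}$ be $T_G$ with every node label $x$ replaced by $(x,j)$ and every binary edge label $b \in \{0,1\}$ replaced by $b \cdot 2^{j-1} \in \{0, 2^{j-1}\} \subseteq [k]$ (these two labels are distinct, since $2^{j-1} \le 2^{m-1} \le k$), and form $T_H$ by attaching a fresh copy of $T_G^{(j+1)}$ at each leaf of $T_G^{(j)}$ for $j = 1, \dots, m-1$. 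Any root-to-leaf path in $T_H$ splits into $m$ blocks; block $j$, after identifying its edge labels $\{0, 2^{j-1}\}$ with $\{0,1\}$, is a root-to-leaf path in $T_G$ and is therefore realized by some $g_j \in G$, so $f_{g_1,\dots,g_m}$ realizes the whole path. This gives $MLD(H) \ge dm$, and combined with the matching upper bound the theorem follows. The main obstacle is mostly notational --- keeping straight that the numeral $0$ plays two distinct roles (a bit of $G$'s output and a label in $[k]$), and checking that each $T_G^{(j)}$ still satisfies the io-labeled tree definition after relabeling --- but no genuinely new idea beyond the product construction is needed.
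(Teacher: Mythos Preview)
Your proof is correct and follows the same overall architecture as the paper's: show $\max_i d_i = d$ directly, obtain $MLD(H) \leq d\log(k+1)$ from Theorem~\ref{thm:lsdreverse}, and establish the matching lower bound by concatenating $m = \log(k+1)$ shattered depth-$d$ trees into one depth-$dm$ tree. Where you differ is in the concrete construction. The paper works over the single input space $\mathbb{N}$ and takes every $H|_i$ to be the class of $d$-point functions; it then needs the special property that one can always find a $d$-point function vanishing on any prescribed finite set, so that the $m$ shattered sub-trees can be placed on pairwise disjoint example sets $X_1,\dots,X_m$ and the realizing functions $f_i$ do not interfere across blocks. Your product construction $\mathcal{X}' = \mathcal{X} \times [m]$ with $f_{g_1,\dots,g_m}(x,j) = 2^{j-1} g_j(x)$ bakes this disjointness directly into the input domain, which makes the argument both shorter (no disjoint-support bookkeeping) and more general (any binary class $G$ of Littlestone dimension $d$ works, not just $d$-point functions). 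The paper's version has the minor aesthetic advantage that the input space does not depend on $k$.
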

\begin{proof}
Let the binary restrictions $H|_1,\dots,H|_{\log(k+1)}$ all be the class of $d$-point functions over input space $\mathbb{N}$, that is for all functions $f_i \in H|_i$, there exist $d$ distinct natural numbers $x_1,\dots,x_d$ such that $f_i(x) = 1$ if and only if $x \in \{x_1,\dots,x_d\}$. Let $H$ be the hypothesis class with binary restrictions $H|_1,\dots,H|_{\log(k+1)}$ as defined above. 
It is argued in \cite{littlestone} that the class of $1$-point functions has Littlestone dimension $1$, and a straightforward extension of this argument shows that the Littlestone dimension of each of the binary restrictions $H|_1,\dots,H|_{\log(k+1)}$ is $d$. Hence, we are left to show that the multiclass Littlestone dimension of $H$ is $d \log(k+1)$.

First, an application of Theorem~\ref{thm:lsdreverse} shows that $MLD(H) \leq d \log(k+1)$. Hence, we are left to show that $MLD(H) \geq d \log(k+1)$. To prove this, we construct an io-labeled tree with label space $[k]$ of depth $d \log(k+1)$ that is shattered by $H$. 

To start, we observe that since $H|_i$ has multiclass Littlestone dimension $d$, there exists a tree $T_1$ of depth $d$ that is shattered by $H|_i$. Let the set of examples labeling nodes of this tree be $X_1$. Then, consider the subclass of $H|_2$ that corresponds to $d$-point functions over $\mathbb{N} \setminus X_1$ (that always predict $0$ on points from $X_1$). It is straightforward to see that this class has Littlestone dimension $d$, and so there exists a tree $T_2$ of depth $d$ that is shattered by $H|_2$, labeled only with examples from $\mathbb{N} \setminus X_1$. let the set of examples labeling nodes of $T_2$ be $X_2$. Then, consider the subclass of $H|_3$ that corresponds to $d$-point functions over $\mathbb{N} \setminus (X_1 \cup X_2)$. This class too has Littlestone dimension $d$, and so there exists a tree $T_3$ of depth $d$ that is shattered by $H|_3$, labeled only with examples from $\mathbb{N} \setminus (X_1 \cup X_2)$. We can define $T_4,\dots,T_{\log(k+1)}$ similarly. 

For all $j$, modify $T_j$, such that the edge labels of $T_j$ are $\log(k+1)$-bit binary strings as follows: for every edge label that is $1$, replace it by $(0,\dots,0,1,0,\dots,0)$ where the only $1$ is in the $j^{th}$ position. Similarly, replace all edge labels that are $0$ by the all-zero string. Remove the unlabeled leaf nodes of $T_1,\dots,T_{\log(k)}$ (keeping the leaf nodes of only a single tree $T_{\log(k+1)}$). Then, define tree $T$ of depth $d \log(k+1)$ as follows.
$$T = T_1 \otimes (T_2 \otimes (T_3 \otimes \dots)\dots)).$$
where the $\otimes$ operator is as defined in the proof of \Cref{claim:gapamp} in \Cref{sec:tightness}. Next, we argue that $T$ is shattered by $H$. Consider root-to-leaf paths represented as a sequence of tuples of the form $(\text{node label}, \text{edge label})$. Then, any root-to-leaf path $B$ of $T$ is of the following form. 
\[ B = \left((x_1,edge_1), \dots, (x_d,edge_d), (x'_1,edge'_1), \dots, (x'_d,edge'_d), \dots, (x''_1,edge''_1), \dots, (x''_d,edge''_d)\right).\]

In this sense, $B$ is the concatenation of $\log(k+1)$ root-to-leaf paths $C_1, \dots, C_{\log(k+1)}$ of $T_1,T_2,\dots,T_{\log(k+1)}$ respectively where 
\[C_1 = \left((x_1, edge_1), \dots, (x_d, edge_d) \right),\]
\[C_2 = \left((x'_1, edge'_1), \dots, (x'_d, edge'_d) \right),\]
and so on. 

Consider root-to-leaf path $C_1$ with the edge label $(1,0,\dots,0)$ replaced by $1$ and the all-zeros label replaced by $0$. Since $H|_1$ is the class of all $d$-point functions, there exists a function $f_1 \in H|_1$ that predicts $0$ on all inputs in the input sets $X_2,\dots,X_{\log(k+1)}$ that realizes $C_1$. By a similar argument, there exists a function $f_2 \in H|_2$ that predicts $0$ on all inputs in the input sets $X_1,X_3,X_4,\dots,X_{\log(k+1)}$ that realizes $C_2$ (with edge label $(0,1,0,\dots,0)$ replaced by $1$ and the all zeros label replaced by $0$). This argument can be extended to all root-to-leaf paths $C_j$ for $1 \leq j \leq \log(k+1)$. Hence, we have that the function $f = (f_1,\dots,f_{\log(k+1)}) \in H$ realizes root-to-leaf path $B$. Hence, $H$ shatters $T$, which implies that $MLD(H) \geq d \log(k+1)$, completing the proof.
\end{proof}
\section{Pure Differential Privacy}\label{sec:puredp}
In this section, we discuss multiclass private PAC learning under the constraint of $(\epsilon, 0)$-differential privacy. We first discuss how work done by Beimel et al. \cite{BNS19} for the binary case applies to this setting. Specifically, the \emph{probabilistic representation dimension} characterizes the sample complexity of pure private PAC learning in the multiclass setting upto logarithmic factors in the number of labels $k+1$. In the binary case, Feldman and Xiao \cite{feldman} showed that for any hypothesis class $H$, the representation dimension $RepDim(H)$ is asymptotically lower bounded by the Littlestone dimension of the class, that is, $RepDim(H) = \Omega(LDim(H))$. Their proof was via a beautiful connection to the notion of randomized one-way communication complexity. We will show the same result in the multiclass setting, through a (in our opinion, simpler) proof using the experts framework from online learning. 

First, we recall the notion of representation dimension, appropriately generalized to the multiclass setting.
\begin{definition}[Probabilistic Representation \cite{BNS19}]
Let $\mathcal{G}$ be a family of hypothesis classes $\{G_1,\dots,G_r\}$ with label set $[k]$ and let $P$ be a distribution over $\{ 1,\dots,r \}$. We say $(P,\mathcal{G})$ is an $(\alpha, \beta)$-probabilistic representation for a hypothesis class $H$ with label set $[k]$ and input set $\mathcal{X}$ if for every function $f \in H$, and every distribution $D$ over $\mathcal{X}$, 
\begin{equation}
    \Pr_P [\exists f_i \in G_i \text{ such that } \Pr_{x \sim D}[f(x) \neq f_i(x)] \leq \alpha ] \geq 1 - \beta.
\end{equation}
where the outer probability is over randomly choosing a hypothesis class $G_i \in \mathcal{G}$ according to $P$.
\end{definition}
\begin{definition}[Representation Dimension \cite{BNS19}]
Let $\mathcal{G}$ be a family of hypothesis classes $\{G_1,\dots,G_r\}$ with label set $[k]$. Let $size(\mathcal{G}) = \max_{G_i \in \mathcal{G}} \{ \ln |G_i| \}$. Then, the Representation Dimension of a hypothesis class $H$ with label set $[k]$ is defined as follows.
\begin{equation}
    RepDim(H) = \min \{size(\mathcal{G}): \exists P \text{ such that  $(P,\mathcal{G})$ is a $(1/4,1/8)$-probabilistic representation for $H$}\}.
\end{equation}
\end{definition}
Note that the constant $1/8$ chosen here (for the value of $\beta)$ is smaller than the constant $1/4$ chosen in the definition of representation dimension in Beimel et al. However, as they point out, $1/4$ is an arbitrary choice and their results are only changed by a constant factor by changing the constant. We choose $1/8$ since it simplifies a later argument concerning the connection between the representation dimension and multiclass Littlestone dimension.

Beimel et al. proved the following two lemmas, which continue to apply in the multiclass setting via the same proofs. 
\begin{lemma}[Lemma 14, \cite{BNS19}]\label{lem:pureprivuppbound}
If there exists a pair $(\mathcal{G},P)$ that $(\alpha, \beta)$-probabilistically represents a hypothesis class $H$ with label set $[k]$, then for every $\epsilon > 0$, there exists an $(\epsilon, 0)$-DP, $(6 \alpha, 4 \beta)$-accurate PAC learner $A$ for hypothesis class $H$ that has sample complexity $O(\frac{1}{\alpha \epsilon}(RepDim(H) + \ln (1/\beta)))$. 
\end{lemma}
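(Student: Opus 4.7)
The plan is to combine the probabilistic representation with the exponential mechanism in the standard two-stage manner, being careful to track the accuracy blow-up that yields the factor of $6\alpha$ and the extra factor of $4$ on $\beta$ in the final guarantee. Fix a target function $f \in H$ and a distribution $D$ over $\mathcal{X}$. The learner $A$ will first sample an index $i$ from $P$ to obtain a class $G_i \in \mathcal{G}$, and then privately select a hypothesis from the finite set $G_i$ using the exponential mechanism. Crucially, drawing $G_i$ is data-independent and so incurs no privacy cost; the entire privacy budget is spent in the second step, which, since the exponential mechanism is $(\epsilon,0)$-DP and our only other use of the data is post-processing, immediately gives the $(\epsilon,0)$-DP guarantee via \Cref{prelim:postprocess}.

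For accuracy, I would condition on three good events and then take a union bound. First, by the definition of probabilistic representation, with probability at least $1-\beta$ over the sample of $G_i \sim P$ there exists some $f^\star \in G_i$ with $\Pr_{x \sim D}[f^\star(x) \neq f(x)] \leq \alpha$; call this event $E_1$. Second, by a Hoeffding/Chernoff bound applied to the single function $f^\star$, with $n = \Omega(\log(1/\beta)/\alpha)$ samples the empirical error of $f^\star$ on the sample is at most $2\alpha$ with probability at least $1-\beta$; call this $E_2$. Third, in the realizable-style analysis, by a union bound over $G_i$ and a multiplicative Chernoff bound, with $n = \Omega((\log|G_i| + \log(1/\beta))/\alpha)$ samples every $g \in G_i$ with true error exceeding $6\alpha$ has empirical error exceeding $3\alpha$, with probability at least $1 - \beta$; call this $E_3$.

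Conditioned on $E_1 \cap E_2 \cap E_3$, run the exponential mechanism over $G_i$ with score equal to the number of correctly-labeled sample points (which has sensitivity $1$ in the dataset). Its standard utility guarantee states that, with probability at least $1-\beta$, the hypothesis $\hat h$ it returns has empirical error within an additive $O((\log|G_i| + \log(1/\beta))/(\epsilon n))$ of $\min_{g \in G_i}$ empirical error; call this $E_4$. Using $E_2$, the minimum empirical error is at most $2\alpha$, and choosing
\[
n = O\!\left(\frac{\log|G_i| + \log(1/\beta)}{\alpha\,\epsilon}\right) = O\!\left(\frac{RepDim(H) + \log(1/\beta)}{\alpha\,\epsilon}\right)
\]
makes the additive exponential-mechanism slack at most $\alpha$, so $\hat h$ has empirical error at most $3\alpha$. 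Event $E_3$ then forces the true error of $\hat h$ to be at most $6\alpha$. A union bound over $E_1,E_2,E_3,E_4$ gives a total failure probability at most $4\beta$, yielding the claimed $(6\alpha,4\beta)$-accuracy. The sample complexity of the realizable-uniform-convergence step is dominated by that of the exponential-mechanism step since $\epsilon \leq 1$ without loss of generality, giving the stated bound.

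There is no real obstacle: the only place where one needs to be careful is in tracking the probability-$\beta$ events and the way the constant $6$ in $6\alpha$ arises from composing a $2\alpha$ empirical-error bound for $f^\star$, an $\alpha$ utility slack from the exponential mechanism, and a factor-$2$ slack between empirical and true error from realizable uniform convergence. The constant $1/8$ in the definition of $RepDim$ is immaterial to the argument since we re-parameterize with general $\beta$ in the analysis.
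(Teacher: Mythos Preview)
The paper does not give its own proof of this lemma; it simply cites \cite{BNS19} and notes that ``the following two lemmas \ldots\ continue to apply in the multiclass setting via the same proofs.'' Your sketch is a correct and faithful reconstruction of the standard Beimel--Nissim--Stemmer argument: draw $G_i$ data-independently from $P$, run the exponential mechanism over the finite class $G_i$, and combine its utility guarantee with a Hoeffding bound on the representing hypothesis $f^\star$ and a multiplicative-Chernoff uniform-convergence bound over $G_i$ to convert empirical error to population error. The way you obtain the constants---$2\alpha$ empirical error for $f^\star$, an additional $\alpha$ from the exponential-mechanism slack, and a factor $2$ from the empirical-to-true conversion, together with a union bound over four $\beta$-probability failure events---is exactly how the original proof proceeds.

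One minor remark: the bound the argument actually produces is $O\bigl(\tfrac{1}{\alpha\epsilon}(size(\mathcal{G}) + \ln(1/\beta))\bigr)$ for the \emph{given} representation $(\mathcal{G},P)$; writing $RepDim(H)$ in its place is a slight abuse already present in the paper's statement, not a gap in your argument.
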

\begin{lemma}[Lemma 15, \cite{BNS19}]
For any hypothesis class $H$, with label set $[k]$, if there exists an $(\epsilon, 0)$-DP, $(\alpha, 1/2)$-accurate PAC learner $A$ for $H$ that has sample complexity upper bounded by $m$, then there exists a $(\alpha, 1/4)$-probabilistic representation of $H$ such that $size(H) = O(m \epsilon)$. 
\end{lemma}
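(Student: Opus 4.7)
The plan is to follow the standard template for converting a pure-DP learner into a probabilistic representation: run $A$ on a fixed canonical dataset so that its output distribution becomes data-independent, use group privacy to argue that this data-independent distribution still assigns non-negligible mass to accurate hypotheses, and amplify this mass by taking independent repetitions. Concretely, I would fix any canonical dataset $S_0 \in (\mathcal{X} \times [k])^m$ (say, $m$ copies of a single labeled example). For any target $f \in H$ and any distribution $D$ on $\mathcal{X}$, let $T_{f,D}$ denote the set of hypotheses $h$ with $\Pr_{x \sim D}[h(x) \neq f(x)] \leq \alpha$. The $(\alpha, 1/2)$-accuracy of $A$ says that when $S$ is an i.i.d.\ sample from $D$ labeled by $f$, $\Pr[A(S) \in T_{f,D}] \geq 1/2$.

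Next, I would invoke group privacy for $(\epsilon, 0)$-DP: since $S$ and $S_0$ both have size $m$, they differ in at most $m$ coordinates, so telescoping the single-element DP guarantee along a path of $m$ neighbouring datasets gives $\Pr[A(S) \in T] \leq e^{m\epsilon} \Pr[A(S_0) \in T]$ for every measurable $T$. Applied to $T = T_{f,D}$, this yields the key data-independent bound $\Pr[A(S_0) \in T_{f,D}] \geq \tfrac{1}{2} e^{-m\epsilon}$, for every choice of $f$ and $D$ simultaneously.

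Now I would amplify by independent repetition. Set $t = \lceil 2 e^{m\epsilon} \ln 4 \rceil$, let $h_1, \ldots, h_t$ be i.i.d.\ draws of $A(S_0)$, and consider the random hypothesis class $G = \{h_1, \ldots, h_t\}$. Take $\mathcal{G}$ to be the support of this random class and $P$ the induced distribution. For any target $f$ and distribution $D$, the probability that no $h_j$ lies in $T_{f,D}$ is at most $(1 - \tfrac{1}{2} e^{-m\epsilon})^t \leq e^{-t e^{-m\epsilon}/2} \leq 1/4$, so $(P, \mathcal{G})$ is an $(\alpha, 1/4)$-probabilistic representation of $H$. Since $\ln |G| \leq \ln t = m\epsilon + O(1) = O(m\epsilon)$, we conclude $size(\mathcal{G}) = O(m\epsilon)$, matching the claim (reading the ``$size(H)$'' of the statement as $size(\mathcal{G})$).

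The main obstacle is entirely quantitative: group privacy loses an $e^{m\epsilon}$ multiplicative factor, which forces $t = \Theta(e^{m\epsilon})$ in the amplification step. The reason the lemma is non-trivial and still meaningful is that both quantities are exponential in $m\epsilon$, and the definition of representation dimension is phrased in terms of $\ln|G|$ rather than $|G|$, so the two exponentials cancel to give $\ln|G| = O(m\epsilon)$. If the definition measured $|G|$ directly, this reduction would be exponentially wasteful, and a fundamentally different argument would be needed.
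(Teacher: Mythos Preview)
The paper does not give its own proof of this lemma; it simply cites \cite{BNS19} and remarks that the same argument carries over verbatim to the multiclass setting. Your reconstruction is exactly the standard \cite{BNS19} argument---run $A$ on a fixed canonical dataset, use group privacy over all $m$ coordinates to lower-bound the mass on $\alpha$-good hypotheses by $\tfrac{1}{2}e^{-m\epsilon}$, and amplify by $\Theta(e^{m\epsilon})$ independent repetitions so that $\ln|G| = O(m\epsilon)$---and it is correct as written (the paper's proof of the subsequent $\alpha$-dependent lemma follows the same template, confirming this is the intended approach).
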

The latter lemma gives a lower bound of $\Omega(RepDim(H) / \epsilon)$ on the sample complexity of pure-private PAC learning in the multiclass setting. Note that this lower bound is independent of $\alpha$. Next, we we add in a dependence on $\alpha$ exactly as in Beimel et al. Unfortunately, this direct adaptation of the proof of Beimel et al. weakens the lower bound by an additive logarithmic term in the number of labels $k+1$.
\begin{lemma}
For any hypothesis class $H$, with label set $[k]$ and input set $\mathcal{X}$, if there exists an $(\epsilon, 0)$-DP, $(\alpha, 1/2)$-accurate PAC learner $A$ for $H$ that has sample complexity upper bounded by $m$, then there exists a $(1/4, 1/8)$-probabilistic representation of $H$ such that $size(H) = O(m \alpha \epsilon + \ln(k+1))$. 
\end{lemma}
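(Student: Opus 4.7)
The plan is to follow the strategy used for the binary analog of this statement in Beimel et al.~\cite{BNS19}, with an additional $\ln(k+1)$ term arising from the need to handle the larger label space in the multiclass setting. The goal is to construct a $(1/4, 1/8)$-probabilistic representation of $H$ from the given pure-DP $(\alpha, 1/2)$-accurate learner $A$ with sample complexity $m$.

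First, I would exploit the accuracy slack between $(\alpha, 1/2)$ and the weaker target $(1/4, 1/8)$ to effectively reduce the sample complexity. Concretely, using $A$ as a subroutine, I would construct a pure-DP learner $A'$ that uses only $m' = O(m\alpha + \ln(k+1)/\epsilon)$ samples and achieves, say, $(1/4, 1/4)$-accuracy. The core idea is to pad a small sample of size $m'$ up to size $m$ in a data-independent way (for instance, by repetition or by appending samples from a fixed reference distribution), run $A$ on the padded dataset, and argue that privacy is preserved (via group privacy over the padded entries) while accuracy on the true target distribution is retained because of the gap between $\alpha$ and $1/4$. The $\ln(k+1)/\epsilon$ term emerges from a selection or verification step ensuring $1/4$-accuracy on the target, typically via a union bound over the $k+1$ possible labels or a pure-DP selection mechanism whose cost depends on $\log(k+1)$.

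Second, I would apply the preceding lemma to $A'$: since $A'$ is pure-DP and $(1/4, 1/4)$-accurate with sample complexity $m'$, the preceding lemma yields a $(1/4, 1/4)$-probabilistic representation of $H$ of size $O(m' \epsilon) = O(m\alpha\epsilon + \ln(k+1))$. A constant-factor confidence amplification step (e.g., taking the union of $O(1)$ independent representations, or equivalently independent draws of $G \sim P$) then boosts the failure probability from $1/4$ to $1/8$ without affecting the asymptotics in $m$, $\alpha$, $\epsilon$, or $k$.

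The main obstacle is the construction of $A'$ in the first step. We need to reduce the effective sample complexity from $m$ to roughly $m\alpha$ while maintaining both pure DP and a meaningful accuracy guarantee with respect to the true (unknown) target distribution. The padding must not overwhelm the signal from the $m'$ genuine samples: repetition-based padding does not introduce new information but may inflate sensitivity under group privacy, while padding from a fixed reference distribution can break the realizability assumption required by $A$'s accuracy analysis. Navigating this tension in the multiclass setting, and in particular accounting for the extra $\ln(k+1)$ additive term through the appropriate private selection procedure, is the crux of the proof.
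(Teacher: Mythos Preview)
Your decomposition into ``first build a low–sample learner $A'$, then apply the preceding black-box lemma'' is different from the paper's route, and the gap lies precisely where you flag it: constructing $A'$. The natural instantiations of your padding idea do not give an $\epsilon$-DP, $(1/4,1/4)$-accurate learner with $m'=O(m\alpha+\ln(k+1)/\epsilon)$ samples. If you pad with a fixed element $(0,\sigma)$ for each $\sigma\in[k]$ and then privately select among the $k+1$ resulting hypotheses, you have run $A$ $k+1$ times on the \emph{same} real subsample, so basic composition makes $A'$ only $(k+1)\epsilon$-DP; feeding that into the preceding lemma yields size $O(m\alpha(k+1)\epsilon)$, not $O(m\alpha\epsilon+\ln(k+1))$. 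If instead you pick the padding label from one of the input samples, the padded datasets for two neighbors can differ in $\Theta(m)$ positions, again destroying the privacy budget. The alternative of using disjoint subsamples for each $\sigma$ costs $\Theta(km\alpha)$ samples. In short, any attempt to absorb the $k+1$ label choices \emph{inside a learner} pays for them through composition or sample blow-up; the $\ln(k+1)/\epsilon$ sample budget for selection does not solve this, because the expensive part is producing the $k+1$ candidates, not choosing among them.

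The paper sidesteps this entirely by never building $A'$. It works directly at the level of the probabilistic representation: define the mixture $\tilde D$ that places mass $1-4\alpha$ on a fixed point $0$, note that $\alpha$-accuracy on $\tilde D$ implies $1/4$-accuracy on $D$, use a Chernoff bound to find a ``good'' realizable sample $X_{good}$ with at most $8\alpha m$ non-$(0,f(0))$ entries, and then apply group privacy over those $\le 8\alpha m$ positions to transfer the guarantee from $A(X_{good})$ to $A(\vec 0_\sigma)$ for $\sigma=f(0)$. The crucial point is that the $k+1$ choices of $\sigma$ are handled by running $A$ on the \emph{data-independent} inputs $\vec 0_0,\dots,\vec 0_k$ when forming the covering set $G$; this costs a multiplicative $k+1$ in $|G|$, hence only an additive $\ln(k+1)$ in $\operatorname{size}(\mathcal G)$, with no composition penalty. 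Your intuition about padding and about where $\ln(k+1)$ comes from is essentially right, but the correct place to spend it is in the size of the representation, not in the privacy or sample budget of an intermediate learner.
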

\begin{proof}
Assume the existence of an $(\epsilon,0)$-DP, $(\alpha, 1/2)$-accurate PAC learner $A$ for $H$ with input space $\mathcal{X}$. We assume that $m \geq 3\ln(4) / 4 \alpha$ (this is without loss of generality since $A$ can ignore part of its sample).

Fix a target function $f \in H'$ and a distribution $D$ on input space $\mathcal{X}$. Let $0$ be an element of $\mathcal{X}$.

Define the following distribution $\tilde{D}$ on input space $\mathcal{X}'$.
\begin{equation}\label{eq:distrtilde}
\Pr_{\tilde{D}}[x] =
    \begin{cases}
    1-4\alpha + 4\alpha\Pr_D[x], & x=0 \\
    4 \alpha \Pr_D[x], & x \neq 0.
    \end{cases}
\end{equation}
Next, define $G_D^{\alpha} = \{\text{function } g: \Pr_{x \sim D}(g(x) \neq f(x)) \leq \alpha \}$. Since $A$ is  $(\alpha, 1/2)$-accurate, $$\Pr_{X \sim \tilde{D}^m, A} [A(X) \in  G_{\tilde{D}}^{\alpha}] \geq \frac{1}{2}. $$
In addition, by Equation~\ref{eq:distrtilde}, for every function $h: \mathcal{X}' \to [k]$ such that $\Pr_{x \sim D}[h(x) \neq f(x)] \geq \frac{1}{4}$,
\begin{equation}
\Pr_{x \sim \tilde{D}}[f(x) \neq h(x)] \geq 4 \alpha \Pr_{x \sim D}[f(x) \neq h(x)] \geq \alpha.
\end{equation}
Hence, $\Pr_{x \sim \tilde{D}}[f(x) \neq h(x)] \leq \alpha \implies \Pr_{x \sim D}[f(x) \neq h(x)] \leq \frac{1}{4},$ that is $G_{\tilde{D}}^{\alpha} \subseteq G_D^{1/4}.$ Therefore, $\Pr_{X \sim \tilde{D}^m, A} [A(X) \in  G_{D}^{1/4}] \geq \frac{1}{2}.$ We call a dataset of $m$ labeled examples \emph{good} if the unlabeled example $0$ appears at least $(1-8\alpha)m$ times in the dataset. Let $X$ be a dataset constructed by taking $m$ i.i.d samples from $\tilde{D}$ labeled by $f$. By a Chernoff bound, $X$ is good with probability at least $1-e^{-4 \alpha m/3}$. Hence, by a union bound,
$$\Pr_{A,\tilde{D}}[(A(X) \in G_D^{1/4}) \land (X \text{ is good})] \geq \frac{1}{2} - e^{-4 \alpha m/3} \geq \frac{1}{4}.$$
Therefore, there exists a dataset $X_{good}$ of $m$ examples (labeled by concept $f$) that is good such that $\Pr_{A}[A(X) \in G_D^{1/4}] \geq \frac{1}{4}$ where the probability is only over the randomness of the algorithm. For $\sigma \in [k]$, let $\vec{0}_{0}$ represent a dataset of size $m$ consisting only of the special element, such that every example is labeled by $\sigma$. Then, by group privacy, the fact that $X_{good}$ is good, there exists a $\sigma \in [k]$ such that
\begin{equation}
    \Pr_{A}[ A(\vec{0}_{\sigma}) \in G_{D}^{1/4}] \geq e^{-8\epsilon \alpha m}  \Pr_{A}[ A(X_{good}) \in G_{D}^{1/4}] \geq e^{-8\epsilon \alpha m} \cdot \frac{1}{4}.
\end{equation}
Consider a set $G$ consisting of the outcomes of $4 \ln(8) \cdot e^{8 \epsilon \alpha m}$ executions of $A(\vec{0}_0), A(\vec{0}_1),\dots,A(\vec{0}_k)$. The probability that $G$ does not contain a hypothesis $h \in G_D^{1/4}$, is then at most $(1 -  e^{-8\epsilon \alpha m} \cdot \frac{1}{4})^{4 \ln(8) \cdot e^{8 \epsilon \alpha m}} \leq \frac{1}{8}.$ Thus, if we let $\mathcal{G}$ be the set of all hypothesis classes of size at most $4(k+1) \ln(8) \cdot e^{8 \epsilon \alpha m}$ with label set $[k]$ and input set $\mathcal{X}'$, and set the distribution $P$ to be the distribution on $\mathcal{G}$ induced by $A(\vec{0}_0)$, then $(P,\mathcal{G})$ is a $(1/4,1/8)$-probabilistic representation of $H$, and $size(\mathcal{G}) =  \max_{G_i \in \mathcal{G}} \{ \ln |G_i| \} = \ln(4(k+1)\ln 8) + 8 m \epsilon \alpha = O(m \alpha \epsilon + \ln(k+1)).$
\end{proof}
The above lemma gives us that the sample complexity of any $(\epsilon,0)$-DP, $(\alpha, 1/2)$-accurate learning algorithm for a hypothesis class $H$ with label set $[k]$ is $m = \Omega \left(\frac{RepDim(H) - \ln(k+1)}{\alpha \epsilon} \right)$. Combined with Lemma~\ref{lem:pureprivuppbound}, this proves that the representation dimension captures the sample complexity of pure DP PAC learning in the multiclass setting up to logarithmic factors in $k+1$. 

Finally, we prove that the representation dimension of a hypothesis class $H$ is asymptotically lower bounded by the multiclass Littlestone dimension of $H$. To prove this, we will use the following lemmas from Daniely et al. \cite{daniely1}. 
\begin{lemma}[Theorem 5.1, \cite{daniely1}]\label{lem:worstcaseseq}
Let $H$ be a hypothesis class with label set $[k]$, such that $MLD(H) = d$. Then, for every online learning algorithm $A$ for $H$ (in the realizable setting), there exists a sequence of $d$ examples, such that $A$ makes at least $d/2$ mistakes in expectation on this sequence. 
\end{lemma}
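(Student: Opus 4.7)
The plan is to leverage the tree shattered by $H$ (guaranteed by $MLD(H)=d$) together with a uniformly random walk down that tree. This forces any (possibly randomized) online learner to incur at least a $1/2$ probability of mistake at every level, and then an averaging argument extracts a single fixed worst-case realizable sequence.

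First, by definition of multiclass Littlestone dimension, there exists a complete io-labeled binary tree $T$ of depth $d$ that is shattered by $H$. At each internal node $v$ of $T$, let $x_v \in \mathcal{X}$ be its example label and let $y^0_v, y^1_v \in [k]$ (with $y^0_v \neq y^1_v$) be its two outgoing edge labels. By the shattering property, every root-to-leaf path is realized by some $f \in H$, so any such path yields a realizable labeled sequence of length $d$.

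Next, I would analyze the following thought experiment. Let $\pi = v_1, \ldots, v_d$ be a uniformly random root-to-leaf path in $T$, obtained by choosing the left or right child with probability $1/2$ independently at each step. Feed $A$ the labeled sequence $(x_{v_1}, y_{v_1}), \ldots, (x_{v_d}, y_{v_d})$, where $y_{v_i}$ is the edge label connecting $v_i$ to $v_{i+1}$. Condition on the history through the choice of $v_i$ but not on the edge label leaving $v_i$; the learner's (randomized) prediction $\hat{y}_i$ on input $x_{v_i}$ satisfies $\Pr[\hat{y}_i = y^0_{v_i}] + \Pr[\hat{y}_i = y^1_{v_i}] \leq 1$, so since $y_{v_i}$ is uniform on $\{y^0_{v_i}, y^1_{v_i}\}$,
\[
\Pr[\hat{y}_i \neq y_{v_i}] \;=\; 1 - \tfrac{1}{2}\bigl(\Pr[\hat{y}_i = y^0_{v_i}] + \Pr[\hat{y}_i = y^1_{v_i}]\bigr) \;\geq\; \tfrac{1}{2}.
\]
Summing over $i = 1, \ldots, d$ by linearity of expectation, the total expected number of mistakes is at least $d/2$ over the joint randomness of $\pi$ and $A$'s coins. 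A standard averaging argument then produces a fixed root-to-leaf path, i.e., a single realizable labeled sequence of length $d$, on which $A$ still makes at least $d/2$ mistakes in expectation over its own coins.

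The main obstacle I anticipate is handling randomized learners while producing a \emph{fixed} adversarial sequence rather than an adaptive one. A natural adaptive adversary that walks down $T$ and always picks whichever child label maximizes $A$'s current mistake probability gives the same per-level bound, but the resulting sequence would itself depend on $A$'s coin tosses, which the lemma statement does not allow. Switching to a uniformly random path decouples the adversary from $A$ and enables the Yao-style averaging step above. The only remaining subtlety is careful bookkeeping about what is conditioned on at each level, to ensure that the independence between $y_{v_i}$ and $\hat{y}_i$ (given the past) is genuine and that linearity of expectation applies cleanly across the $d$ levels.
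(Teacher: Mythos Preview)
The paper does not supply its own proof of this lemma; it is simply quoted as Theorem~5.1 of Daniely et al.\ and used as a black box. Your argument is correct and is precisely the standard proof: take a depth-$d$ shattered tree, walk down it uniformly at random, observe that at each node the revealed label is independent of the learner's prediction and uniform over two distinct values (so the per-round mistake probability is at least $1/2$), sum via linearity, and then average over paths to extract a single fixed realizable sequence on which $A$'s expected mistakes are at least $d/2$. The subtlety you flag---that an adaptive adversary would produce a sequence depending on $A$'s coins---is exactly why the random-path plus Yao averaging step is the right move, and your handling of it is fine.
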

Our proof makes use of the experts framework in online learning. In this framework, at each time step $t$, before the online learner chooses its prediction, it gets $N$ pieces of advice from experts with opinions about what the correct prediction should be. There is a classical algorithm called the Weighted Majority Algorithm \cite{littlestonewar,daniely1} that achieves the following guarantee in this framework for every sequence of length $T$. 

\begin{lemma}[Page 20, \cite{daniely1}]\label{thm:weightedmaj}
Consider $N$ experts. Let $A$ be the Weighted Majority Algorithm. Then, for all sequences $S$ of length $T$, labeled by an unknown hypothesis in a hypothesis class $H$ with label set $[k]$, if the number of mistakes made by the $i^{th}$ expert on $S$ is $L_{i,T}$  and the number of mistakes made by $A$ on $S$ is $L_{A,T}$, then,
\begin{equation}
 \mathbb{E}_A[ L_{A,T} - \min_{i \in [N]} \{L_{i,T}\}] \leq \sqrt{\frac{1}{2} \ln(N) T}. 
\end{equation}
\end{lemma}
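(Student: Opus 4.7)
The plan is to prove the classical regret bound for the Weighted Majority Algorithm via the standard potential-function argument. First I would set up the algorithm explicitly: assign to each expert $i \in [N]$ an initial weight $w_{i,1}=1$, and at each round $t$ let the algorithm predict a label $y$ with probability proportional to the total weight of experts currently voting for $y$. After observing the true label, every expert that made a mistake has its weight multiplied by $\eta \in (0,1)$, a learning-rate parameter I will choose at the end.

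Next I would track the potential $W_t = \sum_i w_{i,t}$. Let $p_t$ denote the probability that $A$ errs at round $t$, which by the sampling rule equals the fraction of total weight currently on mistake-making experts. This gives the one-step identity
\begin{equation*}
W_{t+1} = W_t \bigl(1 - (1-\eta)\,p_t\bigr),
\end{equation*}
so telescoping and using $1 - x \le e^{-x}$ yields
\begin{equation*}
W_{T+1} \le N\exp\!\Bigl(-(1-\eta)\sum_{t=1}^T p_t\Bigr) = N\exp\!\bigl(-(1-\eta)\,\mathbb{E}_A[L_{A,T}]\bigr).
\end{equation*}
On the other hand, the best expert $i^*$ contributes $w_{i^*,T+1} = \eta^{L_{i^*,T}}$, so $W_{T+1} \ge \eta^{\min_i L_{i,T}}$. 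Taking logarithms of both bounds and rearranging gives
\begin{equation*}
\mathbb{E}_A[L_{A,T}] \le \frac{\ln N + \min_i L_{i,T}\,\ln(1/\eta)}{1-\eta}.
\end{equation*}

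The final step is to optimize $\eta$. Using the inequality $\ln(1/\eta) \le (1-\eta) + (1-\eta)^2$ (valid for $\eta$ close to $1$) and the crude upper bound $\min_i L_{i,T} \le T$, the right-hand side is at most $\min_i L_{i,T} + (1-\eta)T + \tfrac{\ln N}{1-\eta}$, which we minimize by setting $1-\eta = \sqrt{2\ln(N)/T}$. This yields
\begin{equation*}
\mathbb{E}_A[L_{A,T}] - \min_{i\in[N]} L_{i,T} \le \sqrt{\tfrac{1}{2}\,T\ln N},
\end{equation*}
which is the claimed bound. The only subtle point is the choice and estimation of $\eta$; everything else is a mechanical application of the multiplicative-weights potential argument. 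Since this is a classical result cited from Daniely et al., I expect no genuine obstacle beyond picking the tuning parameter carefully; the key conceptual ingredient is simply that the sum of weights simultaneously upper-bounds (through the best expert) and lower-bounds (through the exponential decay) the algorithm's expected loss.
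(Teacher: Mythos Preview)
The paper does not give its own proof of this lemma; it is simply quoted from Daniely et al.\ and used as a black box. So there is nothing to compare against on the paper's side.

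Your potential-function argument is the standard one and is correct up to the final optimization, but the last step does not deliver the stated constant. After your approximations you arrive at
\[
\mathbb{E}_A[L_{A,T}] - \min_i L_{i,T} \;\le\; (1-\eta)\,T + \frac{\ln N}{1-\eta},
\]
and the minimum of $x\mapsto xT + (\ln N)/x$ is $2\sqrt{T\ln N}$, attained at $x=\sqrt{\ln N/T}$. Plugging in your choice $1-\eta=\sqrt{2\ln N/T}$ gives $(3/\sqrt{2})\sqrt{T\ln N}$, which is even worse; in any case you cannot get below $2\sqrt{T\ln N}$ from this inequality, a factor $2\sqrt{2}$ larger than the claimed $\sqrt{\tfrac{1}{2}T\ln N}$. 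The loss comes from combining the crude bound $1-(1-\eta)p_t\le e^{-(1-\eta)p_t}$ with the Taylor estimate $\ln(1/\eta)\le (1-\eta)+(1-\eta)^2$; these two one-sided inequalities together are too lossy.

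To obtain the sharp $\sqrt{\tfrac{1}{2}T\ln N}$ you need the Hedge formulation with exponential weights $w_{i,t+1}=w_{i,t}e^{-\eta\ell_{i,t}}$ and Hoeffding's lemma applied to the $[0,1]$-valued loss of a randomly drawn expert, which yields $W_{t+1}\le W_t\,e^{-\eta p_t+\eta^2/8}$. Telescoping and comparing to $W_{T+1}\ge e^{-\eta L^*}$ gives $\mathbb{E}[L_{A,T}]-L^*\le (\ln N)/\eta + \eta T/8$, and optimizing at $\eta=\sqrt{8\ln N/T}$ produces exactly $\sqrt{\tfrac{1}{2}T\ln N}$. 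For the paper's downstream use (where the bound only affects the constant $1/32$ in $RepDim(H)\ge MLD(H)/32$) your weaker constant would still suffice, but as a proof of the lemma as stated, the last step needs this sharper analysis.
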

Next, we prove the main result.
\begin{theorem}
For all $k \in \mathbb{N}$, for any hypothesis class $H$ with label set $[k]$, $RepDim(H) = \Omega(MLD(H))$.
\end{theorem}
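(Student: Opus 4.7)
The plan is to turn a probabilistic representation of $H$ into a randomized online learner for $H$ whose expected mistake bound on any sequence is controlled by $size(\mathcal{G})$, and then confront that bound with the per-sequence mistake lower bound in \Cref{lem:worstcaseseq}. So fix a $(1/4, 1/8)$-probabilistic representation $(P, \mathcal{G})$ of $H$ with $s := size(\mathcal{G})$, so that each $G_i \in \mathcal{G}$ has $|G_i| \leq e^s$, and let $d = MLD(H)$. I define a randomized online learner $A$ for $H$ as follows: before seeing any examples, sample $G_i \sim P$; then during the online game, run the multiclass Weighted Majority Algorithm of \Cref{thm:weightedmaj} using the hypotheses in $G_i$ as the $N = |G_i| \leq e^s$ experts.

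To analyze $A$, fix any sequence $S = (x_1, \dots, x_d)$ labeled by some $f \in H$, and let $D$ be the uniform distribution over the multiset $\{x_1, \dots, x_d\}$. By the definition of probabilistic representation applied to $D$ and $f$, with probability at least $7/8$ over $G_i \sim P$ there exists some $g \in G_i$ with $\Pr_{x \sim D}[g(x) \neq f(x)] \leq 1/4$, which on the sequence $S$ means $g$ makes at most $d/4$ mistakes. Conditioning on this good event and applying \Cref{thm:weightedmaj} with $T = d$ and $\ln N \leq s$ yields expected WMA mistakes at most $d/4 + \sqrt{sd/2}$; on the complementary event (probability at most $1/8$) we trivially upper bound the mistakes by $d$. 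Hence on any realizable sequence of length $d$,
\[
\mathbb{E}[\text{mistakes of } A] \;\leq\; \tfrac{7}{8}\bigl(\tfrac{d}{4} + \sqrt{sd/2}\bigr) + \tfrac{1}{8} d \;\leq\; \tfrac{11}{32}\, d + \sqrt{sd/2}.
\]

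On the other hand, \Cref{lem:worstcaseseq} produces a realizable sequence of length $d$ on which any online learner for $H$ makes at least $d/2$ mistakes in expectation. Applying the upper bound above to $A$ on that sequence gives $d/2 \leq 11d/32 + \sqrt{sd/2}$, i.e., $5d/32 \leq \sqrt{sd/2}$, and squaring yields $s \geq 25 d / 512 = \Omega(d)$. Since this holds for every probabilistic representation witnessing $RepDim(H)$, we conclude $RepDim(H) = \Omega(MLD(H))$.

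The main subtlety is bridging the distributional guarantee of the probabilistic representation (a promise relative to a fixed distribution $D$) with the adversarial per-sequence nature of the online mistake bound of \Cref{lem:worstcaseseq}; the resolution is to set $D$ to be the uniform distribution on the adversary's sequence, which converts a distributional agreement bound into an empirical mistake bound that a halving/experts style algorithm can exploit. A secondary point is choosing the slack parameters: the choice $\beta = 1/8$ (rather than $1/4$ as originally used by Beimel et al.) is convenient because together with the WMA bound, the $d/4$ per-mistake-rate and $1/8$ failure probability combine to give a constant strictly below $1/2$, leaving the $\sqrt{sd}$ term as the only way to close the gap to $d/2$ and forcing $s = \Omega(d)$.
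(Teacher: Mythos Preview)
Your proof is correct and follows essentially the same approach as the paper's: sample $G_i \sim P$, run Weighted Majority over the hypotheses in $G_i$ as experts, convert the distributional guarantee of the probabilistic representation into an empirical mistake bound by taking $D$ to be the uniform distribution over the adversarial sequence, and compare against the $d/2$ lower bound from \Cref{lem:worstcaseseq}. The only cosmetic differences are that you establish the mistake upper bound for an arbitrary realizable sequence before specializing to the worst-case one, and you keep the factor $7/8$ on the good event (yielding $11d/32$ rather than the paper's $3d/8$), which produces a slightly better constant $25/512$ in place of the paper's $1/32$.
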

\begin{proof}
We can assume $MLD(H) > 0$, since when $MLD(H)=0$, the result is vacuously true. By the definition of representation dimension, there exists a $(1/4,1/8)$-probabilistic representation $(P,\mathcal{G})$ for hypothesis class $H$ where $size(\mathcal{G}) = RepDim(H)$.

Now, consider the following online learner $A$ for $H$. It first samples a hypothesis class $G_i \in \mathcal{G}$ from $P$. It then runs the Weighted Majority Algorithm with the set of experts being the functions of hypothesis class $G_i$ in order to make predictions at every timestep.

Consider the worst case sequence $S$ for $A$ of length $T = MLD(H)$ guaranteed by Lemma~\ref{lem:worstcaseseq}. Let $D$ be the empirical distribution corresponding to this sequence. Fix an unknown hypothesis $f \in H$ labeling this sequence.  By the definition of probabilistic representation, if a hypothesis class $G_i$ is sampled from $P$, with probability at least $7/8$, there exists a hypothesis $g \in G_i$ such that $\Pr_{x \sim D}[g(x) \neq f(x)] \leq 1/4$. This implies that the number of mistakes that $g$ makes on $S$ labeled by $f$ is at most $MLD(H)/4$. 

Let $L_A$ be a random variable denoting the number of mistakes that $A$ makes on $S$. Define the event $E$ as follows: "Class $G_i$ sampled from $P$ is such that for all distributions $D'$ and for all functions $f' \in H$, there exists a function $g \in G_i$ such that  $\Pr_{x \sim D'}[g(x) \neq f'(x)] \leq 1/4$." Then, $\Pr[E] \geq 7/8$. Using the law of total expectation, and the fact that the maximum number of mistakes is the length of the sequence $S$, we can then write that
\begin{align}
    \mathbb{E}_A[L_A] & = \mathbb{E}_A[L_A \mid E]\Pr[E] + \mathbb{E}_A[L_A \mid \overline{E}]\Pr[\overline{E}] \\
    & \leq  \mathbb{E}_A[L_A \mid E] + \frac{MLD(H)}{8}.
\end{align}
Finally, observe that conditioned on event $E$, there is an expert in $G_i$ that makes at most $MLD(H)/4$ mistakes on $S$. Hence, applying Theorem~\ref{thm:weightedmaj}, we get that  $$\mathbb{E}_A[L_A \mid E] \leq  MLD(H)/4 + \sqrt{\frac{1}{2} RepDim(H) MLD(H)},$$
where we have used the fact that the natural logarithm of the number of experts in $G_i$ is at most the representation dimension of hypothesis class $H$. Hence, putting together the arguments from the previous two paragraphs, we have that
$$\mathbb{E}_A[L_A] \leq  3MLD(H)/8 + \sqrt{\frac{1}{2} RepDim(H) MLD(H)}.$$
Next, by Lemma~\ref{lem:worstcaseseq}, we have that  $\mathbb{E}_A[L_A] \geq MLD(H)/2$. Combining the above two equations, we get that
$$\frac{MLD(H)}{2} \leq 3MLD(H)/8 + \sqrt{\frac{1}{2} RepDim(H) MLD(H)} \implies \frac{MLD(H)}{32} \leq RepDim(H).$$
\end{proof}
\newpage
\bibliographystyle{alpha}

\bibliography{bibl}
\newpage
\appendix
\section{Boosting Private Learners}\label{sec:Boosting}

The following result shows that differentially private learners with constant accuracy and confidence parameters can be generically boosted to make these parameters arbitrarily small, with only a mild dependence on these parameters.

\begin{theorem}\label{thm:boosting}
Let $\epsilon \le 1/4$ and let $H$ be a binary concept class that has an $(\epsilon, \delta)$-differentially private and $(\alpha = 1/4, \beta = 1/4)$-accurate PAC learner using $SC_{1/4, 1/4}$ samples. Then for every $\alpha, \beta > 0$, there is an $(\epsilon, \delta)$-differentially private learner for $H$ using
\[O\left(\frac{SC_{1/4, 1/4} \cdot \log(\log(1/\alpha)/\beta) \cdot \log(1/\alpha)}{\alpha}\right).\]
samples.
\end{theorem}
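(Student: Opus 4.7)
The plan is to prove Theorem~\ref{thm:boosting} via a two-stage construction: first confidence-amplify the weak learner, then apply a boosting procedure to amplify accuracy. The central privacy device in both stages is parallel composition (Claim~\ref{claim:parcomp}), which allows us to run many subroutines on disjoint subsamples without paying a privacy cost that scales with the number of subroutine calls.

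For the confidence-amplification step, I would upgrade the $(\epsilon,\delta)$-DP, $(1/4,1/4)$-accurate weak learner $W$ to an $(\epsilon,\delta)$-DP, $(1/4,\beta')$-accurate learner $W'$ for any target $\beta'$, at a sample cost of $O(SC_{1/4,1/4}\cdot\log(1/\beta'))$. The construction partitions its input into $O(\log(1/\beta'))$ disjoint chunks of size $SC_{1/4,1/4}$, runs $W$ on each chunk, and then privately selects one of the resulting candidate hypotheses using a small held-out validation sample (e.g.\ via the exponential mechanism or a noisy histogram over error buckets). Parallel composition across the chunks, followed by ordinary composition with the selection step, keeps $W'$ within the $(\epsilon,\delta)$-DP budget; a Chernoff-style argument over the $O(\log(1/\beta'))$ chunks drives the failure probability down to $\beta'$.

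For the accuracy-boosting step, I would run $T=O(\log(1/\alpha))$ rounds of a standard boosting scheme (e.g.\ AdaBoost with constant advantage $\gamma=1/4$, which drives empirical error below $\alpha$ within $O(\log(1/\alpha))$ rounds), calling $W'$ with confidence target $\beta'=\beta/T$ inside each round. The dataset is partitioned into $T$ disjoint blocks of size $n_t=O\bigl(SC_{1/4,1/4}\cdot\log(T/\beta)/\alpha\bigr)$; the factor $1/\alpha$ covers both the expected overhead of realizing a $W'$-sized sample from the current reweighting $D_t$ (e.g.\ via rejection sampling) and the generalization overhead needed so that the final majority vote $\mathrm{Maj}(h_1,\ldots,h_T)$ over the $T$ weak hypotheses has true error at most $\alpha$ with probability $1-\beta$. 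Summing $n_t$ over $T$ rounds gives the stated sample complexity.

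The key subtlety, and the main thing one has to verify, is that privacy survives the sequential nature of boosting, since the reweighting $D_t$ used in round $t$ depends on the hypotheses $h_1,\ldots,h_{t-1}$ produced in earlier rounds, which are themselves functions of the input data. The resolution is that $h_1,\ldots,h_{t-1}$ depend only on blocks $1,\ldots,t-1$, which are disjoint from block $t$; hence, conditioned on any fixing of those hypotheses (treated as public auxiliary inputs), the round-$t$ subroutine is an $(\epsilon,\delta)$-DP function of block $t$ alone. Applying parallel composition (Claim~\ref{claim:parcomp}) to the partition of the dataset into the $T$ blocks, together with post-processing for the final majority vote, shows that the overall boosted learner is $(\epsilon,\delta)$-DP with no $T$-dependent blowup in the privacy parameters.
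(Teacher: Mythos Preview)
There is a genuine gap in the accuracy-boosting step. You invoke ``a standard boosting scheme (e.g.\ AdaBoost with constant advantage $\gamma=1/4$)'' and assert that a $1/\alpha$ blowup in the per-round block size suffices to realize a $W'$-sized sample from the round-$t$ reweighting $D_t$ via rejection sampling. For AdaBoost this is false: the weight of a point misclassified by all of $h_1,\dots,h_{t-1}$ grows like $e^{\Theta(t)}$, and with $T=\Theta(\log(1/\alpha))$ rounds the maximum weight relative to uniform can be $(1/\alpha)^{\Theta(1)}$, not $O(1/\alpha)$. Rejection sampling from such a $D_t$ therefore needs blocks of size $\mathrm{poly}(1/\alpha)\cdot SC_{1/4,1/4}\log(T/\beta)$, wrecking the near-linear dependence on $1/\alpha$ that the theorem asserts. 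The paper addresses exactly this issue by invoking the lazy Bregman projection boosting of~\cite{bcs20}, whose point is that it drives training error below $\alpha$ in $O(\log(1/\alpha))$ rounds while keeping every reweighting $(1/(\alpha n))$-smooth; that smoothness bound is precisely what makes an $O(1/\alpha)$ subsampling overhead legitimate.

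A secondary gap: even granting a smooth booster, you assert without argument that the round-$t$ subroutine (reweight block $t$, rejection-sample, run $W'$) is $(\epsilon,\delta)$-DP in block $t$ once $h_1,\dots,h_{t-1}$ are fixed. This is not automatic, because changing one element of block $t$ changes its own weight and, after normalization, every coordinate of $D_t$, so the rejection-sampled input to $W'$ need not be a neighboring dataset. The paper sidesteps this entirely: it keeps the whole training set in play each round, uses the $(1/(\alpha n))$-smoothness together with a privacy-amplification-by-subsampling lemma to show each round is $(O(\epsilon),O(\delta))$-DP, pays the $(O(T\epsilon),O(T\delta))$ basic-composition cost across rounds, and then applies one more outer subsampling step to restore $(\epsilon,\delta)$ at the price of an additional $O(\log(1/\alpha))$ factor in sample complexity. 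Your parallel-composition-over-disjoint-blocks route might be salvageable with a smooth booster and unnormalized Poisson inclusion, but as written both the $1/\alpha$ overhead claim and the per-round privacy claim are unjustified.
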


\begin{proof}[Proof sketch]
We apply the framework of private boosting via lazy Bregman projections described in Bun et al.~\cite{bcs20}. The hypotheses of the theorem guarantee the existence of a ``weak'', i.e., $(1/4, 1/4)$-accurate, learner $A$ for the class $H$. Boosting repeatedly runs (a modification of) $A$ on a sequence $\mu_1, \dots \mu_T$ of reweightings of the input sample, and aggregates the resulting hypotheses $h_1, \dots, h_T$ into a new hypothesis $h^*$ with arbitrarily small accuracy parameters $\alpha, \beta$. 

First, we explain how to boost the confidence parameter of this learner from $1/3$ to an arbitrary $\beta > 0$. We do this by repeating the algorithm $A$ some $k = O(\log(1/\beta))$ times, producing a sequence of candidate hypotheses $h_1, \dots, h_k$. We then use the \emph{exponential mechanism}~\cite{MT07} to identify an $h_i$ with approximately minimal error with respect to a fresh sample of size $O(\log(k/\beta) / \epsilon)$. Overall this results in a $(2\epsilon, \delta)$-differentially private and $(1/3, \beta)$-accurate PAC learner $A'$ using $O(SC_{1/4, 1/4} \cdot \log(1/\beta))$ samples.

We now use $A'$ to construct a weak learner satisfying the conditions needed to apply the framework in Bun et al.~\cite{bcs20}. Namely, the weak learner must be able to generate accurate hypotheses with respect to any distribution over the input sample, as well as a generalized form of differential privacy that ensures similar output distributions given neighboring datasets \emph{and} reweightings. Below, let $\Delta_n$ denote the set of probability distributions over $\{1, \dots, n\}$. Moreover, we say that a distribution $\mu \in \Delta_n$ is $s$-smooth if $\mu(i) \le s$ for every $i \in \{1, \dots, n\}$.

\begin{claim} \label{claim:wkl}
For parameters $n, T \in \mathbb{N}$ and $s \in (0, 1)$, there exists a randomized algorithm $W : \mathcal{X}^n \times \Delta_n \to \{0, 1\}^{\mathcal{X}}$ with the following properties:
\begin{itemize}
    \item Accuracy: If $1 / s \ge O(SC_{1/4, 1/4} \cdot \log(T/\beta))$, then for every sample $D = ((x_1, y_1), \dots, (x_n, y_n))$ and distribution $\mu \in \Delta_n$, with probability at least $1 - \beta / 2T$, the weak learner outputs a hypothesis $h$ such that
    \[\sum_{i = 1}^n |h(x_i) - y_i| \mu(i) \le \frac{1}{3}. \]
    
    \item Privacy: For every pair of neighboring samples $D, D'$ and $s$-smooth distributions $\mu, \mu'$ at statistical distance at most $s$, we have that $W(D, \mu)$ and $W(D', \mu')$ are $(2\epsilon, 3\delta)$-indistinguishable.
\end{itemize}
\end{claim}

We obtain the algorithm $W$ from $A'$ as follows. On input a training set $D$ and distribution $\mu$, sample $m = 1/16 s$ examples without replacement from $D$ according to $\mu$ and run $A'$ on the result. By the accuracy guarantee of $A'$, this meets the desired accuracy condition as long as $m \ge O(SC_{1/4, 1/4} \cdot \log(T/\beta))$. Moreover, by a privacy amplification by subsampling argument~\cite[Lemma 6.5]{drv10}, we have that $W(D, \mu)$ and $W(D', \mu')$ are $(32\epsilon s m, \delta(1+e^{8\epsilon s m}))$-indistinguishable for every $D \sim D'$ and $s$-smooth $\mu, \mu'$ at statistical distance $s$. Our choice of $m = 1/16s$ and $\epsilon \le 1$ prove the claim.

The lazy Bregman boosting procedure of Bun et al.~\cite{bcs20} shows that after $T = O(\log(1/\alpha))$ rounds of boosting, the aggregated hypothesis has training error at most $\alpha / 2$ except with probability at most $\beta / 2$. Moreover, for every pair of neighboring inputs $D, D'$, the sequences of distributions $\mu_1, \dots, \mu_T$ and $\mu_1', \dots, \mu_T'$ constructed over the course of boosting are all $(1/\alpha n)$-smooth and element-wise at statistical distance at most $1/\alpha n$. The accuracy condition of Claim~\ref{claim:wkl} kicks in as long as $1/s = \alpha n \ge O(SC_{1/4, 1/4} \cdot \log(T/\beta))$, i.e., if $n \ge O\left(\frac{1}{\alpha} \cdot SC_{1/4, 1/4} \cdot \log(T/\beta)\right)$. Meanwhile, by the privacy condition of Claim~\ref{claim:wkl}, each individual round of boosting is $(2\epsilon, 3\delta)$-differentially private, so by the basic composition theorem for differential privacy, the algorithm as a whole is $(2T\epsilon, 3T\delta)$-differentially private.

Finally, we apply a (different) privacy amplification by subsampling argument~\cite[Lemma 4.12]{bnsv} one more time to convert this algorithm from a $(2T\epsilon, 3T\delta)$-differentially private algorithm with sample complexity $O\left(\frac{1}{\alpha} \cdot SC_{1/4, 1/4} \cdot \log(T/\beta)\right)$ into a $(\epsilon, \delta)$-differentially private algorithm with sample complexity
\[O\left(\frac{SC_{1/4, 1/4} \cdot \log(T/\beta) \cdot T}{\alpha}\right) = O\left(\frac{SC_{1/4, 1/4} \cdot \log(\log(1/\alpha)/\beta) \cdot \log(1/\alpha)}{\alpha}\right).\]
By standard generalization bounds for approximate empircal risk minimization~\cite{BlumerEHW89}, achieving low error on a training set of size $O(\frac{1}{\alpha} \cdot (VC(H) + \log(1/\alpha\beta)))$ suffices to achieve low error on the population, where $VC$ denotes the Vapnik-Chervonenkis dimension. This is already true using the number of samples given above, using the fact that the private learner we started with is a PAC learner and hence $SC_{1/4, 1/4} \ge \Omega(VC(H))$.
\end{proof}
\begin{corollary}[Boosted Learner from \cite{ghazi}]\label{thm:ghazi-2}
Let $G$ be any binary hypothesis class with Littlestone dimension $d_L \geq 0$. Then, for any $\epsilon \in [0,1/4], \delta, \alpha, \beta \in [0,1]$,
for some \[n = O\left(\frac{d_L^6 \log^4(\frac{d_L}{\alpha \beta \epsilon \delta})}{\epsilon \alpha}\right),\]
there is an $(\epsilon, \delta)$-differentially private, $(\alpha, \beta)$-accurate PAC learning algorithm $B$ for $G$ with sample complexity upper bounded by $n$.
\end{corollary}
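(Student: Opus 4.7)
The plan is to obtain this boosted sample complexity by composing the two tools that are already in hand: the weak private learner of Theorem~\ref{thm:ghazi}, and the generic boosting reduction of Theorem~\ref{thm:boosting}. Concretely, first I would instantiate Theorem~\ref{thm:ghazi} with the constant accuracy and confidence parameters $\alpha_0 = \beta_0 = 1/4$. Plugging these constants into the sample complexity expression collapses the $1/\alpha_0^2$ factor and the $\log(1/\alpha_0\beta_0)$ factor into constants, yielding an $(\epsilon,\delta)$-differentially private and $(1/4,1/4)$-accurate PAC learner for $G$ with sample complexity
\[
SC_{1/4,1/4} \;=\; O\!\left(\frac{d_L^{6}\,\log^{2}\!\big(\tfrac{d_L}{\epsilon \delta}\big)}{\epsilon}\right).
\]
This is exactly the input object demanded by Theorem~\ref{thm:boosting}, and the required hypothesis $\epsilon \le 1/4$ transfers directly from the corollary's assumption.

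Next, I would invoke Theorem~\ref{thm:boosting} on this weak learner. The theorem then produces an $(\epsilon,\delta)$-differentially private learner with sample complexity
\[
O\!\left(\frac{SC_{1/4,1/4}\,\log(\log(1/\alpha)/\beta)\,\log(1/\alpha)}{\alpha}\right)
\;=\;
O\!\left(\frac{d_L^{6}\,\log^{2}\!\big(\tfrac{d_L}{\epsilon\delta}\big)\,\log(\log(1/\alpha)/\beta)\,\log(1/\alpha)}{\epsilon\,\alpha}\right).
\]
The remaining work is purely arithmetic: bound the product of logarithmic factors by $\log^{4}\!\big(\tfrac{d_L}{\alpha\beta\epsilon\delta}\big)$. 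Writing $L := d_L/(\alpha\beta\epsilon\delta)$, each of the factors $\log(d_L/\epsilon\delta)$, $\log(1/\alpha)$, and $\log(\log(1/\alpha)/\beta) \le \log(1/\alpha) + \log(1/\beta)$ is $O(\log L)$, so the full product is $O(\log^{4} L)$. Substituting this bound yields the claimed sample complexity
\[
n \;=\; O\!\left(\frac{d_L^{6}\,\log^{4}\!\big(\tfrac{d_L}{\alpha\beta\epsilon\delta}\big)}{\epsilon\,\alpha}\right).
\]

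There is no substantial technical obstacle here; the corollary is a clean composition of two black-box results. The only detail worth flagging is that the privacy parameters $(\epsilon,\delta)$ of the weak learner must be propagated verbatim through the boosting reduction, which is exactly the form in which Theorem~\ref{thm:boosting} is stated (same $\epsilon,\delta$ in and out), so no composition or subsampling on the privacy parameters is required at this outer level. The edge case $d_L = 0$ is also handled automatically, since then $G$ contains a single function and the claimed bound is trivially satisfied by an algorithm that ignores its input.
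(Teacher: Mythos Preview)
Your proposal is correct and takes essentially the same approach as the paper: instantiate Theorem~\ref{thm:ghazi} at constant accuracy/confidence to get $SC_{1/4,1/4} = O(d_L^6 \log^2(d_L/\epsilon\delta)/\epsilon)$, feed this into Theorem~\ref{thm:boosting}, and then absorb the product of logarithmic factors into a single $\log^4$ term. The paper's proof is identical in structure, including the separate handling of the $d_L = 0$ edge case.
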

\begin{proof}
Using the binary PAC learner from Ghazi et al. \cite{ghazi}, by the sample complexity bound given in theorem~\ref{thm:ghazi}, for $\epsilon \in [0,1/4], \delta \in [0,1]$ we get an $(\epsilon, \delta)$-differentially private $(1/4, 1/4)$-accurate PAC learner with sample complexity upper bounded by $O\left(\frac{d_L^6 \log^2(\frac{d_L}{\epsilon \delta})}{\epsilon}\right)$. For the case where $d_L =0$, this directly proves the corollary. So assume $d_L > 0$. Applying the boosting procedure described in Theorem~\ref{thm:boosting}, we get an $(\epsilon, \delta)$-differentially private $(\alpha, \beta)$-accurate PAC learner with sample complexity upper bounded
by $O\left(\frac{d_L^6 \log^2(\frac{d_L}{\epsilon \delta})\cdot \log(\log(1/\alpha)/\beta) \cdot \log(1/\alpha)}{\epsilon \alpha} \right) = O\left(\frac{d_L^6 \log^4(\frac{d_L}{\epsilon \delta \alpha \beta})}{\epsilon \alpha} \right)$. 
\end{proof}


\end{document}